\newcommand{\matindex}[1]{\mbox{\scriptsize#1}}
\newcommand{\indep }{\perp\!\!\!\perp}
\newcommand{\ci}[3]{#1 \indep #2 \left\vert #3\right.}
\newcommand{\nci}[3]{#1 \cancel{\mathrel{\text{\scalebox{1.07}{$\perp\mkern-10mu\perp$}}}} #2 \left\vert #3\right.}
\newcommand{\pcmciomega}{\text{PCMCI}_\Omega}
\def \Pa {\mathrm{Pa}}
\def \pind {\mathrm{pInd}}
\newcommand{\Xb}{\mathbf{X}}
\newcommand{\tub}{\tau_\text{ub}}
\newcommand{\oub}{\omega_\text{ub}}
\theoremstyle{plain}
\newtheorem{theorem}{Theorem}[section]
\newenvironment{proofsketch}{\begin{proof}[Proof sketch]}{\end{proof}}
\newtheorem{lemma}[theorem]{Lemma}
\theoremstyle{definition}
\newtheorem{definition}[theorem]{Definition}
\theoremstyle{remark}
\title{Causal Discovery in Semi-Stationary Time Series}
\author{%
  Shanyun Gao\\
  Purdue University\\
  \texttt{gao565@purdue.edu} \\
  \And
  Raghavendra Addanki \\
  Adobe Research\\
  \texttt{raddanki@adobe.com} \\
  \And
  Tong Yu \\
  Adobe Research\\
  \texttt{tyu@adobe.com} \\
  \And
   Ryan A. Rossi\\
  Adobe Research\\
  \texttt{ryrossi@adobe.com} \\
  \And
  Murat Kocaoglu \\
  Purdue University\\
  \texttt{mkocaoglu@purdue.edu} \\
}
\begin{document}

\maketitle

\begin{abstract}
Discovering causal relations from observational time series without making the stationary assumption is a significant challenge. In practice, this challenge is common in many areas, such as retail sales, transportation systems, and medical science. Here, we consider this problem for a class of non-stationary time series. The structural causal model (SCM) of this type of time series, called the \textit{semi-stationary} time series, exhibits that a finite number of different causal mechanisms occur sequentially and periodically across time. This model holds considerable practical utility because it can represent periodicity, including common occurrences such as seasonality and diurnal variation. We propose a constraint-based, non-parametric algorithm for discovering causal relations in this setting. The resulting algorithm, PCMCI$_{\Omega}$, can capture the alternating and recurring changes in the causal mechanisms and then identify the underlying causal graph with conditional independence (CI) tests. We show that this algorithm is sound in identifying causal relations on discrete-valued time series. We validate the algorithm with extensive experiments on continuous and discrete simulated data. We also apply our algorithm to a real-world climate dataset.
\end{abstract}

\section{Introduction}\label{sec:intro}
 In modern sciences, causal discovery aims to identify the collection of causal relations from observational data, as in ~\citet{pearl1980causality}, ~\citet{peters2017elements} and ~\citet{spirtes2000causation}.
 One of the most popular causal discovery approaches is the so-called constraint-based method. Constraint-based approaches assume that the probability distribution of variables is causal Markov and faithful to a directed acyclic graph called the causal graph. Given large enough data, they can then recover the corresponding Markov equivalence class by exploiting conditional independence relationships of the variables. See \citet{peters2017elements}. 
 There are many constraint-based algorithms such as PC and FCI algorithms~\citet{spirtes2000causation}. The standard assumption of these approaches is that data samples are independent and identically distributed, which makes it possible to perform CI tests. See \citet{bergsma2004testing}, \citet{zhang2012kernel} and \citet{shah2020hardness}.  
 
Recently, there have been numerous efforts to extend such constraint-based algorithms to accommodate time series data. For instance, PCMCI in \citet{runge2019detecting} and LPCMCI in \citet{gerhardus2020high} are the PC-based algorithms for time series. Inspired by FCI algorithms, approaches designed for time series include ANLSTM in \cite{chu2008search}, tsFCI in \cite{entner2010causal} and SVAR-FCI in \cite{malinsky2018causal}. This setup is relevant in several industrial applications since many data points have an associated time-point, such as root-cause analysis in \cite{ikram2022root}. Most of the existing causal discovery algorithms make the stationary assumption. See \cite{chu2008search}, \cite{hyvarinen2010estimation}, 
\cite{entner2010causal},
\cite{peters2013causal}, 
\cite{malinsky2018causal}, \cite{runge2019detecting}, \cite{pamfil2020dynotears}and \cite{assaad2022survey}.

Non-stationary temporal data makes causal discovery more challenging since the statistics are time-variant, and it is unreasonable to expect that the underlying causal structure is time-invariant. Identifying causal relations from non-stationary time series without imposing any restriction on the data is difficult. Here, we focus on a specific class of non-stationary time series, called the \emph{semi-stationary} time series, whose structural causal model (SCM) exhibits that a finite number of different causal mechanisms occur sequentially and periodically across time. One example is illustrated in Fig.\ref{full causal graph}, where the time series $\Xb^{1}$ has three different causal mechanisms across time, shown as red edges, green edges, and blue edges, respectively. Similarly, time series $\Xb^{2}$ has two alternative causal mechanisms. This setting holds considerable practical utility. Periodic nature is commonly observed in many real-world time series data. See \cite{han2002periodic}, \cite{nakamura2003sleep}, \cite{carskadon2005normal} and \cite{komarzynski2018relevance}. Here are a few additional intuitive examples: poor traffic conditions often coincide with commute time and weekends; household electricity consumption typically follows a pattern of being higher at night and lower during the daytime. Consequently, it is reasonable to expect periodic changes in the causal relations underlying this type of time series without assuming stationarity. Here, the constraint-based methods in \cite{chu2008search},
\cite{entner2010causal}, 
\cite{malinsky2018causal} and \cite{runge2019detecting}, designed for stationary time series, may fail. Given observational data with periodically changing causal structures, it is hard to apply CI tests directly.  
Most of the other algorithms designed for non-stationary time series rely heavily on model assumptions, as in \cite{gong2015discovering}, \cite{pamfil2020dynotears}, and \cite{huang2019causal}. 
These algorithms are discussed further in the related work.

In this paper, we propose an algorithm to address this problem, namely non-parametric causal discovery in time series data with semi-stationary SCMs. The key contributions of our work are:
\begin{itemize}
    \item We develop an algorithm to discover the causal structure from semi-stationary time series data where the underlying causal structures change periodically. Our algorithm systematically uses the PCMCI algorithm proposed for the stationary setting in \cite{runge2019detecting}.  The resulting algorithm is hence named PCMCI$_{\Omega}$ where $\Omega$ denotes periodicity.
    \item We validate our method with synthetic simulations on both continuous-valued and discrete-valued time series, showing that our method can correctly learn the periodicity and causal mechanism of the synthetic time series.
    \item We utilize our method in a real-world climate application.
    The result reveals the potential existence of periodicity in those time series, and the stationary assumption made by previous works could be relaxed in some practical situations.
    
\end{itemize}

\subsection{Related Work}

PCMCI has been applied in diverse domains to investigate atmospheric interactions in the biosphere, as demonstrated in \cite{krich2020estimating}, global wildfires as explored in \cite{qu2021causation}, water usage as studied in \cite{zou2022coupling}, ultra-processed food manufacturing as examined in \cite{menegozzo2020causal}, and causal feature selection as discussed in \cite{peterson2022comparison}, among other applications. See \cite{arvind2021identifying,gerhardus2020high,castri2022causal,castri2023enhancing}.
While PCMCI has achieved considerable success, it is not without its limitations. One notable assumption that can be challenged is the concept of \textit{causal stationary}, that is, causal relations are time-invariant. PCMCI exhibits robustness when applied to linear models with an added non-stationary trend. See also \citet{runge2019detecting}. However, there is an ongoing exploration to enhance its performance in a wider range of non-stationary settings.

Although not as extensively as the stationary case, causal discovery in non-stationary time series has been studied by some authors. However, many of those algorithms rely on parametric assumptions such as the vector autoregressive model in \cite{gong2015discovering} and \cite{malinsky2019learning}; linear and nonlinear state-space model in \cite{huang2019causal}. One non-parametric algorithm in the literature is CD-NOD proposed by \cite{huang2020causal}, which has been extended to recover time-varying instantaneous and lagged causal relationships. 
In very recent work, \cite{fujiwara2023causal} proposed an algorithm JIT-LiNGAM to obtain a local approximated linear causal model combining algorithm LiNGAM and JIT framework for non-linear and non-stationary data.
To the best of our knowledge, no other non-parametric approaches can discover causal relations underlying time series without assuming stationarity and can also allow for sudden changes in causal mechanisms. 

Our proposed approach does not directly enforce the stationary assumption on the time series. The SCM also integrates a finite set of causal mechanisms that exhibit periodic variations over time.

    \begin{figure*}[t!]
        \centering
        \includegraphics[width=1\linewidth]{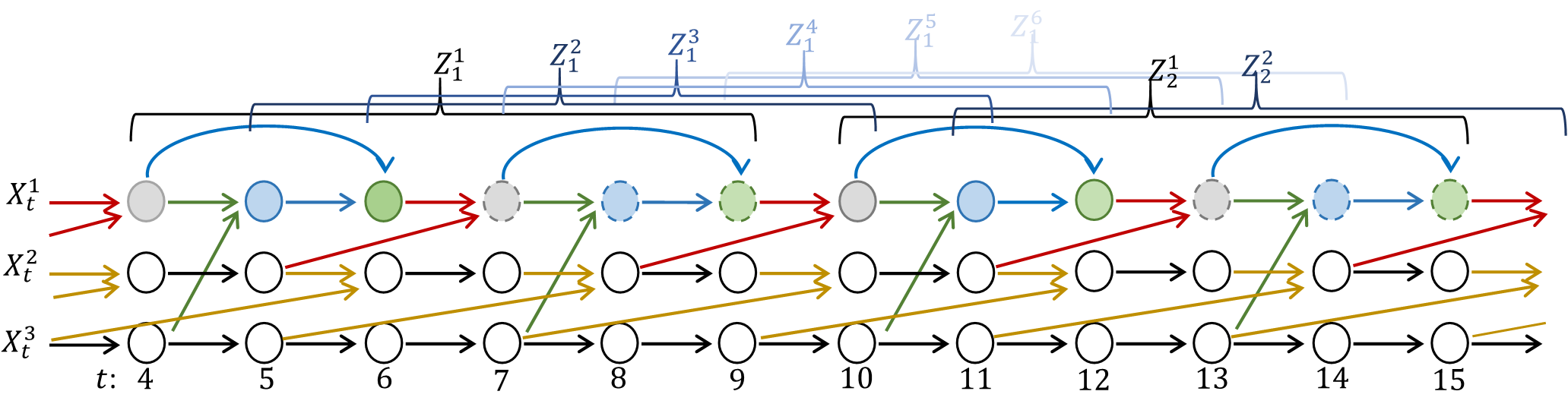}
        \caption{Partial causal graph for 3-variate time series $V=\{\Xb^{1}, \Xb^{2}, \Xb^{3}\}$ with a Semi-Stationary SCM where $\tau_{\max}=3$, $\omega_{1}=3$, $\omega_{2}=2$, $\omega_{3}=1$, $\Omega=6$ and $\delta=6$. The first 3(=$\tau_{\max}$) time slices $\{\Xb_{t}\}_{1\leq t \leq 3}$ are the starting points. The same color edges represent the same causal mechanism. E.g. for $\Xb^{1}$: there are 3 ($=\omega_{1}$) time partition subsets $\{\Pi^{1}_{k}\}_{1\leq k \leq 3}$. The time points $t$ of nodes $X^{1}_{t}$ sharing the same filling color are in the same time partition subsets. The time points $t$ of nodes $X^{1}_{t}$ sharing both the same filling color and the same outline shape are in the same homogenous time partition subsets (the definitions are in the supplementary material). There are 6 ($=\delta$) different Markov chains in this multivariate time series $V$, and the first element of these 6 Markov chains is shown as $\{Z^{q}_{1}\}_{1\leq q\leq 6}$ and are tinted with a gradient of blue hues. The superscript $q$ of $Z^{q}_{i}$ is the index of different Markov chains, whereas the subscript $i$ denotes the running index of that specific Markov chain. For instance, $Z^{1}_{1}$ and $Z^{1}_{2}$ denote the first two elements of the first Markov chain, while $Z^{2}_{1}$ and $Z^{2}_{2}$ denote the first two elements of the second Markov chain.}
        \label{full causal graph}
    \end{figure*}

\section{\text{PCMCI$_{\Omega}$}: Capturing Periodicity of the Causal Structure}

In this section, we formulate the problem of learning the causal graph on multi-variate time series data when the SCM exhibits periodicity in causal mechanisms. In section~\ref{subsec:Preliminaries}, we present the necessary definitions and provide an overview of the problem setting. In section~\ref{subsec:assumptions}, we introduce the required assumption.


\subsection{Preliminaries}\label{subsec:Preliminaries}

Let $\mathcal{G}(V, E)$ denote the underlying causal graph, and for each variable $X \in V$, we denote the set of all incoming neighbors as parents, denoted by $\Pa(X)$.

For any two variables $X, Y \in V$ and $S \subset V$, we denote the CI relation $X$ is independent of $Y$ conditioned on $S$, by $X \indep Y \mid S$.

For simplicity's sake, define sets: $[b]:=\{1,2,...,b\}$ and $[a,b]:=\{a,a+1,...,b\}$, where $a,b\in \mathbb{N}$.

In the time series setting, let $X^j_{t} \in \mathbb{R}^{1}$ denote the variable of $j$th time series at time $t$, $\Xb^{j}=\{X^j_{t}\}_{t\in [T]}\in\mathbb{R}^{T}$ denote a univariate time series and $\Xb_{t}=\{X^j_{t}\}_{j\in [n]}\in \mathbb{R}^{n}$ denote a slice of all variables at time point $t$. $V=\{\Xb^{j}\}_{j\in [n]}=\{\Xb_{t}\}_{t\in [T]}\in \mathbb{R}^{n\times T}$ denotes a $n$-variate time series. By default, we assume $n>1$ and hence $\Xb^{j}\subsetneq V$, and $p(V)\neq0$, where $p(.)$ denotes the probability or probability density.

\begin{definition}[\textit{Non-Stationary} SCM]\label{def:non-stationary SCM} A Non-Stationary Structural Causal Model (SCM) is a tuple  $\mathcal{M}=\langle V,\mathcal{F},\mathcal{E},\mathbb{P} \rangle$ where there exists a $\tau_{\max} \in\mathbb{N}^{+}$, defined as:
\begin{align}
\tau_{\max}&\coloneqq \arg\max_{\tau}\{\tau:X^{i}_{t-\tau}\in\Pa(X^{j}_{t}),i,j \in [n]\},\label{definition_of_tau_max}
\end{align}
such that with this $\tau_{\max}$, each variable $X^{j}_{t>\tau_{\max}}\in V$ is a deterministic function of its parent set $\Pa(X^{j}_{t>\tau_{\max}})\in V$ and an unobserved (exogenous) variable $\epsilon^{j}_{t>\tau_{\max}}\in \mathcal{E}$:
\begin{align}
X^{j}_{t}&=f_{j, t}(\Pa(X^{j}_{t}),\epsilon^{j}_{t}), \;j \in [n], t \in [\tau_{\max}+1, T], \label{definition_of_general_SCM}
\end{align}
and there exist at least two different time points $t_{0}, t_{1}\in [\tau_{\max}+1, T]$ satisfying
\begin{align}
f_{j, t_{0}}&\neq f_{j, t_{1}},\ \ \exists\;j \in [n], \{t_{0}, t_{1}\}\subset [\tau_{\max}+1, T]. \label{definition_of_non_stationary_SCM}
\end{align}
where $f_{j, t}, f_{j, t_{0}}, f_{j, t_{1}}\in \mathcal{F}$ and $\{\epsilon^{j}_{t}\}_{t\in[T]}$ are jointly independent with probability measure $\mathbb{P}$. $\tau_{\max}$ is the finite maximal lag in terms of the causal graph $\mathcal{G}$.
\end{definition}
\begin{definition}[\textit{Semi-Stationary} SCM]\label{def:Semi-Stationary}

   A Semi-Stationary SCM is a Non-Stationary SCM that additionally satisfies the following conditions.  $ \text{ For each }j \in [n],\text{ there exists an }\omega\in \mathbb{N}^{+}$such that:
   \begin{align}
   &a) \;f_{j, t} = f_{j, t+N\omega}, \label{a}\\
   &b) \;\Pa(X^{j}_{t+N\omega})=\{X^{i}_{s+N\omega}:X^{i}_{s}\in \Pa(X^{j}_{t}), i\in[n]\}, \label{b}\\
   &c)\;\epsilon^j_t, \epsilon^j_{t+N\omega}\text{ are i.i.d.}\label{c}
   \end{align}
    are satisfied for all $ t \in [\tau_{\max}+1, T], N \in \{N:N\in \mathbb{N}, t+N\omega\leq T\}$. This means that a finite number of causal mechanisms are repeated periodically for every univariate time series $\Xb^{j}$ in $V$. One example of this model is illustrated in Fig.\ref{full causal graph}. For $\Xb^{1}$ in Fig.\ref{full causal graph}, three causal mechanisms are reiterated periodically with $\omega_{1} = 3$, represented by red, green, and blue edges, respectively. 
    
    The minimum value that satisfies the above conditions for $\Xb^j$ is defined as the \emph{periodicity} of $\Xb^j$, denoted by $\omega_j$. Furthermore, for an $n$-variate time series $V$, $\Omega$ denotes the minimum periodicity across all time series $\Xb^j, j\in[n]$. The number of causal mechanisms occurring sequentially and periodically of univariate time series $\Xb^{j}$ is $\omega_{j}$, and that number of causal mechanisms of multivariate time series $V$ is $\Omega$. For $\Xb^{j}$, the causal mechanisms are associated with each variable $X^{j}_{t}$. However, for $V$, the causal mechanisms are related to each time slice vector $\Xb_{t}$ as a whole. The relationship between $\Omega$ and $\omega_{j}$ can be captured by:
\begin{align}
    \Omega = \text{LCM}(\{\omega_{j}: j \in [n]\})\label{lcm}
\end{align}
where $\text{LCM}(.)$ is an operation to find the least common multiple between any two or more numbers. Here, $\Omega$ is the smallest common multiple among $\{\omega_{1},...\omega_{n}\}$. In Fig.\ref{full causal graph}, $\Omega=\text{LCM}(3,2,1)=6$.
    

\end{definition}

\begin{definition}[\textit{Time Partition}]\label{def:Time Partition}

A time partition $\Pi^{j}(T)$ of a univariate time series $\Xb^{j}$ in a Semi-Stationary SCM with periodicity $\omega_{j}$ is a way of dividing all time points $t\in[T]$ into a collection of non-overlapping non-empty subsets $\{\Pi_{k}^{j}(T)\}_{k\in[\omega_{j}]}$ such that:

   \begin{align}
       \Pi^{j}_{k}(T) :=\{t: \tau_{\max}+1 \leq t\leq T, (t\bmod \omega_{j})+1=k\}.
   \end{align}
  where $\bmod$ denotes the modulo operation. For instance, $5 \bmod 3 =2$.
   
  We can observe that the variables in $\{X_{t}^{j}\}_{t\in{\Pi^{j}_{k}(T)}}$ share the same causal mechanism. Since the number of potentially different causal mechanisms of variables in $\Xb^{j}$ is $\omega_{j}$, the number of such time partition subsets is $\omega_{j}$.
  For simplicity, notations $\Pi^{j}$ and $\Pi^{j}_{k}$ are used instead of $\Pi^{j}(T)$ and $\Pi^{j}_{k}(T)$. In Fig.\ref{full causal graph}, $\Pi^{1}_{1}=\{4,7,10,13,..,4+3N,...\}$, $\Pi^{1}_{2}=\{5,8,11,14,..,5+3N,...\}$ and $\Pi^{1}_{3}=\{6,9,12,15,..,6+3N,...\}$ where $N\in\mathbb{N}^{+}$. The nodes $X^{1}_{t}$ are classified into their associated time partition subsets by the {matching colors.}
\end{definition}

\begin{definition}[\textit{Illusory Parent Sets}]\label{def:illusory parent sets}

   For a univariate time series $\Xb^{j}\in V$ with Semi-Stationary SCM having periodicity $\omega_{j}>1$, parent set index $\pind^j_{k\in [\omega_{j}]}$ is defined as:
   \begin{align}
   \pind^{j}_{k}\coloneqq\{(y_{i},\tau_{i})\}_{i\in [n']} \text{, given } \Pa(X^{j}_{t})=\{X^{y_1}_{t-\tau_1}, X^{y_2}_{t-\tau_2},...,X^{y_{n'}}_{t-\tau_{n'}}\},
   \; \forall t\in \Pi^{j}_{k}
   \end{align}
   where $n'=|\Pa(X^{j}_t))|$, $\tau_{i}$ is the time lag and $y_{i}$ is the variable index.
   Given $\pind^j_{k}$, \textit{Illusory Parent Sets} are defined as:
   \begin{align}
       \Pa_{k}(X^{j}_{t})=\bigl\{X^{y_{i}}_{t-\tau_i}:(y_{i},\tau_{i})\in \pind^{j}_{k}\bigr\}, \;\forall k \in \{k:t \notin \Pi^{j}_{k}\}\label{fake_parent}
   \end{align}
Put simply, the illusory parent sets of $X^{j}_{t}$ are the time-shift version of the parent set of other variables in $\Xb^{j}$ that have a different causal mechanism from $X^{j}_{t}$. Note that the illusory parent sets are constructed specifically in the Semi-Stationary SCM. For stationary SCM, there is no illusory parent set needed. 
To maintain consistency in notation, for time points $t\in \Pi_{k}^{j}$, the notation $\Pa_k(X_{t}^{j})$ can also be extended to encompass the true parent set of $X^{j}_{t}$:
\begin{align}
\Pa_k(X_{t}^{j})&:=\Pa(X_{t}^{j}),\;\forall t\in \Pi_{k}^{j}\label{true_parent}
\end{align}
By doing so, $\Pa(X_{t}^{j})\subset \cup_{k\in [\omega_{j}]}\Pa_k(X_{t}^{j})$. In Fig.\ref{full causal graph}, by observing $\Pa(X_{7}^{1})$, we have $\pind^{1}_{1}=\{(1,1),(2,2)\}$; by observing $\Pa(X_{8}^{1})$, we have $\pind^{1}_{2}=\{(1,1),(3,1)\}$ and finally by observing $\Pa(X_{9}^{1})$, $\pind^{1}_{3}=\{(1,1),(1,2)\}$. Based on those indexes, $Pa_{1}(X_{7}^{1})=\{X^{1}_{7-1},X^{2}_{7-2}\}=\{X^{1}_{6},X^{2}_{5}\}$, $Pa_{2}(X_{7}^{1})=\{X^{1}_{7-1},X^{3}_{7-1}\}=\{X^{1}_{6},X^{3}_{6}\}$ and $Pa_{3}(X_{7}^{1})=\{X^{1}_{7-1},X^{1}_{7-2}\}=\{X^{1}_{6},X^{1}_{5}\}$. The first one is the true parent set of $X_{7}^{1}$ and the latter two are the illusory parent sets. The order of those parent sets is not important. 
\end{definition}

At last, we need to further define a series of Markov chains that are associated tightly with \textit{Semi-Stationary} SCM. The presence and characteristics of these Markov chains are thoroughly examined in the supplementary materials. The motivation behind creating such Markov chains is to introduce assumptions on them rather than directly on the original data $V$.
\begin{definition}[\textit{Time Series as a Markov Chain}]\label{Time Series as a Markov Chain}
    For time series $V$ with \textit{Semi-Stationary} SCM, there are (potentially) $\delta$ different Markov chains $\{Z^{q}_{n}\}_{n\in\mathcal{N}},q\in[\delta]$:
\begin{align*}
Z^{q}_{n}&=\{\Xb_{\tau_{\max}+q+(n-1)\delta},\Xb_{\tau_{\max}+q+1+(n-1)\delta},...,\Xb_{\tau_{\max}+q-1+n\delta}\},
\end{align*}
where $\mathcal{N} \coloneqq \{n\in \mathbb{N}^{+}: \tau_{\max}+q-1+n\delta\leq T\}$, $\delta=\lceil \frac{\tau_{\max}+1}{\Omega} \rceil\Omega$. Note that in $\{Z^{q}_{n}\}$, $q$ is used to indicate a specific Markov chain, while $n$ serves as the running index for that particular Markov chain. Such a Markov chain $\{Z^{q}_{n}\}, q\in[\delta]$ exists as long as $\Pa(Z^{q}_{n})\subset Z^{q}_{n}\cup Z^{q}_{n-1}$ for all $n$. This is a finite state Markov Chain if all time series in $V$ are discrete-valued time series. The state space of $\{Z^{q}_{n}\}$ is the set containing all possible realization of $\{\Xb_{\tau_{\max}+q+(i-1)+(n-1)\delta}\}_{i\in[\delta],n\in \mathbb{N}}$. The transition probabilities between the states are the product of associated causal mechanisms based on Assumption \textbf{A2}, which is elaborated by an example in section C.1 (Eq.(10)-(11)) of the supplementary material.
\end{definition}

\subsection{Assumptions for PCMCI$_{\Omega}$}\label{subsec:assumptions}

\begin{enumerate}[leftmargin=*]
\item[\bf A1.] \textbf{Sufficiency}: There are no unobserved confounders.
\item[\bf A2.] \textbf{Causal Markov Condition}: Each variable $X$ is independent of all its non-descendants, given its parents $\Pa(X)$ in $\mathcal{G}$.
\item[\bf A3.] \textbf{Faithfulness Condition (\cite{pearl1980causality})}: Let $P$ be a probability distribution generated by $\mathcal{G}$. $\langle \mathcal{G}, P \rangle$ satisfies the Faithfulness Condition if and only if every conditional independence relation true in $P$ is entailed by the Causal Markov Condition applied to $\mathcal{G}$.
\item[\bf A4.] \textbf{No Contemporaneous Causal Effects}: Edges between variables at the same time are not allowed. 
\item[\bf A5.] \textbf{Temporal Priority}: Causal relations that point from the future to the past are not permitted.
\item[\bf A6.]\label{assum:A6} 
\textbf{Hard Mechanism Change}:

If at time points $t_{1}$ and $t_{2}$, the causal mechanisms of $X^{j}_{t_{1}}$ and $X^{j}_{t_{2}}$ are different, then their corresponding parent sets can not be transformed to each other by time shifts:
\[ f_{j, t_1} \neq f_{j, t_2} \Rightarrow \Pa(X^{j}_{t_2})\neq\{X^{i}_{s+(t_2-t_1)}:X^{i}_{s}\in \Pa(X^{j}_{t_1}), i\in[n]\}.\]
\end{enumerate}
\begin{enumerate}[leftmargin=*]
\item[\bf A7.] \label{assum:A7} \textbf{Irreducible and Aperiodic Markov Chain}:
The Markov chains $\{Z_n\}$ of $V$ are assumed to be irreducible (\citet{serfozo2009basics}): for all states $i$ and $j$ of $\{Z_{n}\}$, $\exists n $ so that
\begin{align}
   p_{ij}^{(n)}\coloneqq  p(Z_{n+1}=j|Z_{1}=i)>0
\end{align}
and aperiodic(\citet{karlin2014first}): for every state $i$ of $\{Z_{n}\}$, $d(i)=1$, where the period $d(i)$ of the state $i$ is the greatest common divisor of all {integers} $n$ for which $p_{ii}^{(n)}>0$.
\end{enumerate}
Assumptions \textbf{A1-A5} are conventional and commonly employed in causal discovery methods for time series data. On the other hand, our approach requires additional Assumptions \textbf{A6-A7} to be in place. To clarify, \textbf{A6} is essential because our method may encounter challenges in distinguishing distinct causal mechanisms for variables in $\{X^{j}_{n}\}_{n\in[T]}$ if they share identical parent sets after time shifts. As for \textbf{A7}, it serves a crucial role in establishing the soundness of our algorithm.
\section{\text{PCMCI$_{\Omega}$} Algorithm}\label{subsec:algorithm}

\begin{algorithm}[t!]
\caption{PCMCI$_{\Omega}$} 
\label{alg:pcmciomega_brief}
\begin{algorithmic}[1]
\State \textbf{Input:} A $n$-variate time series $V = (\Xb^{1},\Xb^{2},\Xb^{3},...,\Xb^{n})$, periodicity upper bound $\omega_\text{ub}$, time lag upper bound $\tau_\text{ub}$. By default, we assume $\tub$ and $\omega_\text{ub}$ are larger than their true value.
\State A superset of parent set is obtained using PCMCI with $\tub$ and denote it by $\widehat{S\Pa}(X^j_t) \ \forall j, t$.
\For{$\Xb^j$ where $j \in [n]$} 
\For{a guess $\omega \in [\omega_{ub}]$ of $\omega_j$} 
\State Let $\widehat{\Pi}^j := \{ \widehat{\Pi}^j_k | k \in [\omega] \}$ where $ \widehat{\Pi}^j_k = \{ 2\tub+k, 2\tub+\omega+k, 2\tub+2\omega+k, \cdots\}$.
\For{$k \in [\omega]$}
\State Initialize the parent set for $X^j_t, t\in\{t:t \ge 2\tub, t\in\widehat{\Pi}^j_k\}$ (with guess $\omega$) denoted by $\widehat{\Pa}_{\omega}(X^j_t) \leftarrow \widehat{S\Pa}(X^j_t)$.
\State Consider $X^i_{t-\tau}\in \widehat{\Pa}_{\omega}(X^j_t)$. Remove $X^i_{t-\tau}$ from $\widehat{\Pa}_{\omega}(X^j_t)$ if $X^{i}_{t - \tau} \indep X^j_t \mid  \left(\widehat{S\Pa}(X^j_t) \cup  \widehat{S\Pa}(X^i_{t-\tau})\right) \setminus X^i_{t-\tau}$ using a CI Test with samples $t \in\{t:t \ge 2\tub, t\in\widehat{\Pi}^j_k\}$. 
\State {Store $\widehat{\Pa}_{\omega}(X^j_t)$ for $X^j_t, t\in\{t:t \ge 2\tub, t\in\widehat{\Pi}^j_k\}$.}
\EndFor
\EndFor
\State $\widehat{\omega}_j \leftarrow \arg\min_{\omega \in [\oub]}\max_{k\in[\omega]}|\widehat{\Pa}_{\omega}(X^{j}_{t\in \widehat{\Pi}^j_k})|$.
\State Set $\widehat{\Pa}(X^j_t) \leftarrow  \widehat{\Pa}_{\hat \omega_j}(X^j_t)$ for $X^j_t, t\in\{t:t \ge 2\tub\}$.
\EndFor
\State \Return{$\hat{\omega}_{j}$ and $\widehat{\Pa}(X^{j}_t)  \ \forall j\in [n], t\ge 2\tub$}.
\end{algorithmic} 
\end{algorithm}

In this section, we propose an algorithm called $\pcmciomega$, and in section~\ref{subsec:Theoretical Guarantees}, we present a theorem demonstrating the soundness of $\pcmciomega$ and its ability to recover the causal graph. Our algorithm $\pcmciomega$ builds on the Algorithm PCMCI in \citet{runge2019detecting}. Additional details about PCMCI are provided in the supplementary material.

\noindent \textbf{Overview of Algorithm}~\ref{alg:pcmciomega_brief} $\pcmciomega$. 
We assume that the periodicity and time lag are upper bounded by $\omega_\text{ub}$ and $\tau_\text{ub}$ respectively. Using PCMCI~\cite{runge2019detecting}, we obtain a superset of parents for every variable $X^j_t$ denoted by $\widehat{S\Pa}(X^{j}_t)$ (line 2). Our goal is to identify the correct set of parents along with its periodicity for every variable in $V$. For a variable $X^j_t$, we guess its periodicity $\omega$ by iterating over all possible values in $[\omega_\text{ub}]$. Next, we construct time partition subsets $\widehat{\Pi}^{j}_{k},\;k \in [\omega]$ based on the guess of periodicity $\omega$. In each time partition subset, we maintain a parent set, denoted by $\widehat{\Pa}_{\omega}(X^j_t)$, initializing it with the superset $\widehat{S\Pa}(X^j_t)$. Then we test the causal relations between $X^i_{t-\tau} \in \widehat{\Pa}_{\omega}(X^j_t)$ and $X^{j}_{t}$ using a CI test on the sample $t \in \widehat{\Pi}^{j}_{k}$ (lines 6-10). 

For each guess $\omega$, every variable in $\Xb^{j}$ should have its estimated parent set (line 9), and there are total $\omega$ potentially different parent set index $\pind^{j}_{k\in[\omega]}$ in $\Xb^{j}$. We return an estimate $\hat{\omega}_{j}$ that maximizes the sparsity of the causal graph (Lemma~\ref{lem:periodicity_opt}). Therefore, we select the value of $\omega \in [\oub]$ that minimizes the maximum value of $|\widehat{\Pa}_{\omega}(X^{j}_{t})|, \; t \in [T]$ as the estimator of $\omega_{j}$ (line 12). 

\subsection{Theoretical Guarantees}\label{subsec:Theoretical Guarantees}
Our main theorem shows that $\pcmciomega$ recovers the true causal graph on discrete data. There are three important lemmas. We provide all the detailed proof in the supplementary material.
\begin{theorem}
 Let $\hat{\mathcal{G}}$ be the estimated graph using the Algorithm $\pcmciomega$. Under assumptions \textbf{A1-A7} and with an oracle (infinite sample size limit), we have that:
\begin{align}
\hat{\mathcal{G}}=\mathcal{G}
\end{align}
almost surely.
\end{theorem}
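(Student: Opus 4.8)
The plan is to show that the estimated graph $\hat{\mathcal{G}}$ equals the true graph $\mathcal{G}$ by establishing three things in sequence: that the PCMCI pre-processing step (line 2) returns a valid \emph{superset} of the true parents for every variable; that, conditioned on a correct guess of the periodicity $\omega_j$, the CI-test pruning in lines 6--10 removes exactly the spurious edges and retains exactly the true parents; and finally that the sparsity criterion in line 12 selects the correct periodicity $\hat{\omega}_j = \omega_j$. Combining these, the per-variable parent sets $\widehat{\Pa}(X^j_t)$ coincide with $\Pa(X^j_t)$ for every $j$ and every $t \ge 2\tub$, which is precisely $\hat{\mathcal{G}} = \mathcal{G}$.

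First I would argue correctness of the superset step. Since $\tub$ upper-bounds the true maximal lag $\tau_{\max}$, PCMCI run with $\tub$ examines every candidate lagged parent, so under the Causal Markov Condition (\textbf{A2}), Faithfulness (\textbf{A3}), and the acyclicity guaranteed by Temporal Priority and no contemporaneous effects (\textbf{A4}--\textbf{A5}), the returned $\widehat{S\Pa}(X^j_t)$ contains all true parents; i.e.\ $\Pa(X^j_t) \subseteq \widehat{S\Pa}(X^j_t)$. The key subtlety here is that PCMCI assumes stationarity, so I must verify that applying it to the non-stationary $V$ still yields a superset rather than missing edges --- intuitively, pooling across time can only \emph{add} spurious dependencies, never destroy a genuine one that is faithfully present in some time slice, so the superset property survives. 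Next, for the pruning step, I would fix a variable $\Xb^j$ and suppose the guessed periodicity equals the truth, $\omega = \omega_j$. Then by Definition~\ref{def:Time Partition} the estimated partition subsets $\widehat{\Pi}^j_k$ align with the true ones, so within each $\widehat{\Pi}^j_k$ all sampled variables $X^j_t$ genuinely share one causal mechanism. The CI test in line 8 then operates on samples that are identically distributed; by Faithfulness the test $X^i_{t-\tau} \indep X^j_t \mid (\widehat{S\Pa}(X^j_t)\cup\widehat{S\Pa}(X^i_{t-\tau}))\setminus X^i_{t-\tau}$ correctly removes $X^i_{t-\tau}$ iff it is not a true parent within that mechanism, leaving $\widehat{\Pa}_{\omega_j}(X^j_t) = \Pa(X^j_t)$.

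The main obstacle, which I expect to consume most of the effort, is justifying that the CI tests are valid in the oracle limit \emph{despite} the samples within a partition subset being temporally dependent rather than i.i.d. This is exactly where Assumption \textbf{A7} and the Markov-chain reformulation of Definition~\ref{Time Series as a Markov Chain} enter: I would invoke irreducibility and aperiodicity to argue that the finite-state Markov chain $\{Z^q_n\}$ admits a unique stationary distribution and is ergodic, so that empirical conditional frequencies over the samples $t \in \widehat{\Pi}^j_k$ converge to the true conditional probabilities as the time horizon $T \to \infty$. This ergodic convergence is what legitimizes the ``oracle'' CI tests on dependent data and is the technically delicate heart of the soundness claim. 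Finally, for the periodicity selection I would prove the sparsity lemma (Lemma~\ref{lem:periodicity_opt}): any \emph{incorrect} guess $\omega \ne \omega_j$ causes the partition to mix variables with genuinely distinct mechanisms, so by the Hard Mechanism Change assumption \textbf{A6} their parent sets cannot be reconciled by time shifts, and the retained parent set within at least one subset is strictly larger than $|\Pa(X^j_t)|$; hence the $\arg\min$--$\max$ criterion in line 12 is uniquely minimized at the true $\omega_j$. Assembling the three pieces yields $\widehat{\Pa}(X^j_t) = \Pa(X^j_t)$ for all $j,t$ almost surely, completing the proof.
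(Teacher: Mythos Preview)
Your high-level decomposition matches the paper's: superset from PCMCI, pruning given the correct periodicity, ergodicity for oracle validity, and sparsity for periodicity selection. However, there is a genuine gap in the step you treat as intuitive. You write that ``pooling across time can only add spurious dependencies, never destroy a genuine one that is faithfully present in some time slice.'' This is the crux of the whole argument and it is \emph{not} obvious: a mixture of conditional distributions can in principle exhibit conditional independence even when each component does not (cancellation across mechanisms). The paper's Lemma~\ref{lem:conditional dis} proves exactly this non-cancellation claim --- that $p(X^j_t \mid \cup_k \Pa_k(X^j_t)) \neq p(X^j_t \mid \cup_k \Pa_k(X^j_t)\setminus y)$ for every $y$ in the union --- by first using \textbf{A7} to show the empirical mixture converges to a well-defined convex combination of the mechanism-wise conditionals, then expanding the difference as a polynomial in the (free) conditional probabilities of the component mechanisms and arguing the polynomial is not identically zero, hence vanishes only on a measure-zero set. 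That measure-zero step is where the ``almost surely'' in the theorem comes from, and it is the technical heart of the proof; your sketch does not supply it.

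The same lemma is what drives your periodicity step: to conclude that a wrong $\omega$ yields a strictly larger retained parent set, you need that CI tests on mixed-mechanism samples cannot drop any element of $\cup_k \Pa_k(X^j_t)$, which is again Lemma~\ref{lem:conditional dis} (combined with \textbf{A6} to guarantee the union is strictly larger than each single $\Pa(X^j_t)$). You also slightly mis-locate the role of \textbf{A7}: you invoke it only for the pruning step with the \emph{correct} $\omega$, but there the samples share one mechanism and ergodicity merely delivers the true conditional; the hard use of \textbf{A7} is upstream, in defining and analyzing the limiting mixture for the PCMCI-superset and wrong-$\omega$ cases. Finally, a minor slip: sharing a causal mechanism does not make the samples within $\widehat{\Pi}^j_k$ identically distributed --- their marginals can differ --- which is why the paper introduces a finer ``homogeneous'' time partition before applying the ergodic theorem.
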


Lemma~\ref{lem:conditional dis} and Lemma~\ref{lem:denser graph} jointly state that if CI tests are conducted on samples generated by different causal mechanisms, the obtained parent sets $\widehat{S\Pa}(X^{j}_{t})$ should be the superset of the union of the true and illusory parent sets $\cup_{k}\Pa_{k}(X^{j}_{t})$. That is, the estimated graph should be denser than the correct graph. The true parent set can then be obtained by directly testing the independent relations between the target variable $X^{j}_{t}$ and the variables in $\widehat{S\Pa}(X^{j}_{t})$, assuming a consistent CI test. Note that the CI tests in our algorithm are assumed to be consistent given i.i.d. samples. We do not assume the consistency of CI tests with respect to semi-stationary data. Therefore, any CI tests that maintain consistency with i.i.d. samples can be seamlessly integrated into our algorithm.
\begin{lemma}\label{lem:conditional dis}
Denote that $\{\Pa_k(X^{j}_t)\}_{k\in[\omega_{j}]}$ contain the true and illusory parent sets, where $\omega_{j}$ is the true periodicity of $\Xb^{j}$. For any random variable $X_{t}^{j}$ with large enough $t$, under assumptions \textbf{A1-A7} and with an oracle (infinite sample size limit), we have:\\
    \begin{align}
    p\bigg(p(X^{j}_{t}|\cup_{k=1}^{\omega_{j}}\Pa_{k}(X^{j}_{t})) \neq p(X^{j}_{t}|\cup_{k=1}^{\omega_{j}}\Pa_{k}(X^{j}_{t})\setminus y) \bigg)=1,\;\forall y \in \cup_{k=1}^{\omega_{j}}\Pa_{k}(X^{j}_{t}) \label{inequality claim}
    \end{align}
\end{lemma}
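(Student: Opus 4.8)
The plan is to show that each member of $\cup_{k=1}^{\omega_j}\Pa_k(X^j_t)$ is a non-removable dependency of $X^j_t$ in the \emph{pooled} law that PCMCI operates on, namely the law obtained by regarding time points from all $\omega_j$ distinct mechanisms as identically distributed samples (this is the relevant object, since the CI tests producing $\widehat{S\Pa}$ are run across mechanisms). First I would verify that this pooled conditional is well defined with full support. Using the true periodicity $\omega_j$, Definition~\ref{def:Semi-Stationary} guarantees that the time points in each class $\Pi^j_k$ share one mechanism, so the process decomposes into $\omega_j$ stationary regimes; by \textbf{A4}--\textbf{A5} the neighborhood consists of strictly earlier-time variables, and by \textbf{A7} the governing Markov chain of Definition~\ref{Time Series as a Markov Chain} is irreducible and aperiodic, hence (for discrete data, a finite-state chain) admits a unique stationary distribution that is strictly positive on every state. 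The hypothesis ``for large enough $t$'' places $X^j_t$ in this stationary regime, so every configuration of $\cup_k\Pa_k(X^j_t)$ occurs with positive probability.

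Next I would fix $y\in\cup_{k=1}^{\omega_j}\Pa_k(X^j_t)$ and isolate a mechanism under which $y$ is a genuine parent. By the definition of $\pind^j_k$ together with \eqref{fake_parent}--\eqref{true_parent}, every element of the union is a true parent of $X^j_t$ for at least one index $k_y$: if $y$ is a true parent then $k_y=(t\bmod\omega_j)+1$, and if $y$ is illusory then $k_y$ is the class whose parent index $\pind^j_{k_y}$ supplied it. Under mechanism $k_y$ the structural equation reads $X^j_t=f_{j,k_y}\bigl(\Pa_{k_y}(X^j_t),\epsilon^j_t\bigr)$ with $y\in\Pa_{k_y}(X^j_t)$, i.e.\ there is a \emph{direct} edge $y\to X^j_t$ in that mechanism's graph. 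A direct edge is d-connecting under any conditioning set excluding its endpoints, so by \textbf{A3} (and \textbf{A1}, ruling out confounding) we obtain $X^j_t\not\indep y\mid(\Pa_{k_y}(X^j_t)\setminus y)$ within mechanism $k_y$; equivalently, deleting $y$ strictly alters $p_{k_y}(X^j_t\mid\Pa_{k_y}(X^j_t))$.

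The substance of the argument is then to \emph{lift} this per-mechanism dependence to the pooled mixture. I would express the pooled conditional as a latent mixture over the mechanism index $K$,
\[
p\bigl(X^j_t \mid \textstyle\cup_k \Pa_k\bigr)=\sum_{k=1}^{\omega_j} P\bigl(K=k \mid \textstyle\cup_k\Pa_k\bigr)\, p_k\bigl(X^j_t\mid \Pa_k\bigr),
\]
and compare it with the same expression after deleting $y$, in which the $k_y$-component must additionally average over $y$. Since \textbf{A7} forces the $k_y$-component to carry strictly positive weight with full support, and \textbf{A6} ensures the $\omega_j$ mechanisms are not time shifts of one another (so the union is irredundant and the components are genuinely distinct), the $y$-dependence carried by the $k_y$-component cannot be exactly cancelled by the remaining components except on a measure-zero set of neighborhood values. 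This yields $p(X^j_t\mid\cup_k\Pa_k)\neq p(X^j_t\mid\cup_k\Pa_k\setminus y)$ almost surely, which is \eqref{inequality claim}.

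I expect the main obstacle to be exactly this lifting step. Per-mechanism faithfulness speaks about each individual causal graph, whereas the quantity in the lemma lives in the mixture, where \emph{both} the component conditionals and the latent weights $P(K=k\mid\cdot)$ react to the deletion of $y$; ruling out an accidental cancellation between these two effects, and upgrading ``differs on a positive-probability set'' to the pointwise ``almost surely'' demanded by \eqref{inequality claim} (for discrete data, positivity on every state of the finite chain turns this into a statement that must hold at every neighborhood configuration), is where the full-support consequence of \textbf{A7} and the distinctness from \textbf{A6} have to be combined with \textbf{A3}. The Markov-chain construction of Definition~\ref{Time Series as a Markov Chain} is the natural device for making the pooled stationary law precise and for bookkeeping the latent index $K$.
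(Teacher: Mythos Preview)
Your mixture representation and the identification of the ``lifting step'' as the crux are both on target, but the proposal stops precisely where the work begins, and the tool you reach for---\textbf{A3}---cannot close the gap. Faithfulness in \textbf{A3} constrains conditional independencies in the \emph{true} distribution $P$ generated by $\mathcal{G}$; the object $p(X^j_t\mid\cup_k\Pa_k)$ in the lemma is, by the paper's own definition following Eq.~\eqref{indicators}, the limit of an empirical conditional pooled across \emph{all} mechanisms, i.e.\ an artificial mixture that is not a conditional in $P$ and is not faithful to any single DAG. So per-mechanism faithfulness gives you that the $k_y$-component depends on $y$, but says nothing about whether the reweighting of the \emph{other} components (whose mixture weights $P(K=k\mid\cdot)$ also depend on $y$) cancels that dependence exactly. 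Neither \textbf{A6} nor \textbf{A7} excludes such cancellation for a fixed SCM: \textbf{A6} only forces the parent \emph{sets} to differ, and \textbf{A7} only supplies positive weights.

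Relatedly, you read the outer ``$p(\cdot)=1$'' as a statement over neighborhood configurations; in the paper it is a statement over the SCM parameters themselves. The actual proof (Appendix, Lemma~D.1) writes the difference $p(x^j_t\mid\cup_h\Pa_h)-p(x^j_t\mid\cup_h\Pa_h\setminus y)$, clears denominators, and uses \textbf{A2} to expand every joint appearing in the $\alpha_k$'s and $\beta_k$'s as a product of the elementary conditionals $p(x^{j'}_{t'}\mid\Pa(x^{j'}_{t'}))$. After enforcing the normalization constraints, these conditionals are treated as free, independent parameters, and the difference becomes a polynomial in them with $\pm 1$ coefficients that is \emph{not identically zero} (distinct homogeneous-time-partition indices give distinct monomials). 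The conclusion ``probability one'' then follows because the zero set of a non-trivial polynomial has Lebesgue measure zero in parameter space. Your outline would need to replace the faithfulness-based lifting by this polynomial/genericity argument; without it, the cancellation you flag as the ``main obstacle'' is not ruled out.
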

Here, $p(X^{j}_{t}|\cup_{k=1}^{\omega_{j}}\Pa_{k}(X^{j}_{t}))=\lim_{T\rightarrow\infty}\hat{p}(X^{j}_{t}|\cup_{k=1}^{\omega_{j}}\Pa_{k}(X^{j}_{t}))$ where $\hat{p}(X^{j}_{t}|\cup_{k=1}^{\omega_{j}}\Pa_{k}(X^{j}_{t}))$ is an estimated conditional distribution using all samples $t\in [\tau_{\max}+1,T]$:
\begin{align}
\hat{p}(X^{j}_{t}|\cup_{k=1}^{\omega_{j}}\Pa_{k}(X^{j}_{t}))&=\frac{\sum_{t}\mathbbm{1}(X^{j}_{t},\cup_{k=1}^{\omega_{j}}\Pa_{k}(X^{j}_{t}))}{\sum_{t}\mathbbm{1}(\cup_{k=1}^{\omega_{j}}\Pa_{k}(X^{j}_{t}))}\label{indicators}.
\end{align}
\begin{proofsketch}
    We argue that the estimated conditional distribution in Eq.\eqref{indicators} can be written as a linear combination of $\hat{p}(X^{j}_{t}|\Pa(X^{j}_{t}))$ where $t\in\Pi^{j}_{k},k\in[\omega_{j}]$, i.e., as a mixture of conditional distributions. The coefficients in the linear function, say $\alpha_{k},k\in[\omega_j]$, can be further decomposed based on a finer time partition called the \emph{homogenous time partition}, which consists of subsets constructed according to the Markov chains $\{Z^{q}_{n}\}_{q\in[\delta]}$ corresponding to the time series. Based on Assumption \textbf{A7}, the Markov chains are stationary and ergodic. Therefore, after sufficiently large time steps, the distribution of $\{Z^{q}_{n}\}_{q\in[\delta]}$ will be invariant across $n$ as it achieves unique equilibrium. With this type of stationary sample, we can express $\alpha_{k}$ by joint distributions instead of the indicators. Then, we can complete the proof of our inequality claim in Eq.\eqref{inequality claim} using Assumption \textbf{A2} and Bayes theorem.

\end{proofsketch}

\begin{lemma}\label{lem:denser graph}
    Let $\widehat{S\Pa}(\Xb^{j}_t)$ denote the estimated superset of parent set for $\Xb^{j} \in V$ obtained from the Algorithm~\ref{alg:pcmciomega_brief} (line 2). $\{\Pa_k(X^{j}_t)\}_{k\in[\omega_{j}]}$ contain the true and illusory parent sets, where $\omega_{j}$ is the true periodicity of $\Xb^{j}$. Under assumptions \textbf{A1-A7} and with an oracle (infinite sample size limit), we have that:
    \begin{align*}
        \cup_{k=1}^{\omega_{j}}\Pa_{k}(X^{j}_{t})\subseteq \widehat{S\Pa}(X^{j}_t),
        \;\forall t\in [\tau_{\max}+1,T]
    \end{align*}
    almost surely.
\end{lemma}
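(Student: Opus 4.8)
The plan is to characterize the output of line~2 through PCMCI's edge-removal rule and then show that none of the members of $\cup_{k=1}^{\omega_{j}}\Pa_{k}(X^{j}_{t})$ can be eliminated. Recall that PCMCI treats the data as stationary and, in the infinite-sample limit, pools all time points when performing its conditional independence tests; a candidate lagged variable $X^{i}_{t-\tau}$ (with $\tau\le\tub$) is retained in $\widehat{S\Pa}(X^{j}_{t})$ unless the algorithm finds some conditioning set $S$ drawn from the remaining candidate parents with $X^{i}_{t-\tau}\indep X^{j}_{t}\mid S$ in this pooled distribution. Hence it suffices to prove that, almost surely, for every $y\in\cup_{k=1}^{\omega_{j}}\Pa_{k}(X^{j}_{t})$ there is no such separating set $S$.

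Fix such a $y=X^{y_i}_{t-\tau_i}$. By the definition of the true and illusory parent sets, $(y_i,\tau_i)\in\pind^{j}_{k_0}$ for some regime $k_0\in[\omega_{j}]$, so $X^{y_i}_{s-\tau_i}\to X^{j}_{s}$ is a genuine directed edge of $\mathcal{G}$ for every $s\in\Pi^{j}_{k_0}$. Restricting attention to the sub-collection of samples whose time index lies in $\Pi^{j}_{k_0}$ --- a set that, by the stationarity and ergodicity granted by Assumption~\textbf{A7}, contributes a strictly positive asymptotic weight to the pooled law --- this edge is present, so by Faithfulness (Assumption~\textbf{A3}) applied to $\mathcal{G}$ no subset of the other lagged candidates can render $y$ and $X^{j}_{t}$ conditionally independent within regime $k_0$. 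Assumptions~\textbf{A4} and~\textbf{A5} ensure that every conditioning set under consideration consists only of strictly past variables, so this adjacency statement is exactly the relevant one.

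It remains to upgrade the per-regime dependence to the pooled distribution, and this is where the ``almost surely'' qualifier and Lemma~\ref{lem:conditional dis} enter. The pooled law is a mixture of the $\omega_{j}$ regime-specific conditional laws with weights fixed by the equilibrium distribution of the associated Markov chain (Assumption~\textbf{A7}), and Lemma~\ref{lem:conditional dis} already shows that deleting $y$ from the canonical conditioning set $\cup_{k}\Pa_{k}(X^{j}_{t})\setminus y$ changes the pooled conditional law with probability one, i.e. $y\not\indep X^{j}_{t}\mid \cup_{k}\Pa_{k}(X^{j}_{t})\setminus y$. For any other candidate set $S$, a pooled independence $y\indep X^{j}_{t}\mid S$ would require the genuine dependence contributed by regime $k_0$ to be cancelled exactly against the contributions of the remaining regimes; such an exact cancellation is a faithfulness-type coincidence that occurs only on a measure-zero set of the mechanism parameters in $\mathcal{F}$. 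Because $\tub$ and $n$ are finite there are only finitely many candidate sets $S$, and by periodicity only finitely many pairs $(y,X^{j}_{t})$ to consider, so a union bound keeps the exceptional set measure zero.

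Putting the pieces together, almost surely no separating set eliminates any $y\in\cup_{k=1}^{\omega_{j}}\Pa_{k}(X^{j}_{t})$, so every such $y$ survives into $\widehat{S\Pa}(X^{j}_{t})$, giving $\cup_{k=1}^{\omega_{j}}\Pa_{k}(X^{j}_{t})\subseteq\widehat{S\Pa}(X^{j}_{t})$ for all $t\in[\tau_{\max}+1,T]$. I expect the principal obstacle to be precisely this last bridge: ruling out accidental cross-regime cancellation for \emph{every} conditioning set, rather than the single set handled by Lemma~\ref{lem:conditional dis}. This is what forces the conclusion to be almost sure (over the choice of mechanisms) rather than deterministic, and what makes essential use of the ergodic-mixture description of the pooled law supplied by Assumption~\textbf{A7}.
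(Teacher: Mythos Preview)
Your approach differs from the paper's in structure. The paper argues by contradiction: assuming some $s\in\cup_k\Pa_k(X_t^j)\setminus\widehat{S\Pa}(X_t^j)$, it invokes Lemma~\ref{lem:conditional dis} to obtain pooled dependence given $\cup_k\Pa_k\setminus\{s\}$, and then splits on whether $s\in\Pa(X_t^j)$. If $s$ is an \emph{illusory} parent the paper says Assumption~\textbf{A2} (Markov) contradicts that dependence, since $s$ is a non-descendant; if $s$ is a \emph{true} parent the paper says PCMCI would have found dependence conditional on $\widehat{S\Pa}(X_t^j)$ and therefore kept $s$. You instead skip the case split, note that every $y\in\cup_k\Pa_k$ is a genuine parent in \emph{some} regime $k_0$, use faithfulness within that regime, and then appeal to a measure-zero cancellation argument to transfer the per-regime dependence to the pooled law for every candidate conditioning set $S$.

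The gap you flag at the end is the real obstacle, and you have diagnosed it correctly: PCMCI's PC$_1$ stage tests against many conditioning subsets, so one must rule out pooled independence for \emph{each} of them, not only for the canonical set $\cup_k\Pa_k(X_t^j)\setminus\{y\}$ handled by Lemma~\ref{lem:conditional dis}. The paper's short proof does not really close this gap either --- its true-parent branch names only the single conditioning set $\widehat{S\Pa}(X_t^j)$, and its illusory-parent branch applies the single-regime Markov property to a statement about the pooled law, which is not the same distribution. Your proposed fix (rerun the polynomial non-vanishing argument that underlies Lemma~\ref{lem:conditional dis} for each of the finitely many admissible conditioning sets, then take a union bound over sets and over the finitely many $(j,k)$ patterns) is the natural completion and, once carried out, yields a tighter argument than the paper's own sketch.
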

\begin{proofsketch}
Assume the contrary, i.e., there exists $s\in \cup_k\Pa_k(X_t^j) \setminus \widehat{S\Pa}(X^{j}_t)$. From Lemma~\ref{lem:conditional dis}, we have $\nci{X^{j}_{t}}{s}{\cup_{k=1}^{\omega_{j}}\Pa_{k}(X^{j}_{t})\setminus s}$. By the Definition~\ref{def:illusory parent sets}, we have $\Pa(X^{j}_{t}) \subset \cup_{k=1}^{\omega_{j}}\Pa_{k}(X^{j}_{t})$. If $s \not \in \Pa(X^{j}_{t})$, by the causal Markov property (Assumption \textbf{A2}), the dependence relation can not be true because $s$ is a non-descendant of $X^j_t$. If $s \in \Pa(X^j_t)$, our Algorithm would have concluded that $\nci{X_t^j}{s}{\widehat{S\Pa}(X^{j}_t)}$ (line 2), evident from the causal Markov property, contradicting our assumption. Hence, the lemma.

\end{proofsketch}
Based on Lemma~\ref{lem:conditional dis} and Lemma~\ref{lem:denser graph}, we can identify the true $\omega_{j}$ for $\Xb^{j}$ through Lemma~\ref{lem:periodicity_opt}.
\begin{lemma}\label{lem:periodicity_opt}
Let $\omega_{j}$ denote the true periodicity for $\Xb^{j} \in V$ and $\widehat{\Pa}_{\omega}(X^{j}_{t\in \Pi^j_k})$ denote the estimated parent set for $X^{j}_{t}$ obtained from Algorithm~\ref{alg:pcmciomega_brief} line 9, where $t\in \Pi^j_k$. Define:
\begin{align} \label{omega_hat}
\widehat{\omega}_j = \arg\min_{\omega \in [\oub]}\max_{k\in[\omega]}|\widehat{\Pa}_{\omega}(X^{j}_{t\in \Pi^j_k})|
\end{align}
Under assumptions \textbf{A1-A7} and with an oracle (infinite sample limit), we have that $\hat{\omega}_{j} = \omega_{j},\;\forall j\in[n]\label{hat Omega}$ almost surely.
\end{lemma}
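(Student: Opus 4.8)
The plan is to show that the objective $\max_{k\in[\omega]}\lvert\widehat{\Pa}_{\omega}(X^{j}_{t\in \Pi^j_k})\rvert$, viewed as a function of the guess $\omega\in[\oub]$, attains its global minimum exactly on the multiples of $\omega_j$, where it equals $P_j:=\max_{k\in[\omega_j]}\lvert\pind^j_k\rvert$, and is strictly larger on every other $\omega$. Since $\omega_j$ is the smallest positive multiple of itself, the $\arg\min$ (taking the smallest minimizer) then returns $\omega_j$. The whole argument turns on a \emph{purity} dichotomy for the guessed partition: a block $\widehat\Pi^j_k=\{2\tub+k,\,2\tub+\omega+k,\dots\}$ consists of indices in one residue class modulo $\omega$, so consecutive members differ by $\omega$ and the true mechanisms they carry cycle through exactly $\omega_j/\gcd(\omega,\omega_j)$ residue classes modulo $\omega_j$. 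Hence a block is pure (carries a single mechanism) precisely when $\omega_j\mid\omega$, and otherwise every block mixes at least two distinct mechanisms.

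First I would settle the easy direction. When $\omega_j\mid\omega$ each block is pure, i.e.\ its samples are i.i.d.\ draws from one mechanism $\pind^j_k$, and the CI-test loop (lines 6--10) reduces to ordinary single-mechanism causal discovery: by Lemma~\ref{lem:denser graph} the initialization $\widehat{S\Pa}(X^j_t)$ already contains $\pind^j_k$, and by faithfulness (\textbf{A3}) with the causal Markov condition (\textbf{A2}) the test removes exactly the non-parents while keeping the true parents, so $\widehat{\Pa}_{\omega}(X^j_{t\in\widehat\Pi^j_k})=\pind^j_k$. Taking the maximum over the blocks gives the objective value $P_j$, the same for every multiple $\omega$.

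Next I would treat $\omega<\omega_j$; every such $\omega$ is a non-multiple of $\omega_j$, so each of its blocks mixes a set $S\subseteq[\omega_j]$ of $\lvert S\rvert=\omega_j/\gcd(\omega,\omega_j)\ge 2$ distinct mechanisms. The block samples are then a mixture of several conditional laws, and the mixed-sample version of Lemma~\ref{lem:conditional dis} shows that a candidate $(y,\tau)$ survives the CI test on the block iff dropping it changes the mixed conditional distribution, which happens iff $(y,\tau)\in\bigcup_{k\in S}\pind^j_k$; thus the pattern recovered on the block is the union $\bigcup_{k\in S}\pind^j_k$. To finish I would exhibit one block whose union strictly exceeds $P_j$ --- naturally the block containing the residue class of a maximizer $k^\star\in\arg\max_k\lvert\pind^j_k\rvert$, whose union contains $\pind^j_{k^\star}$ together with a second, distinct pattern.

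The main obstacle is exactly this last strict inequality. Distinctness of the mixed patterns --- which follows from \textbf{A6}, since a time shift preserves the relative-lag pattern and hence two different mechanisms can never have parent sets that are time shifts of one another --- only gives that the block union is \emph{at least} as large as any single block pattern. Upgrading this to a \emph{strict} increase of the per-block maximum above $P_j$, i.e.\ ruling out the degenerate case in which the extra mechanisms in the block contribute nothing beyond a single maximal pattern ($\pind^j_{k'}\subsetneq\pind^j_{k^\star}$), is the delicate step, and is precisely where the ``hard'' nature of the mechanism change in \textbf{A6} must be invoked rather than mere pattern-distinctness. Once the strict gap is established, the $\max$ over blocks forces the objective at every $\omega<\omega_j$ strictly above $P_j$ while all multiples of $\omega_j$ tie at $P_j$, so the smallest minimizer is $\omega_j$, giving $\widehat\omega_j=\omega_j$; the almost-sure qualifier is inherited from the oracle/stationarity statements of Lemmas~\ref{lem:conditional dis}--\ref{lem:denser graph} and the ergodicity in \textbf{A7}.
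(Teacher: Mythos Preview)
Your argument is essentially the paper's: when $\omega_j\mid\omega$ every block is pure and the MCI step recovers exactly the true parent set (your ``easy direction'' is the appendix lemma that $\widehat\Pa_{\omega_j}(X^j_t)=\Pa(X^j_t)$), while when $\omega_j\nmid\omega$ the mixed-sample versions of Lemmas~\ref{lem:conditional dis}--\ref{lem:denser graph} force the recovered set to contain the union of the mixed patterns, and \textbf{A6} is invoked to make that union strictly exceed the largest single pattern. The paper packages this as a contradiction rather than a forward computation and omits your $\gcd$/coset bookkeeping, but the substance is identical.

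Two small remarks. First, Lemma~\ref{lem:conditional dis} gives only the inclusion $\bigcup_{k\in S}\pind^j_k\subseteq\widehat\Pa_\omega(X^j_t)$, not your ``iff''---a candidate outside the union need not be removed on mixed samples, since the mixing weights $\alpha_k$ in the mixture can themselves depend on that variable---but only the inclusion is needed for the size comparison, so your conclusion is unaffected. Second, your caveat about the strict-gap step is apt: \textbf{A6} as stated says only that distinct mechanisms have distinct relative-lag patterns, which does not by itself exclude $\pind^j_{k'}\subsetneq\pind^j_{k^\star}$ and hence does not obviously force $\lvert\bigcup_{k\in S}\pind^j_k\rvert>P_j$. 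The paper's proof makes exactly the same move at this point (``With Assumption~\textbf{A6}\ldots $\lvert\cup_k\Pa_k\rvert>\lvert\Pa\rvert$''), so you are aligned with it, and your explicit flagging of the issue is if anything sharper than the paper's own treatment.
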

\begin{proofsketch}
Assume the contrary that $\hat{\omega}_{j} \neq \omega_{j}$, then in the Algorithm~\ref{alg:pcmciomega_brief}, we have an incorrect time partition $\widehat{\Pi}^{j}$. Hence, CI tests that are performed use samples with different causal mechanisms. 
$\hat{p}(X^{j}_{t}|\cup_{k=1}^{\omega_{j}}\Pa_{k}(X^{j}_{t}))$ in Eq.\eqref{indicators} is estimated from a mixture of two or more time partition subsets, say $\Pi^{j}_{1}$ and $\Pi^{j}_{2}$. We can apply Lemma~\ref{lem:conditional dis} with $\cup_{k=1}^{2}\Pa_{k}(X^{j}_{t})$. With Lemma~\ref{lem:denser graph}, $\cup_{k=1}^{2}\Pa_{k}(X^{j}_{t})\subseteq \widehat{\Pa}_{\hat{\omega}_{j}}(X^j_t)$. 
Hence, for $\hat{\omega}_{j}$, $|\widehat{\Pa}_{\hat{\omega}_{j}}(X^j_t)|\geq|\cup_{k=1}^{2}\Pa_{k}(X^{j}_{t})|$ using mixture samples $t\in \cup_{k=1}^{2} \Pi^{j}_{k}$. For true $\omega_{j}$,  we have $|\widehat{\Pa}_{\omega_{j}}(X^j_t)|=|\Pa(X^{j}_{t})|$. With Assumption \textbf{A6} the Hard Mechanism Change, $|\cup_{k=1}^{2}\Pa_{k}(X^{j}_{t})| > |\Pa(X^{j}_{t})|$ so that $\omega_{j}$ always leads to a smaller size of estimated parent sets than $\hat{\omega}_{j}$, contrary to the definition of $\hat{\omega}_{j}$. Hence, $\hat{\omega}_{j}=\omega_{j}$. 

\end{proofsketch}

\section{Experiments}
\subsection{Experiments on Continuous-valued Time Series}

 \begin{figure}
\centering     
\subfigure[Accuracy of $\hat{\omega}$]{\label{fig2:a}\includegraphics[height=44mm,width=65mm]{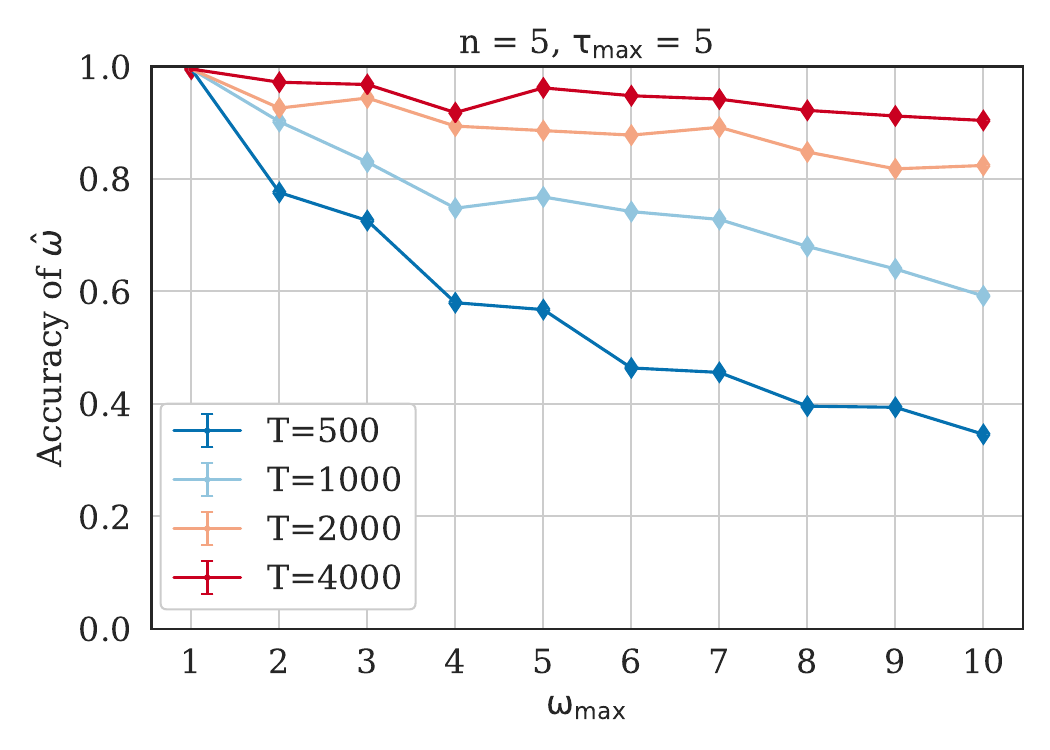}}
\subfigure[Runtime (in sec.) when $\omega_{\max}$ varies]{\label{fig2:b}\includegraphics[height=43mm,width=65mm]{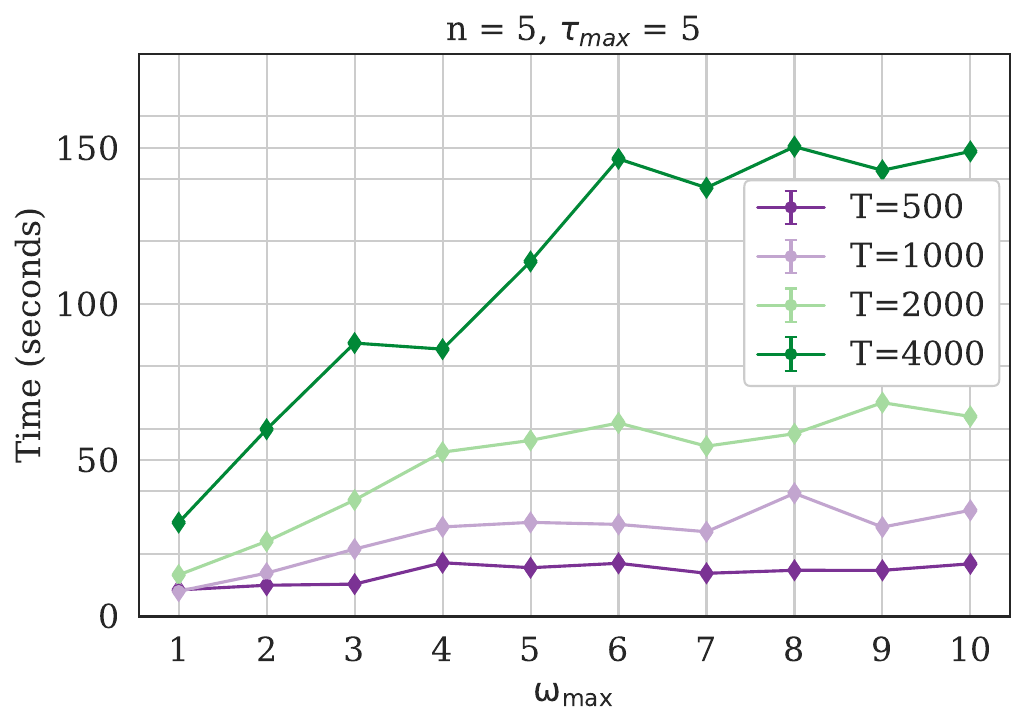}}
\caption{ $\pcmciomega$ is tested on 5-variate time series with $\tau_{\max}=5$. Set $\tub=15, \omega_{\text{ub}}=15$ for all variables. Every line corresponds to a different time series length. Every marker corresponds to the average accuracy rate or average running time over 100 trials. a) The accuracy rate of $\hat{\omega}$ for different time series lengths and different $\omega_{\max}$. b) Illustration of Runtime (in sec.) when $\omega_{\max}$ varies.}
\end{figure}
To validate the correctness and effectiveness of our algorithm, we perform a series of experiments. The Python code is provided at https://github.com/CausalML-Lab/PCMCI-Omega. In this section, we test four algorithms\footnote{We did not conduct experiments on JIT-LiNGAM because this is from a very recent paper \cite{fujiwara2023causal} and is considered concurrent per NeurIPS policy.}, $\pcmciomega$, PCMCI~\cite{runge2019detecting}, VARLiNGAM~\cite{hyvarinen2010estimation} and DYNOTEARS~\cite{pamfil2020dynotears}, on continuous-valued time series with Gaussian noise. The experiments for continuous-valued time series with exponential noise and binary-valued time series are in the supplementary material.


Following~\cite{runge2019detecting}, we generate the continuous-valued time series in three steps: 
\begin{enumerate}
  \item Construct an $n$-variate time series $V$ with length $T$ using independent and identical (Standard Gaussian or Exponential) noise temporarily. Determine $\tau_{\max}$ and $\omega_{\max}$ where $\omega_{\max}=\max\{\omega_{j}\}_{j\in\{n\}}$. After making sure that one univariate time series, say $\Xb^{j}$, has periodicity $\omega_{\max}$, the periodicity of the remaining time series $\Xb^{i},i\neq j$ is randomly selected from $\{1,\cdots,\omega_{\max}\}$ respectively.
  \item Randomly generate $\omega_{j}$ binary edge matrices with shape $(n, \tau_{\max})$ for each time series $\Xb^{j},j\in [n]$. $1$ denotes an edge and $0$ denotes no edge. Each binary matrix represent one parent set index $\pind^{j}_{k},k \in [\omega_{j}]$. Randomly generate $\omega_{j}$ coefficient matrices with shape $(n, \tau_{\max})$ for each time series $\Xb^{j},j\in [n]$. One binary edge matrix and one coefficient matrix jointly determine one causal mechanism. Hence, total $\omega_{j}$ causal mechanisms are constructed. Here, make sure that $V$ satisfies Assumption \textbf{A6}. 
  \item Starting from time point $t>\tau_{\max}$, generate vector $\Xb_{t}$ over time according to all the causal mechanisms of $V$, until $t$ achieves $T$.

\end{enumerate}
Following the previous work in \cite{huang2020causal},

$F_{1}$ score, Adjacency Precision, and Adjacency Recall are used to measure the performance of the algorithms. The details of calculating these metrics are described in the Appendix. All the performance statistics are averaged over 100 trials. The standard error of the averaged statistics is displayed either by color filling or by error bars.

A correct estimator $\hat{\omega}$ is the prerequisite for obtaining the correct causal graph. 
  \begin{figure}
    \centering     
    \subfigure[Gaussian noise]{\label{fig3:a}\includegraphics[width=65mm]{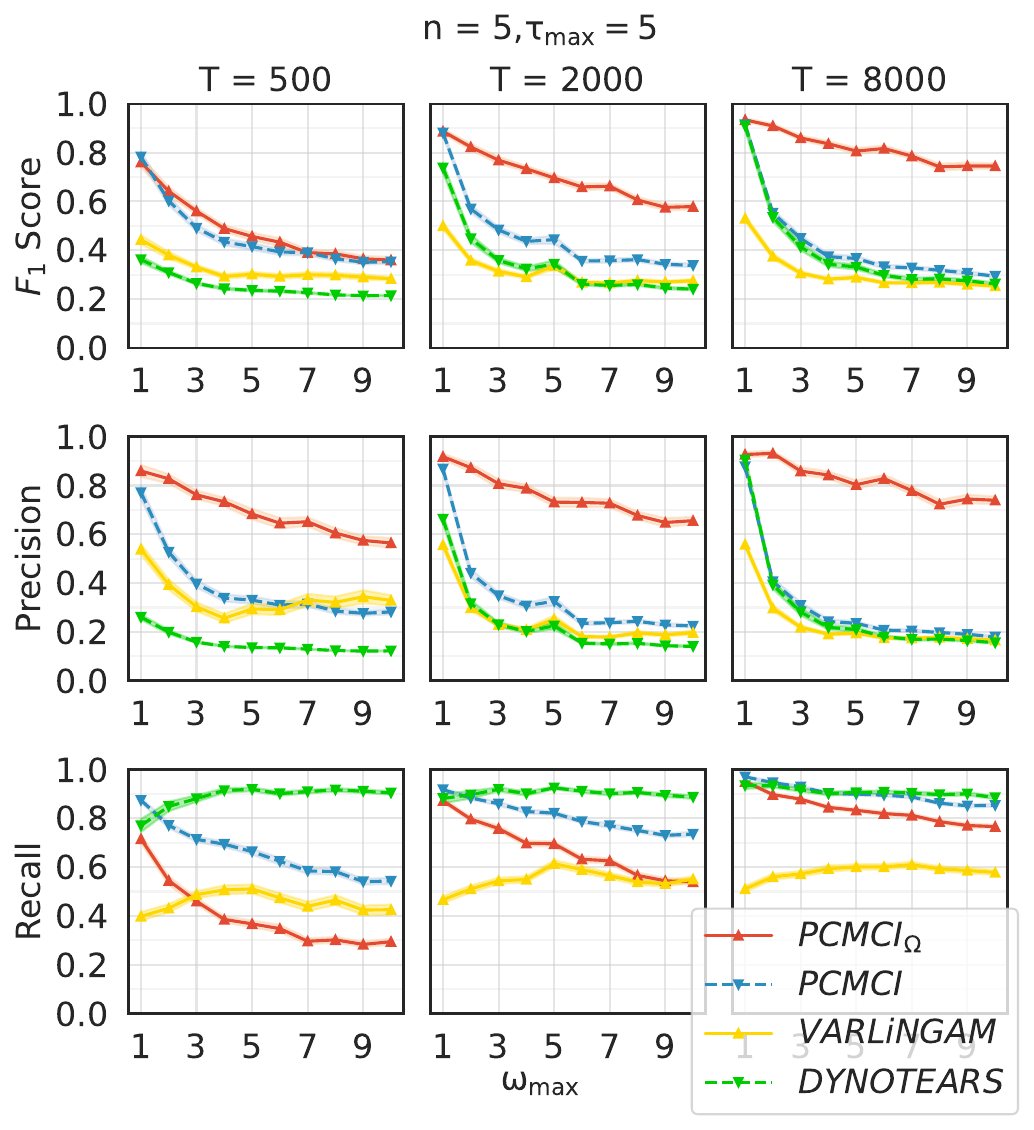}}
    \subfigure[Performance when $\tau_{\max}$ or $n$ varies]{\label{fig3:b}\includegraphics[width=65mm]{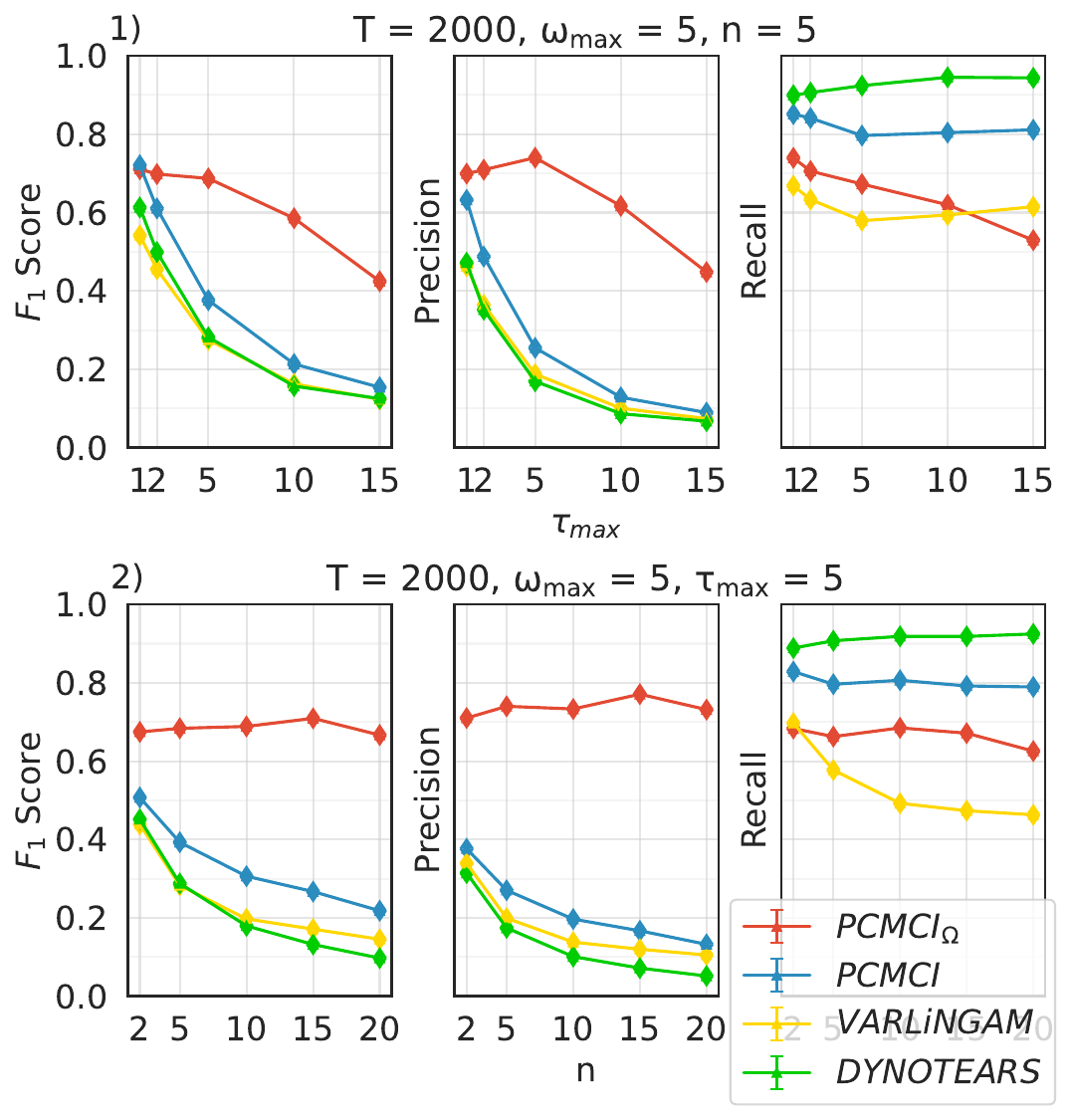}}
            \caption{4 algorithms are tested on 5-variate time series. Set $\tub=15, \omega_{ub}=15$ for all variables. Every line corresponds to a different algorithm. Every marker corresponds to the average performance over 100 trials. 
            }
    \end{figure}
Fig.\ref{fig2:a} 

shows the accuracy rate of $\hat{\omega}$ for different time lengths $T$. Here, elements in $\{N\omega_{j}\}$ where $N \in [\frac{\omega_{\text{ub}}}{\omega_j}]$ are all treated as correct estimations. 
By Definition~\ref{def:Semi-Stationary} and \ref{def:Time Partition}, the multiple of $\omega_{j}$ is still associated with a correct causal graph. However, it leads to a finer time point partition $\Pi^{j}$, decreasing the sample size used in each CI test from approximately $T/\omega_{j}$ to approximately $T/(N\omega_{j})$. The accuracy rate is sensitive to $\omega_{\max}$ for small $T$.
This result verifies that algorithm $\pcmciomega$ has the capacity to detect the true periodicity of each $\Xb^{j}\in V$ with a large enough time length.

We evaluate the performances of PCMCI$_{\Omega}$ on continuous-valued time series with Gaussian noise shown as Fig.\ref{fig3:a}. As $T$ increases, it is natural to see a continuous improvement in performance.

The sub-figures show that all three evaluation metrics decrease when $\omega_{\max}$ gets larger. The precision of PCMCI$_{\Omega}$ is always far better than other algorithms when $\omega_{\max}$ is not equal to 1. 
Given the fact that the parent sets $\widehat{\Pa}(X^{j}_t) \ \forall j, t$ obtained from PCMCI$_{\Omega}$ are subsets of the parent set $\widehat{S\Pa}(X^{j}_t) \ \forall j, t$ estimated from PCMCI, the recall rate of PCMCI should be the upper bound of the recall rate of PCMCI$_{\Omega}$. This assertion has been verified as the red recall line of PCMCI$_{\Omega}$ is always below the blue recall line of PCMCI as $T$ increases. 

In Fig.\ref{fig3:a}, the recall of PCMCI$_{\Omega}$ is worse than PCMCI for $T=500$. In this regime, 
the accuracy rate of $\hat{\omega}$ is low, shown as the dark blue line in Fig.\ref{fig2:a}. Small sample sizes in CI tests may result in a sparser causal graph. Hence the number of true positive edges may decrease. This is a common problem for many constraint-based algorithms, but it hurts PCMCI$_{\Omega}$ the most because in PCMCI$_{\Omega}$, the sample sizes in each CI test are approximate $T/\hat{\omega}$ instead of $T$. As $T$ increases, the red recall line of PCMCI$_{\Omega}$ 
push forward to the blue recall line of PCMCI. The high value of both adjacent precision and recall rate with large $T$ verify that PCMCI$_{\Omega}$ can identify the correct causal graph.



We also observe the performance of our algorithm as $\tau_{\max}$ and $N$ varies in Fig.\ref{fig3:b}. As the performance of $\pcmciomega$ is consistent over $n$-variate time series with different $n$, large $\tau_{\max}$ may lead to a smaller precision and recall rate. 

\subsection{Case Study}

Here, we construct an experiment with a real-world climate time series dataset. In \cite{runge2019detecting}, the authors tested dependencies among monthly surface pressure anomalies in the West Pacific and surface air temperature anomalies in the Central Pacific, East Pacific, and tropical Atlantic from 1948 to 2012. 
Our application explores the causal relations among the monthly mean of the same set of variables from 1948-2022 with 900 months.
Let $X^{\text{wp}}_{t}$ denote the monthly mean of surface pressure in the West Pacific, $X^{\text{cp}}_{t}$, $X^{\text{ep}}_{t}$ and $X^{\text{ta}}_{t}$ denote the monthly mean of air temperature in the Central Pacific, East Pacific, and tropical Atlantic, respectively.

The parent sets for each variable obtained from PCMCI$_{\Omega}$ algorithms are shown in Table ~\ref{table:1}. Sets of true and illusory parents of a variable at time $t$ are separated by curly braces. For instance, variable $X^{\text{wp}}_{t}$ with $\hat{\omega}_{\text{wp}}=1$ means that the causal mechanism of the surface pressure in the West Pacific remains invariant over time with the estimated parent set $\{X^{\text{wp}}_{t-1}, X^{\text{wp}}_{t-2}, X^{\text{ep}}_{t-1}, X^{\text{ta}}_{t-1}\}$. Only time series $\Xb^{\text{cp}}$ has three different parent sets, including one true parent set and two illusory parent sets, which appear periodically over time. The three parent sets of $X^{\text{cp}}_{t}$ imply that the causal effect from the tropical Atlantic air temperature $X^{\text{ta}}_{t-1}$ to the Central Pacific air temperature $X^{\text{cp}}_{t}$ would disappear every quarter of a year. Note that we do not have a ground truth in this case, and we do not possess the necessary knowledge in this area, so the significance of these results is under-explored. More discussion about this application can be found in the supplementary materials.

\begin{table}[!h]
\vspace{2mm}
\centering
\caption{Climate application results estimated from PCMCI$_{\Omega}$.}
\label {table:1}  
\begin{tabular}{ c  c  c  c  c }
\toprule
\multicolumn{5}{c}{\bf PCMCI$_\mathbf{\omega}$}\\
\midrule 
$X$ & $\hat{\omega}$ &  \multicolumn{3}{c}{$\{\widehat{\Pa}_{k}\}_{k\in[\hat{\omega}]}$: true and illusory parent sets}\\
\midrule
 $X^{\text{wp}}_{t}$ & 1 & \multicolumn{3}{c}{$\{X^{\text{wp}}_{t-1}, X^{\text{wp}}_{t-2}, X^{\text{ep}}_{t-1}, X^{\text{ta}}_{t-1}\}$} \\

 $X^{\text{cp}}_{t}$ & 3 & $\{X^{\text{cp}}_{t-1}\}$; & $\{X^{\text{cp}}_{t-1}, X^{\text{cp}}_{t-2}, X^{\text{ta}}_{t-1}\}$; & $\{X^{\text{cp}}_{t-1},X^{\text{ta}}_{t-1}\}$ \\
 
 $X^{\text{ep}}_{t}$ & 1 &\multicolumn{3}{c}{$\{X^{\text{ep}}_{t-1},X^{\text{ta}}_{t-1},X^{\text{ta}}_{t-2},X^{\text{cp}}_{t-1}\}$} \\
 $X^{\text{ta}}_{t}$ & 1 &\multicolumn{3}{c}{$\{X^{\text{ta}}_{t-1},X^{\text{wp}}_{t-1}\}$} \\
 \bottomrule
\end{tabular}
        \end{table}

\section{Conclusions}
In this paper, we propose a non-parametric, constraint-based causal discovery algorithm $\pcmciomega$ designed for semi-stationary time-series data, in which a finite number of causal mechanisms are repeated periodically. We establish the soundness of our algorithm and assess its effectiveness on continuous-valued and discrete-valued time series data. The algorithm $\pcmciomega$ has the capacity to reveal the existence of periodicity of causal mechanisms in real-world datasets.

\section{Acknowledgements}
This research has been supported in part by NSF CAREER 2239375 and Adobe Research. We wish to convey our heartfelt gratitude to the anonymous reviewers for their invaluable and constructive feedback, which significantly contributed to enhancing the quality of the manuscript.

\label{headings}

\clearpage
\bibliographystyle{achemso}
\bibliography{neurips_2023.bib}

\appendix
\newpage
{\textbf{\huge Appendix}}
\section{PCMCI Algorithm}
The PCMCI algorithm is proposed by \cite{runge2019detecting}, aiming to detect time-lagged causal relations in a window causal graph. There are two stages of PCMCI: the condition-selection stage and the causal discovery stage. In the first stage, unnecessary edges are removed based on the conditional independencies from an initialized partially connected graph where Assumption \textbf{A4-A5} should be satisfied. In the second stage, Momentary Conditional Independence tests (MCI) are used to further remove the false positive edges caused by autocorrelations in time series data. More specifically, these two steps can be briefly formalized as follows:\\
\begin{itemize}
  \item 
PC$_{1}$ in Algorithm~\ref{algorithm PC1}: Condition selection stage. PC$_{1}$ is a variant of the skeleton-discovery part of the PC algorithm in a more robust version named stable-PC~\cite{le2016fast}. The goal in this stage is to obtain a superset of the parents $\widehat{\Pa}(X^{j}_{t})$ for all variables $X^{j\in[n]}_{t\in[\tau_{\max}+1,T]} \in \mathbf{V}$. Initialize $\widehat{\Pa}(X^{j}_{t}) = \{X^{i}_{t-\tau}\}_{i\in[n], \tau \in[\tau_{\max}]}$. 
  $\widehat{\Pa}(X^{j}_{t})$ will remove $X^{i}_{t-\tau}$ if
\begin{align}
\ci{X^{i}_{t-\tau}}{X^{j}_{t}}{\widehat{\Pa}(X^{j}_{t})\backslash\{X^{i}_{t-\tau}\}}
\end{align}
     \item MCI in Algorithm~\ref{alg:algorithm MCI}: Causal discovery stage. In this stage, do MCI tests for all variable pairs $(X^{i}_{t-\tau}, X^{j}_{t})$ with $i,j\in[n]$ and time delays $\tau \in[\tau_{\max}]$:
\begin{align}
MCI(X^{i}_{t-\tau},X^{j}_{t}|\widehat{\Pa}(X^{j}_{t})\backslash\{X^{i}_{t-\tau}\},\widehat{\Pa}(X^{i}_{t-\tau}))
\end{align}
where $\widehat{\Pa}(X^{j}_{t})$ and $\widehat{\Pa}(X^{i}_{t-\tau})$ are estimated from the PC$_{1}$ stage.
\end{itemize}
Note that $\tau_{\max}$ in this section is the same as $\tau_{\text{ub}}$ in the main paper, serving as the upper bound for the time lag that exhibits causal effects. On the other hand, $\tau_{\max}$ in the main paper denotes the maximum time lag observed within the multivariate time series. Essentially, in the main paper, $\tau_{\text{ub}}$ is a parameter that must be fed into the algorithm, and $\tau_{\max}$ is observed from the true causal graph. As a default, we assume $\tau_{\text{ub}}$ is configured with a value greater than $\tau_{\max}$, ensuring that the algorithm uncovers the correct causal relations. See Fig.\ref{full causal graph with tau_max and tau_ub} for more detail.
 \begin{figure*}[h!]
        \centering
        \includegraphics[width=1\linewidth]{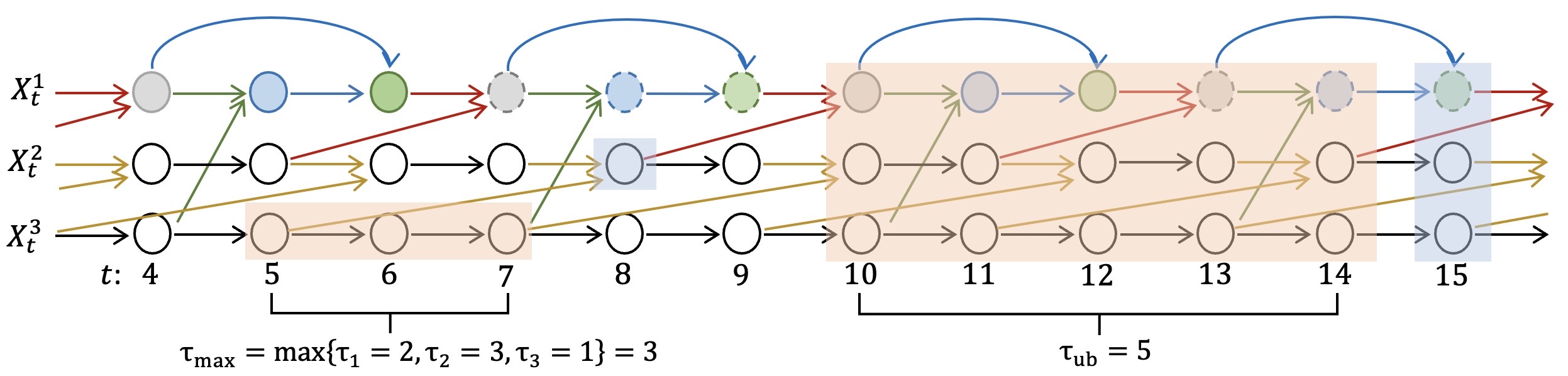}
        \caption{Set $\tau_{\text{ub}}$ to be 5, then all parent candidates of variables at $t=15$ are included in the large orange box, ranging from $t=10$ to $t=14$. Consequently, the algorithm will only examine causal effects with a time lag not exceeding 5. In the causal graph, $\tau_{\max}$ is 3, representing the maximum time lag observed among the 3-variate time series. Specifically, the maximum time lag for each component time series is $\tau_1=2, \tau_2=3, \tau_3=1$, respectively, and $\tau_{\max}$ represents the largest value among these three maximum lags.}
        \label{full causal graph with tau_max and tau_ub}
    \end{figure*} 
\begin{algorithm}[H] 
\renewcommand{\thealgorithm}{A1}
	\caption{$PC_{q_{\max}}$}
  \label{algorithm PC1}
	\begin{algorithmic}[1]
	\State \textbf{Input:} 
A $n$-variate time series $V = (\Xb^{1},\Xb^{2},\Xb^{3},...,\Xb^{n})$, target time series $\Xb^j$, maximum time lag $\tau_{\max}$, significance threshold $\alpha_{PC}$, maximum condition dimension $p_{\max}$ (default $p_{\max}=n\tau_{\max}$), maximum number of combinations $q_{\max}$ (default $q_{\max}$ = 1), conditional independence test function $CI$

    \State \textbf{function} ${CI(X,Y,\mathbf{Z})}$\\
         \qquad Test $ X \perp\!\!\!\perp Y | \mathbf{Z}$ using test statistic measure $I$\\
         \qquad \textbf{return} $p\text{-value}$, test statistic value $I$\\
    Initialize preliminary set of parents $\widehat{Pa}(X^{j}_{t})=\{X^{i}_{t-\tau}: i\in\{1,...,n\}, \tau \in \{1,...,\tau_{\max}\}\}$\\
    Initialize dictionary of test statistic values $I^{\min}(X^{i}_{t-\tau}\rightarrow X^{j}_{t})=\infty\;\forall X^{i}_{t-\tau} \in \widehat{Pa}(X^{j}_{t})$
	\For {$p= 0,1,2,...,p_{\max}$}
	\If{$|\widehat{Pa}(X^{j}_{t})|-1 < p$}
	\State Break for-loop
	\EndIf
	\ForAll {$X^{i}_{t-\tau}$ in $\widehat{Pa}(X^{j}_{t})$}
	\State $q=-1$
	  \ForAll{lexicographically chosen subsets $\mathcal{S} \subseteq \widehat{Pa}(X^{j}_{t})\setminus\{X^{i}_{t-\tau}\}$ with $|\mathcal{S}|=p$}
	  \State $q=q+1$
	  \If{$q \geq q_{\max}$}
	  \State Break from inner for-loop
	  \EndIf
	  \State Run CI test to obtain $(p\text{-value}, I) \leftarrow
         CI(X^{i}_{t-\tau},X^{j}_{t},\mathcal{S})$
      \If{$|I|<I^{\min}(X^{i}_{t-\tau}\rightarrow X^{j}_{t})$}$\;\;\;\;\;\;\;\;\;\;\;\;\;\rhd \text{Store min. $I$ of parent among all tests}$ 
      \State $I^{\min}(X^{i}_{t-\tau}\rightarrow X^{j}_{t}) = |I|$
      \EndIf
      \If{$p\text{-value} > \alpha_{PC}$}$\;\;\;\;\;\;\;\;\;\;\;\;\;\;\;\rhd \text{Removed only after all $X^{i}_{t-\tau}$ have been tested}$ 
      \State Mark $X^{i}_{t-\tau}$ for removal from $\widehat{Pa}(X^{j}_{t})$
      \State Break from inner for-loop
      \EndIf
      \EndFor
      \EndFor
     \State Remove non-significant parents from $\widehat{Pa}(X^{j}_{t})$
     \State Sort parents in $\widehat{Pa}(X^{j}_{t})$ by $I^{\min}(X^{i}_{t-\tau}\rightarrow X^{j}_{t})$ from largest to smallest
     \EndFor\\
	\Return{$\widehat{Pa}(X^{j}_{t})$}
\end{algorithmic}
\end{algorithm}
\begin{algorithm}[H]
\renewcommand{\thealgorithm}{A2}
	\caption{$MCI$} 
    \label{alg:algorithm MCI}
	\begin{algorithmic}[1]
\State \textbf{Input:} 
A $n$-variate time series $V = (\Xb^{1},\Xb^{2},\Xb^{3},...,\Xb^{n})$, sorted parents $\widehat{Pa}(X^{j}_{t})$ for all variables $X^j$ estimated with Algorithm \ref{algorithm PC1}, maximum time lag $\tau_{\max}$, maximum number $p_{X}$ of parents of variable $X^{i}$, and conditional independence test function $CI$
\ForAll{$(X^{i}_{t-\tau},X^{j}_{t})$ with $i,j \in \{1,...,n\}, \tau \in \{0,...,\tau_{\max}\}$, excluding $(X^{j}_{t},X^{j}_{t})$}
     \State Remove $X^{i}_{t-\tau}$ from $\widehat{Pa}(X^{j}_{t})$ if necessary
     \State Define $\widehat{Pa}_{p_{X}}(X^{i}_{t-\tau})$ as the first $p_{X}$ parents from $\widehat{Pa}(X^{i}_{t})$, shifted by $\tau$
     \State Run MCI test to obtain $(p\text{-value}, I)\leftarrow CI(X^{i}_{t-\tau},X^{j}_{t}, \mathbf{Z}=\{\widehat{Pa}(X^{j}_{t}),\widehat{Pa}_{p_{X}}(X^{i}_{t-\tau})\})$
     \EndFor\\
    Optionally adjust $p\text{-value}$ of all links by False Discovery Rate-approach (FDR)\\
	\Return{$p\text{-value}$ and MCI test statistic values}
\end{algorithmic} 
\end{algorithm}
\clearpage
\section{$\pcmciomega$}
For simplicity's sake, define sets: $[b]:=\{1,2,...,b\}$ and $[a,b]:=\{a,a+1,...,b\}$, where $a,b\in \mathbb{N}$.
\begin{algorithm}[H] 
\renewcommand{\thealgorithm}{B1}
	\caption{$PCMCI_{\Omega}$} 
        \label{appendix_alg:pcmciomega_brief}
	\begin{algorithmic}[1]
\State \textbf{Input:} A $n$-variate time series $V = (\Xb^{1},\Xb^{2},\Xb^{3},...,\Xb^{n})$, periodicity upper bound $\omega_\text{ub}$, time lag upper bound $\tau_\text{ub}$. By default, we assume $\tub$ and $\omega_\text{ub}$ are larger than their true value.
\State A superset of parent set is obtained using PCMCI with $\tub$ and denote it by $\widehat{S\Pa}(X^j_t) \ \forall j, t$.
\For{$\Xb^j$ where $j \in [n]$} 
\For{a guess $\omega \in [\omega_{ub}]$ of $\omega_j$} 
\State Let $\widehat{\Pi}^j := \{ \widehat{\Pi}^j_k | k \in [\omega] \}$ where $ \widehat{\Pi}^j_k = \{ 2\tub+k, 2\tub+\omega+k, 2\tub+2\omega+k, \cdots\}$.
\For{$k \in [\omega]$}
\State Initialize the parent set for $X^j_t, t\in\{t:t \ge 2\tub, t\in\widehat{\Pi}^j_k\}$ (with guess $\omega$) denoted by $\widehat{\Pa}_{\omega}(X^j_t) \leftarrow \widehat{S\Pa}(X^j_t)$.
\State Consider $X^i_{t-\tau}\in \widehat{\Pa}_{\omega}(X^j_t)$. Remove $X^i_{t-\tau}$ from $\widehat{\Pa}_{\omega}(X^j_t)$ if $X^{i}_{t - \tau} \indep X^j_t \mid  \left(\widehat{S\Pa}(X^j_t) \cup  \widehat{S\Pa}(X^i_{t-\tau})\right) \setminus X^i_{t-\tau}$ using a CI Test with samples $t \in\{t:t \ge 2\tub, t\in\widehat{\Pi}^j_k\}$. 
\State {Store $\widehat{\Pa}_{\omega}(X^j_t)$ for $X^j_t, t\in\{t:t \ge 2\tub, t\in\widehat{\Pi}^j_k\}$.}
\EndFor
\EndFor
\\\dotfill
    \If {there exists \textit{turning points} $S_j,\;S_j\in [\oub]$}
        \State $\widehat{\omega}_j\leftarrow \min S_j$
        \Else{}
\\\dotfill
\State $\widehat{\omega}_j \leftarrow \arg\min_{\omega \in [\oub]}\max_{k\in[\omega]}|\widehat{\Pa}_{\omega}(X^{j}_{t\in \widehat{\Pi}^j_k})|$.
    \EndIf
\State Set $\widehat{\Pa}(X^j_t) \leftarrow  \widehat{\Pa}_{\hat \omega_j}(X^j_t)$ for $X^j_t, t\in\{t:t \ge 2\tub\}$. 
\EndFor
\State \Return{$\hat{\omega}_{j}$ and $\widehat{\Pa}(X^{j}_t) \ \forall j\in [n], t\ge 2\tub$}.
\end{algorithmic} 
\clearpage
\end{algorithm}

\section{Soundness of PCMCI$_{\Omega}$}
 \begin{figure*}[t!]
        \centering
        \includegraphics[width=1\linewidth]{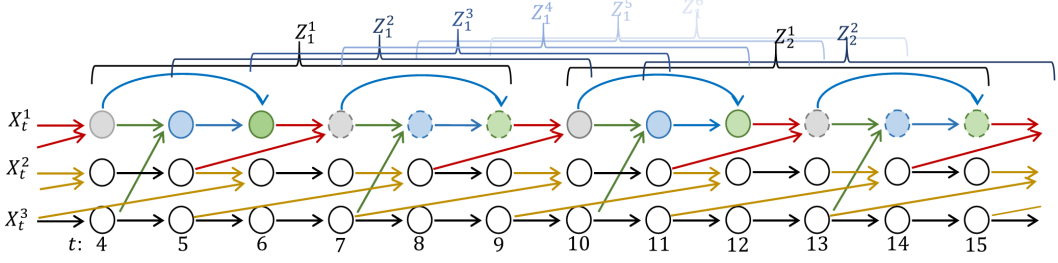}
        \caption{Partial causal graph for 3-variate time series $V=\{\Xb^{1}, \Xb^{2}, \Xb^{3}\}$ with a Semi-Stationary SCM where $\tau_{\max}=3$, $\omega_{1}=3$, $\omega_{2}=2$, $\omega_{3}=1$, $\Omega=6$ and $\delta=6$. The first 3(=$\tau_{\max}$) time slices $\{\Xb_{t}\}_{1\leq t \leq 3}$ are the starting points. The same color edges denote the same causal mechanism. E.g. for $\Xb^{1}$: there are 3 ($=\omega_{j}$) time partition subsets $\{\Pi^{1}_{k}\}_{1\leq k \leq 3}$. The time points $t$ of nodes $X^{1}_{t}$ sharing the same filling color are in the same time partition subsets. The time points $t$ of nodes $X^{1}_{t}$ sharing both the same filling color and the same outline shape are in the same homogenous time partition subsets. There are 6 ($=\delta$) different Markov chains in this multivariate time series $V$, and the first element of these 6 Markov chains is shown as $\{Z^{q}_{1}\}_{1\leq q\leq 6}$ and are tinted with a gradient of blue hues. $Z^{1}_{1}$ and $Z^{1}_{2}$ denote the first two elements of the first Markov chain while $Z^{2}_{1}$ and $Z^{2}_{2}$ denote the first two elements of the second Markov chain.}
        \label{appendix_full causal graph}
    \end{figure*} 
\subsection{Stationary Markov Chain}
\textbf{Claim}: Any discrete-valued time series $V$ with \textit{Semi-Stationary} Structural Causal Model (SCM) satisfying assumption \textbf{A1, A2, A4, A5} can be written as a Markov chain $\{Z_{n}\}$ as long as this Markov chain satisfies $\Pa(Z_{n})\subset Z_{n}\cup Z_{n-1}$ for all $n$, where $Z_{n}$ is a set of variables in $V$. This Markov chain has a finite number of states if all time series in $V$ are discrete-valued time series. 

Note that when the notation $n$ is related to a Markov chain $Z_n$, it means the running index. In the context of $X^{j\in[n]}_{t}$, $n$ represents the index of component time series within the $n$-variate time series.

To simplify, assume that one associated Markov chain of $V=\{\mathbf{X}, \mathbf{Y}\}$ has $Z_{n}=\{X_{t},Y_{t},X_{t-1},Y_{t-1}\}$ with $t \in \{t\in \mathbb{N}^{+}:, t\leq T\}$ satisfying $\Pa(Z_{n})\subset Z_{n}\cup Z_{n-1}$. Here, the notation for the time points of variables is simplified as $t$ and $t-1$, even though it should be a function of $n$, the running index of the Markov chain. Note that $Z_{n-1}=\{X_{t-2}, Y_{t-2}, X_{t-3}, Y_{t-3}\}$ rather than $\{X_{t-1}, Y_{t-1}, X_{t-2}, Y_{t-2}\}$, as the simplified notation could erroneously suggest the latter sequence. A simple proof is shown below through Markov assumption (\textbf{A2}). 
\begin{proof}
\begin{align}
    &p(Z_{n}|Z_{n-1},Z_{n-2},...)\\
    &=p(X_{t},Y_{t},X_{t-1},Y_{t-1}|Z_{n-1},Z_{n-2},...)\\&=p(X_{t}|Z_{n}\cup Z_{n-1}\setminus X_{t},Z_{n-2},\cdots)p\bigg(Y_{t}|Z_{n}\cup Z_{n-1}\setminus(X_{t}\cup Y_{t}),Z_{n-2},\cdots\bigg)\cdots \label{a lot conditioning}\\
    &=p\bigg(X_{t}|\Pa(X_{t}),Z_{n}\cup Z_{n-1}\setminus (X_{t}\cup \Pa(X_{t}))\bigg)\\
    &\notag \;\;\;\;\times p\bigg(Y_{t}|\Pa(Y_{t}), Z_{n}\cup Z_{n-1}\setminus(X_{t}\cup Y_{t}\cup\Pa(Y_{t}))\bigg)\\
    &\notag\;\;\;\;\times p\bigg(X_{t-1}|\Pa(X_{t-1}), Z_{n}\cup Z_{n-1}\setminus(X_{t}\cup Y_{t}\cup X_{t-1}\cup\Pa(X_{t-1}))\bigg)\cdots\label{conditioning on parents}\\
    &=p(X_{t}|Z_{n}\cup Z_{n-1}\setminus X_{t})p\bigg(Y_{t}|Z_{n}\cup Z_{n-1}\setminus(X_{t}\cup Y_{t})\bigg)\cdots\\
    &=p(X_{t},Y_{t},X_{t-1},Y_{t-1}|Z_{n-1})\\
    &=p(Z_{n}|Z_{n-1})
\end{align}
\end{proof}

Assume that the space of both $X_{t}$ and $Y_{t}$ with $t<T$ are $\{1, 2\}$. There are total $2^4=16$ states of Markov Chain $\{Z_{n}\}=\{\{X_{t},Y_{t},X_{t-1},Y_{t-1}\}\}$. The transition probability $\mathbf{P}$ for this Markov Chain is illustrated as a $16\times16$ matrix:
\[
 \mathbf{P}=  \begin{blockarray}{ccccc}
   & \matindex{$(1,1,1,1)$ }&\matindex{$(2,1,1,1)$ } & \matindex{$\cdots$} & \\
    \begin{block}{c[ccc]c}
      \matindex{$(1,1,1,1)$} & \text{$p_{1,1}$} & \text{$p_{1,2}$} & \text{$\cdots$}  \\
      \matindex{$(2,1,1,1)$} & \text{$p_{2,1}$} & \text{$p_{2,2}$} & \text{$\cdots$}  \\
      \matindex{$\cdots$} &\text{$\cdots$} & \text{$\cdots$} &\text{$\cdots$}& {\tiny \text{16$\times$16}} \\
    \end{block}
  \end{blockarray}
\]
where $(1,1,1,1)$ means $X_{t}=1,Y_{t}=1,X_{t-1}=1,Y_{t-1}=1$.
Each row in this transition probability matrix is a conditional distribution of $Z_{n}$ given one realization of $Z_{n-1}$. Each entry is a probability of having one specific realization of $Z_{n}$ given one realization of $Z_{n-1}$. This probability can be decomposed by conditional distributions based on Markov assumption (\textbf{A2}).
Take $p_{1,1}$ as an example:
\begin{align}
  p_{1,1}&=p(X_{t}=1,Y_{t}=1,X_{t-1}=1,Y_{t-1}=1|X_{t-2}=1,Y_{t-2}=1,X_{t-3}=1,Y_{t-3}=1)\\
&=p(X_{t}=1|\Pa(X_{t}))p(Y_{t}=1|\Pa(Y_{t}))p(X_{t-1}=1|\Pa(X_{t-1}))p(Y_{t-1}=1|\Pa(Y_{t-1}))\label{prod of conditional distribution}
\end{align}
where $\Pa(.)$ here are realizations, not random variables.


For time series $V$ with \textit{Semi-Stationary} SCM, there are (potentially) $\delta$ different Markov chains $\{Z^{q}_{n}\},q\in[\delta]$:
\begin{align*}
Z^{q}_{n}&=\{\Xb_{\tau_{\max}+q+(n-1)\delta},\Xb_{\tau_{\max}+q+1+(n-1)\delta},...,\Xb_{\tau_{\max}+q-1+n\delta}\},
\end{align*}
where $n \in \{n:n\in \mathbb{N}^{+}, \tau_{\max}+q-1+n\delta\leq T\}$, $\delta=\lceil \frac{\tau_{\max}+1}{\Omega} \rceil\Omega$.  As proved in the claim, such a Markov chain exists as long as $\Pa(Z^{q}_{n})\subset Z^{q}_{n}\cup Z^{q}_{n-1}$ for all $n$. The value of $\delta$ can guarantee the existence of such Markov chain because $\delta$ is larger than $\tau_{\max}+1$ and is a multiple of $\Omega$, that is, a multiple of all $\{\omega_{j}\}_{j\in[n]}$. By doing so, $\Pa(Z^{q}_{n})\subset Z^{q}_{n}\cup Z^{q}_{n-1}$ is satisfied; for any variable $X^{j}_{t}$, there exists $q\in[\delta]$ and $n\in \mathbb{N}^{+}$ such that variable $X^{j}_{t}$ and its parent set $\Pa(X^{j}_{t})$ can be included in $Z^{q}_{n}$; and the causal mechanism generating $Z^{q}_{n}$ is invariant for different $n$. The state space of $\{Z^{q}_{n}\}$ is the set containing all possible realizations of $\{\Xb_{\tau_{\max}+q+(i-1)+(n-1)\delta}\}_{i\in[\delta],n\in \mathbb{N}}$. The transition probabilities between the states are the product of associated causal mechanisms based on Markov assumption (\textbf{A2}). 

 Determined by the starting slice $\Xb_{t}$ where $\tau_{\max}< t \leq \tau_{\max}+\delta$, there should be $\delta$ potentially different Markov chains $\{Z^{q}_{n}\}$ where $1\leq q \leq \delta$. To be more specific, those Markov chains are:
\begin{align}
\text{Markov Chain 1: }Z^{1}_{n}&=\{\Xb_{\tau_{\max}+1+(n-1)\delta},\Xb_{\tau_{\max}+2+(n-1)\delta},...,\Xb_{\tau_{\max}+n\delta}\},\label{first q}
\\&\notag \;\;\;\;\;\text{where} \;n \in \{n:n\in \mathbb{N}^{+}, \tau_{\max}+n\delta\leq T\}.\\
\text{Markov Chain 2: }Z^{2}_{n}&=\{\Xb_{\tau_{\max}+2+(n-1)\delta},\Xb_{\tau_{\max}+3+(n-1)\delta},...,\Xb_{\tau_{\max}+1+n\delta}\},\\&\notag \;\;\;\;\;\text{where}\;n \in \{n:n\in \mathbb{N}^{+},\tau_{\max}+1+n\delta\leq T\}.\\
&\vdots\notag\\
\text{Markov Chain $\delta$: }Z^{\delta}_{n}&=\{\Xb_{\tau_{\max}+n\delta},\Xb_{\tau_{\max}+1+n\delta},...,\Xb_{\tau_{\max}-1+(n+1)\delta}\}\label{last q},\\&\notag \;\;\;\;\;\text{where}\;n \in \{n:n\in \mathbb{N}^{+}, \tau_{\max}-1+(n+1)\delta\leq T\}.
\end{align}
Given Irreducible and Aperiodic Markov Chain assumption (\textbf{A7}), discrete-time Markov chain $\{Z^{q}_{n}\}_{0< n},\;q\in[\delta]$ with finite states should be a stationary and ergodic Markov chain, and there is a unique stationary distribution $\pi_{q}$ (\cite{bertsekas2008introduction}, \cite{karlin2014first}). Additionally, the large power of the associate transition matrix $\mathbf{P}_{q}$ will eventually converge to a matrix in which each row is the stationary distribution $\pi_{q}$. 
Equivalently,
\begin{align}
 \lim_{n\rightarrow\infty}p(Z^q_{n}=a|Z^q_{1}=b)=p(Z^q_{n}=a),\forall a, b\in S.   
\end{align}
where $S$ is the state space of $Z^q_{n}$.

In other words, after a sufficiently long time, equivalently, $n$ is large enough, the distribution of $\{Z^{q}_{n}\}$ does not change with increasing $n$. That is, for large enough $n$:
\begin{align}
    p(Z^{q}_{n_1})= p(Z^{q}_{n_2}), \forall n_1, n_2 > n.
\end{align}
Returning from the stationary and ergodic Markov chains $\{Z^{q}_{n}\},\;q \in [\delta]$ back to the original data $V$ through Eq.\eqref{first q} to Eq.\eqref{last q}, the distribution of the original data $V$ must adhere to the following condition:
\begin{align}
    &p(\Xb_{\tau_{\max}+q+n_1\delta},\Xb_{\tau_{\max}+q+1+n_1\delta},...,\Xb_{\tau_{\max}+q+\delta-1+n_1\delta})\notag \\&=p(\Xb_{\tau_{\max}+q+n_2\delta},\Xb_{\tau_{\max}+q+1+n_2\delta},...,\Xb_{\tau_{\max}+q+\delta-1+n_2\delta})\label{joint distribution in z}
\end{align}
for any $q \in [\delta]$ and $n_1, n_2 > n$.

Given these clarifications, we can naturally introduce a more refined time partition that is based on, yet finer than, the time partition defined in Definition 2.3 in the main paper.

\begin{definition}[\textit{Homogenous Time Partition}]
  For a univariate time series $\Xb^{j}$ in a Semi-Stationary SCM with periodicity $\omega_{j}$, the time partition $\Pi^{j}_{k}$ of $\Xb^{j}$ can be further divided into a series of non-overlapping and non-empty subsets $\{\pi^{j}_{(k,s)}\}_{1\leq s\leq\frac{\delta}{\omega_{j}}}$.  For each $t \in [\tau_{\max}+1, T]$, there exists $ k \in [\omega_{j}]   \text{ so that } t\in \Pi^{j}_k$ and further there exists $s \in [\frac{\delta}{\omega_{j}}] \text{ so that } t\in \pi^{j}_{(k,s)}$. $\pi^{j}_{(k,s)}$ can be written as:
      \begin{align}
       \pi^{j}_{(k,s)} :=\{t: \tau_{\max}+1 \leq t\leq T, (t\bmod \omega_{j})+1=k,(t\bmod \frac{\delta}{\omega_j})+1=s\}.
   \end{align}
  \end{definition} 
With this definition, we have $\cup_{s=1}^{\frac{\delta}{\omega_{j}}}\pi^{j}_{(k,s)}=\Pi^{j}_k$. While time partition $\Pi^{j}_{k}$ guarantees that all variables in $\{X^{j}_{t}\}_{t\in\Pi^{j}_{k}}$ share the same causal mechanism, homogenous time partition $\pi^{j}_{(k,s)}$ guarantees that all variables in $\{X^{j}_{t}\}_{t>t', t\in\pi^{j}_{(k,s)}}$ share the same distribution where $t'$ represent the steps needed by the associated Markov chain to achieve equilibrium.

Fig.\ref{appendix_full causal graph} shows a partial causal graph for a 3-variate time series with Semi-Stationary SCM. $\tau_{\max}=3$ means that the causal mechanisms start from $t=4$, and the random variables with $t\in\{1,2,3\}$ are random noises. For the first time series $\Xb^{1}$, the periodicity $\omega_{1}$ is 3. And the periodicity of the time series $\Xb^{2}$ and $\Xb^{3}$ is 2 and 1, respectively. The periodicity of the whole time series $V$ is obtained by $\text{LCM}(3,2,1)=6$. $\delta=\lceil \frac{\tau_{\max}+1}{\Omega}\rceil \Omega=\lceil \frac{3+1}{6}\rceil \times 6=1\times6=6$. 

In Fig.\ref{appendix_full causal graph}, periodicity $\omega_{1}=3$ means that the causal mechanisms repeat every three time points and hence there are three time partition subsets $\Pi^{1}_{k},k\in[3]$. More specifically, $\Pi^{1}_{1}=\{4,7,10,13,...,4+3N,...\}, \Pi^{1}_{2}=\{5,8,11,14,...,5+3N,...\},\Pi^{1}_{3}=\{6,9,12,15,...,6+3N,...\}$ where $N\in \mathbb{N}^{+}$. Random variables $\{X^{1}_{t}\}$ with $t$ in the same time partition subset share the same causal mechanism. However, they may not share the same marginal distribution. 

Still in Fig.\ref{appendix_full causal graph}, based on the definition of homogenous time partition, time partition subset $\Pi^{1}_{1}$ for $\Xb^{1}$ can be further decomposed as $\pi^{1}_{(k=1,s=1)}=\{4,10,...,4+\delta N,...\}, \pi^{1}_{(k=1,s=2)}=\{7,13,...,7+\delta N,...\}.$ where $s\in[\frac{\delta}{\omega_{1}}]$.  After a long run $n$, $Z^{1}_{n}$ and $Z^{1}_{n+1}$ will eventually share the same distribution, that is, all the variables inside $Z^{q}_{n}$ will share the same joint or marginal distribution as the corresponding variables inside $Z^{q}_{n+1}$. To illustrate this, we assume that this Markov chain has already achieved its equilibrium at time point $t=4$. Based on Eq.\eqref{first q} and Eq.\eqref{joint distribution in z}, we have:
\begin{align}
p(\Xb_{4},\Xb_{5},...,\Xb_{9})=p(\Xb_{10},\Xb_{11},...,\Xb_{15})=p(\Xb_{16},\Xb_{17},...,\Xb_{21})=\cdots
\end{align}
From the identical joint distribution, we can further have:
\begin{align}
p(X^{1}_{4})=p(X^{1}_{10})=p(X^{1}_{16})=\cdots 
\end{align}
as $X^{1}_{4}\in\Xb_{4}$, $X^{1}_{10}\in\Xb_{10}$ and $X^{1}_{16}\in\Xb_{16}$.

Therefore, for sufficiently large values of $t$ ensuring that ${Z^1_n}$ has reached its stationary distribution, all variables within $\{X^j_t\}_{t\in\pi^j{(k,s)}}$ will share the same distribution.

In Fig.\ref{appendix_full causal graph}, there are $6(=\delta)$ potentially different Markov chains $\{Z^{q}_{n}\},q\in[\delta]$ in $V$. For any time window with length $\delta$, $\{\Xb_{t},...,\Xb_{t+\delta-1}\}$, there exists $q\in[\delta], n\in\mathbb{N}^{+}$, so that this time window can be completely included in $Z^{q}_{n}$. For instance, set $\{\Xb_{5}, \Xb_{6},...,\Xb_{10} \}$ is in $Z^{2}_{1}$, which is the first element of Markov chain $\{Z^{2}_{n}\}$. 

Constructing Markov chains and applying the Irreducible and Aperiodic Markov Chain assumption (\textbf{A7}) enable us to obtain a consistent estimator for the conditional and joint distributions of interest.          
\subsection{Consistent Estimator}
The conditional distributions for variables in $\{X^{j}_{t}\}_{t\in\Pi_{k}^{j}}$ are the same, that is, $p(x^{j}_{t_1}|\Pa(x_{t_1}^{j}))=p(x^{j}_{t_2}|\Pa(x_{t_2}^{j})),\;\forall t_1, t_2 \in \Pi_{k}^{j}$.  For simplicity, denote 
\begin{align}
 p_{t\in\Pi_{k}^{j}} (x^{j}_{t}|\Pa(x_{t}^{j})):=p(x^{j}_{t}|\Pa(x_{t}^{j})),\;\forall t \in \Pi_{k}^{j}   
\end{align}


Consider an indicator function such that $\mathbbm{1}(x^{j}_{t},\Pa(x_{t}^{j}))=1$ if configuration $(x_{t}^{j}, \Pa(x_{t}^{j}))$ has realized, otherwise $\mathbbm{1}(x^{j}_{t}, \Pa(x_{t}^{j}))=0$.

Since every $t\in \pi^{j}_{(k,s)}$ is apart from each other with $N\delta$ steps where $N\in\mathbb{N}^{+}$, and there must exist $q\in[\omega_{j}]$ and $n_{1}\in\mathbb{N}^{+}$ so that $\{x^{j}_{t},\Pa(x_{t}^{j})\}\in Z^{q}_{n_{1}}$, then for the same $q$, there must exist another $n_{2}$ so that  $\{x^{j}_{t+N\delta},\Pa(x_{t+N\delta}^{j})\}\in Z^{q}_{n_2}$. Hence, we have $\{\mathbbm{1}(x^{j}_{t},\Pa(x_{t}^{j}))\}_{t\in \pi^{j}_{(k,s)}}=\{f(Z^{q(t)}_{n_1(t)})\}_{t\in \pi^{j}_{(k,s)}}$ with some function $f: \mathbb{R}^{n\times \delta}\rightarrow \mathbb{R}^{1}$ satisfying $E|f(Z^{q(t)}_{n_1(t)})|<\infty$. Since the value of $t$ determines $q$ and $n_1$, we use $q(t)$ and $n_1(t)$ to emphasize their relations. For large enough $t>t'$, $\{\mathbbm{1}(x^{j}_{t},\Pa(x_{t}^{j}))\}_{t>t', t\in \pi^{j}_{(k,s)}}$ are identical samples where $t'$ is the time point needed by the associate Markov chain to achieve its equilibrium after $n_1(t')$ steps.

Without loss of generality, we assume $T$ is a multiple of $\delta$ all the time.

We can construct an estimator of $p(x^{j}_{t},\Pa(x_{t}^{j}))$ with large enough $t$ as:
\begin{align}
\hat{p}(x^{j}_{t},\Pa(x_{t}^{j}))=\frac{\delta}{T}\sum_{t\in \pi^{j}_{(k,s)}}\mathbbm{1}(x^{j}_{t},\Pa(x_{t}^{j})) \label{est of joint}
\end{align}
where $k,s$ is determined by $t$ and there must exist one and only one $k,s$ satisfying $t\in\pi^{j}_{(k,s)}$.
Now, we are going to show this estimator is consistent.

We first decompose the estimator into two parts: time point $t\leq t'$ and $t>t'$, where $t'$ represents the time point when the equilibrium of the associated Markov chain is achieved.
\begin{align}
&\frac{\delta}{T}\sum_{t\in \pi^{j}_{(k,s)}}\mathbbm{1}(x^{j}_{t},\Pa(x_{t}^{j}))\label{joint distribution}\\
&=\frac{\delta}{T}\bigg(\sum_{t\leq t', t \in \pi^{j}_{(k,s)}}\mathbbm{1}(x^{j}_{t},\Pa(x_{t}^{j}))+\sum_{t>t', t \in \pi^{j}_{(k,s)}}\mathbbm{1}(x^{j}_{t},\Pa(x_{t}^{j}))\bigg)\\
&=\frac{\delta}{T}\sum_{t\leq t', t \in \pi^{j}_{(k,s)}}\mathbbm{1}(x^{j}_{t},\Pa(x_{t}^{j}))+\frac{\delta}{T}\sum_{t>t', t \in \pi^{j}_{(k,s)}}\mathbbm{1}(x^{j}_{t},\Pa(x_{t}^{j}))\\
&=\frac{\delta}{T}\sum_{t\leq t', t \in \pi^{j}_{(k,s)}}\mathbbm{1}(x^{j}_{t},\Pa(x_{t}^{j}))+\frac{\delta}{T-t'}\frac{T-t'}{\delta}\frac{\delta}{T}\sum_{t>t', t \in \pi^{j}_{(k,s)}}\mathbbm{1}(x^{j}_{t},\Pa(x_{t}^{j}))\\
&=\frac{\delta}{T}\sum_{t\leq t', t \in \pi^{j}_{(k,s)}}\mathbbm{1}(x^{j}_{t},\Pa(x_{t}^{j}))+\frac{\delta}{T-t'}\frac{T-t'}{\delta}\frac{\delta}{T}\sum_{t>t', t \in \pi^{j}_{(k,s)}}\mathbbm{1}(x^{j}_{t},\Pa(x_{t}^{j}))\\
&=\frac{\delta}{T}\sum_{t\leq t', t \in \pi^{j}_{(k,s)}}\mathbbm{1}(x^{j}_{t},\Pa(x_{t}^{j}))+\frac{T-t'}{T}\bigg(\frac{\delta}{T-t'}\sum_{t>t', t \in \pi^{j}_{(k,s)}}\mathbbm{1}(x^{j}_{t},\Pa(x_{t}^{j}))\bigg)\\
\end{align}  
Take a limit of Eq.\eqref{joint distribution}, we have:
\begin{align}
& \lim_{T\rightarrow\infty}\frac{\delta}{T}\sum_{t\in \pi^{j}_{(k,s)}}\mathbbm{1}(x^{j}_{t},\Pa(x_{t}^{j}))\label{limit1}\\
&=  \lim_{T\rightarrow\infty}\frac{\delta}{T}\sum_{t\leq t', t \in \pi^{j}_{(k,s)}}\mathbbm{1}(x^{j}_{t},\Pa(x_{t}^{j}))+ \lim_{T\rightarrow\infty}\frac{T-t'}{T}\bigg(\frac{\delta}{T-t'}\sum_{t>t', t \in \pi^{j}_{(k,s)}}\mathbbm{1}(x^{j}_{t},\Pa(x_{t}^{j}))\bigg)\\
 &=0+\lim_{T\rightarrow\infty}\frac{T-t'}{T}\bigg(\frac{1}{n_1(T)-n_1(t')}\sum^{n_1(T)}_{n_1(t)>n_1(t')}f(Z^{q(t)}_{n_1(t)})\bigg), \text{where } t> t', t\in\pi^{j}_{(k,s)}\\
 &\xlongequal{\text{Birkhoff's Ergodic Theorem}}0+E\bigg(f(Z^{q(t)}_{n_1(t)})\bigg)\\
 &=E\bigg(\mathbbm{1}(x^{j}_{t},\Pa(x_{t}^{j}))\bigg), \text{where } t> t', t\in\pi^{j}_{(k,s)}\\
 &=p(x^{j}_{t},\Pa(x_{t}^{j})), \text{where } t> t', t\in\pi^{j}_{(k,s)} \label{limit of conditional distribution}
\end{align}  

Denote
\begin{align}
    p_{t\in\pi^{j}_{(k,s)}}(x^{j}_{t},\Pa(x_{t}^{j}))\coloneqq p(x^{j}_{t},\Pa(x_{t}^{j})), \text{where } t> t', t\in\pi^{j}_{(k,s)}
\end{align}
Based on the definition of homogenous time partition and time partition, $p_{t\in\pi^{j}_{(k,s)}}(x^{j}_{t}|\Pa(x_{t}^{j}))=p_{t\in\Pi_{k}^{j}}(x^{j}_{t}|\Pa(x_{t}^{j})),\;\forall s\in[\frac{\delta}{\omega_{j}}]$.

Similar to Eq.\eqref{est of joint}, one estimator of $p_{t\in \Pi_{k}^{j}} (x^{j}_{t}|\Pa(x_{t}^{j})),\;\forall k=[\omega_{j}]$ is
\begin{align}
\hat{p}_{t\in\Pi_{k}^{j}}(x^{j}_{t}|\Pa(x_{t}^{j}))&=\frac{\sum_{t\in \Pi_{k}^{j}}\mathbbm{1}(x^{j}_{t},\Pa(x_{t}^{j}))}{\sum_{t\in \Pi_{k}^{j}}\mathbbm{1}(\Pa(x_{t}^{j}))}\label{est pof pure conditional}\\&=\frac{\sum_{s=1}^{\frac{\delta}{\omega_{j}}}\sum_{t\in \pi^{j}_{(k,s)}}\mathbbm{1}(x^{j}_{t},\Pa(x_{t}^{j}))}{\sum_{s=1}^{\frac{\delta}{\omega_{j}}}\sum_{t\in \pi^{j}_{(k,s)}}\mathbbm{1}(\Pa(x_{t}^{j}))}\\
&=\frac{\sum_{s=1}^{\frac{\delta}{\omega_{j}}}\frac{T}{\delta}\sum_{t\in \pi^{j}_{(k,s)}}\mathbbm{1}(x^{j}_{t},\Pa(x_{t}^{j}))}{\sum_{s=1}^{\frac{\delta}{\omega_{j}}}\frac{T}{\delta}\sum_{t\in \pi^{j}_{(k,s)}}\mathbbm{1}(\Pa(x_{t}^{j}))}\label{pure conditional}
\end{align}
Take a limit of Eq.\eqref{est pof pure conditional}, we have:
\begin{align}
&\lim_{T\rightarrow\infty}\hat{p}_{t\in\Pi_{k}^{j}}(x^{j}_{t}|\Pa(x_{t}^{j}))\\&\xlongequal{\text{Eq.\eqref{limit of conditional distribution}}}\frac{\sum_{s=1}^{\frac{\delta}{\omega_{j}}}p_{t\in\pi^{j}_{(k,s)}}(x^{j}_{t},\Pa(x_{t}^{j}))}{\sum_{s=1}^{\frac{\delta}{\omega_{j}}}p_{t\in\pi^{j}_{(k,s)}}(\Pa(x_{t}^{j}))}\\
&=\frac{\sum_{s=1}^{\frac{\delta}{\omega_{j}}}p_{t\in\pi^{j}_{(k,s)}}(x^{j}_{t}|\Pa(x_{t}^{j}))p_{t\in\pi^{j}_{(k,s)}}(\Pa(x_{t}^{j}))}{\sum_{s=1}^{\frac{\delta}{\omega_{j}}}p_{t\in\pi^{j}_{(k,s)}}(\Pa(x_{t}^{j}))}\\
&\xlongequal{p_{t\in\pi^{j}_{(k,s)}}(x^{j}_{t}|\Pa(x_{t}^{j}))\text{ are same for all }s }\frac{p_{t\in\Pi_{k}^{j}}(x^{j}_{t}|\Pa(x_{t}^{j}))\sum_{s=1}^{\frac{\delta}{\omega_{j}}}p_{t\in\pi^{j}_{(k,s)}}(\Pa(x_{t}^{j}))}{\sum_{s=1}^{\frac{\delta}{\omega_{j}}}p_{t\in\pi^{j}_{(k,s)}}(\Pa(x_{t}^{j}))}\\
&=p_{t\in\Pi_{k}^{j}}(x^{j}_{t}|\Pa(x_{t}^{j}))
\end{align}
Hence, $\hat{p}_{t\in\Pi_{k}^{j}}(x^{j}_{t}|\Pa(x_{t}^{j}))$ is a consistent estimator of $p_{t\in\Pi_{k}^{j}}(x^{j}_{t}|\Pa(x_{t}^{j}))$.

Similarly, we construct an estimator of $p(x^{j}_{t}|\cup_{h}\Pa_{h}(x_{t}^{j}))$ where $t\in [T]$:
\begin{align}
\hat{p}(x^{j}_{t}|\cup_{h}\Pa_{h}(x_{t}^{j}))&=\sum_{t}\mathbbm{1}(x^{j}_{t}|\cup_{h}\Pa_{h}(x_{t}^{j}))\\ &=\frac{\sum_{t}\mathbbm{1}(x^{j}_{t},\cup_{h}\Pa_{h}(x_{t}^{j}))}{\sum_{t}\mathbbm{1}(\cup_{h}\Pa_{h}(x_{t}^{j}))}.\label{mix condi}
\end{align}
We will prove that this estimator is converged as $T$ goes to infinity in Lemma~\ref{appendix_lem:conditional dis}. Hence, it is a consistent estimator.

In this section, we have proved that $\hat{p}(x^{j}_{t},\Pa(x_{t}^{j}))$ in Eq.\eqref{est of joint} is a consistent estimator of $p(x^{j}_{t},\Pa(x_{t}^{j}))$ using samples with $t$ in the same homogenous time partition subset and $\hat{p}(x^{j}_{t}|\Pa(x_{t}^{j}))$ in Eq.\eqref{est pof pure conditional} is a consistent estimator of $p(x^{j}_{t}|\Pa(x_{t}^{j}))$ using samples with $t$ in the same time partition subset.
\section{Theorem}
\begin{theorem}
 Let $\widehat{\mathcal{G}}$ be the estimated graph using the Algorithm $\pcmciomega$. Under assumptions \textbf{A1-A7} and with an oracle (infinite sample size limit), we have that:
\begin{align}
\widehat{\mathcal{G}}=\mathcal{G}
\end{align}
almost surely.
\end{theorem}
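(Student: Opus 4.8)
The plan is to assemble the three lemmas already stated into a single reconstruction of $\mathcal{G}$, reducing the whole argument to the claim that, once the correct periodicity is known, each CI test in line~8 of Algorithm~\ref{alg:pcmciomega_brief} returns the correct adjacency decision. First I would invoke Lemma~\ref{lem:periodicity_opt} to conclude that $\hat\omega_j=\omega_j$ almost surely for every $j\in[n]$. Consequently the partition $\widehat\Pi^j$ built in line~5 coincides, as an unordered collection of residue classes modulo $\omega_j$ started at $2\tub$, with the true time partition $\Pi^j$ of Definition~\ref{def:Time Partition}; in particular every sample index used in a given CI test lies in a single $\Pi^j_k$, so all the variables $\{X^j_t\}_{t\in\widehat\Pi^j_k}$ share one causal mechanism with i.i.d.\ exogenous noise by condition (c) of Definition~\ref{def:Semi-Stationary}.

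Next I would argue that the CI tests within a correct partition cell behave as consistent i.i.d.\ tests. Using the Markov-chain construction of Definition~\ref{Time Series as a Markov Chain} together with Assumption~\textbf{A7}, each chain $\{Z^q_n\}$ is irreducible and aperiodic on a finite state space, hence it reaches its unique stationary distribution; for $t\ge 2\tub$ the burn-in is absorbed, the samples $\{(X^j_t,\Pa(X^j_t))\}_{t\in\Pi^j_k}$ are identically distributed, and the oracle CI test acts as a consistent test on i.i.d.\ data. This is exactly the regime in which an infinite-sample oracle returns the ground-truth conditional (in)dependence, and is the source of the ``almost surely'' qualifier.

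With consistency in hand, the heart of the proof is the adjacency decision for a candidate $X^i_{t-\tau}\in\widehat\Pa_\omega(X^j_t)$, tested against $S:=\bigl(\widehat{S\Pa}(X^j_t)\cup\widehat{S\Pa}(X^i_{t-\tau})\bigr)\setminus X^i_{t-\tau}$. By Definition~\ref{def:illusory parent sets} and Lemma~\ref{lem:denser graph}, $\Pa(X^j_t)\subseteq\cup_{k}\Pa_k(X^j_t)\subseteq\widehat{S\Pa}(X^j_t)$, so $\Pa(X^j_t)\subseteq S$ whenever $X^i_{t-\tau}\notin\Pa(X^j_t)$. Every element of $S$ indexes a variable strictly before time $t$, hence (by Assumptions~\textbf{A4}--\textbf{A5}) a non-descendant of $X^j_t$. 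I would then run the standard d-separation argument: on any path from $X^i_{t-\tau}$ to $X^j_t$ the final edge points into $X^j_t$ from one of its parents $P\in\Pa(X^j_t)\subseteq S$, and $P$ is a non-collider there, so conditioning on $S$ blocks the path; since no conditioning variable can be a descendant of $X^j_t$, no collider into $X^j_t$ is ever opened. Thus $\ci{X^i_{t-\tau}}{X^j_t}{S}$ holds for every non-parent, and by Faithfulness (\textbf{A3}) the test fires and the candidate is removed. Conversely, if $X^i_{t-\tau}\in\Pa(X^j_t)$ the two nodes are adjacent and cannot be d-separated by any $S$, so by \textbf{A3} the candidate is retained. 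Hence after line~8, $\widehat\Pa_{\omega_j}(X^j_t)=\Pa(X^j_t)$ for every $j$ and every $t\ge 2\tub$, and since these parent sets determine all edges of the periodic graph, $\hat{\mathcal{G}}=\mathcal{G}$ almost surely.

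I expect the main obstacle to be this d-separation step, specifically the claim that conditioning on the larger set $S$ (which contains illusory parents and other past variables, not merely $\Pa(X^j_t)$) neither opens a spurious path for a genuine non-parent nor disconnects a genuine parent. Making this airtight requires the temporal-priority observation that all conditioning variables precede $X^j_t$ and are therefore non-descendants, so no collider at or below $X^j_t$ can be activated; combined with the fact that the penultimate node of any incoming path is a conditioned-upon parent, this guarantees blockage. A secondary care point is aligning the algorithm's shifted partition $\widehat\Pi^j_k=\{2\tub+k,\,2\tub+\omega+k,\dots\}$ with the residue-class partition $\Pi^j_k$ and confirming that the relabeling of $k$ is immaterial, since only the union over $k$ enters the returned graph.
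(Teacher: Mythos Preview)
Your proposal is correct and follows essentially the same route as the paper: invoke Lemma~\ref{lem:periodicity_opt} to secure $\hat\omega_j=\omega_j$, then show that the CI test in line~8, run on a single correct partition cell, retains exactly $\Pa(X^j_t)$ and discards everything else. The paper packages that second step as a separate appendix lemma proved via the Weak Union graphoid axiom together with a pointer to PCMCI's original soundness proof, whereas you give the equivalent direct d-separation argument inline; the one terminological slip is calling the within-cell samples ``i.i.d.''\ when they are only identically distributed under the stationary Markov-chain law, but since you already invoke ergodicity for consistency this does not affect the conclusion.
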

\begin{lemma}\label{appendix_lem:conditional dis}
Denote that $\{\Pa_k(X^{j}_t)\}_{k\in[\omega_{j}]}$ contain the true and illusory parent sets, where $\omega_{j}$ is the true periodicity of $\Xb^{j}$. For any random variable $X_{t}^{j}$ with large enough $t$, under assumptions \textbf{A1-A7} and with an oracle (infinite sample size limit), we have:\\
    \begin{align}
    p\bigg(p(X^{j}_{t}|\cup_{k=1}^{\omega_{j}}\Pa_{k}(X^{j}_{t})) \neq p(X^{j}_{t}|\cup_{k=1}^{\omega_{j}}\Pa_{k}(X^{j}_{t})\setminus y) \bigg)=1,\;\forall y \in \cup_{k=1}^{\omega_{j}}\Pa_{k}(X^{j}_{t}) 
    \end{align}
Here, $p(X^{j}_{t}|\cup_{k=1}^{\omega_{j}}\Pa_{k}(X^{j}_{t}))=\lim_{T\rightarrow\infty}\hat{p}(X^{j}_{t}|\cup_{k=1}^{\omega_{j}}\Pa_{k}(X^{j}_{t}))$.
\end{lemma}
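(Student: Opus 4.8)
The plan is to pass from the pooled empirical conditional in Eq.~\eqref{mix condi} to its infinite-sample limit and to show that this limit is a genuine mixture of the $\omega_j$ distinct causal mechanisms of $\Xb^j$, inside which \emph{every} conditioning variable is relevant. Write $W \coloneqq \cup_{k=1}^{\omega_j}\Pa_k(X^j_t)$ for the union of the true and illusory parent sets and fix a configuration $W=\mathbf{w}$. First I would use the homogeneous time partition together with the ergodic/stationarity consequences of Assumption \textbf{A7} (Birkhoff's ergodic theorem, exactly as in the Consistent Estimator section) to replace the indicator averages in Eq.~\eqref{mix condi} by equilibrium joint probabilities, so that $\hat p$ converges almost surely. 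Since the $\omega_j$ time-partition subsets are residue classes modulo $\omega_j$ and hence asymptotically equally frequent, the prefactors cancel and Bayes' theorem gives
\begin{align}
p(X^j_t=x \mid W=\mathbf{w}) = \frac{\sum_{k=1}^{\omega_j} p_k(X^j_t=x,\, W=\mathbf{w})}{\sum_{k=1}^{\omega_j} p_k(W=\mathbf{w})},\notag
\end{align}
where $p_k$ denotes the stationary law restricted to $\Pi^j_k$.

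Next I would reduce each numerator term using the Causal Markov Condition (Assumption \textbf{A2}). For samples in $\Pi^j_k$ the true parents are $\Pa_k(X^j_t)$, and by Temporal Priority (Assumption \textbf{A5}) every remaining coordinate of $W$ lies strictly in the past, hence is a non-descendant; screening off therefore yields $p_k(X^j_t=x \mid W=\mathbf{w}) = p_k(X^j_t=x \mid \mathbf{w}_{\Pa_k})$, where $\mathbf{w}_{\Pa_k}$ restricts $\mathbf{w}$ to the coordinates of $\Pa_k(X^j_t)$ (this identity survives the averaging over homogeneous sub-partitions since the conditional is common across them). Substituting, the limit becomes the convex mixture
\begin{align}
p(X^j_t=x \mid W=\mathbf{w}) = \sum_{k=1}^{\omega_j}\alpha_k(\mathbf{w})\, p_k(X^j_t=x \mid \mathbf{w}_{\Pa_k}), \qquad \alpha_k(\mathbf{w})=\frac{p_k(W=\mathbf{w})}{\sum_{l=1}^{\omega_j}p_l(W=\mathbf{w})},\notag
\end{align}
which expresses the coefficients $\alpha_k$ through joint distributions rather than indicator sums, as promised in the sketch.

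Finally, for an arbitrary $y\in W$ I would show that deleting $y$ from the conditioning set changes this mixture, i.e.\ $\nci{X^j_t}{y}{W\setminus y}$. By definition of $W$ there is a partition $k_0$ with $y\in\Pa_{k_0}(X^j_t)$, and the value of $y$ enters the displayed expression both through the component $p_{k_0}(\cdot\mid\mathbf{w}_{\Pa_{k_0}})$ and through the weights $\alpha_k(\mathbf{w})$. Faithfulness (Assumption \textbf{A3}) forces $p_{k_0}$ to depend nontrivially on $y$, and the Hard Mechanism Change (Assumption \textbf{A6}) guarantees the $\omega_j$ mechanisms are genuinely distinct, so no single time shift identifies two of them. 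The remaining task—which I expect to be the main obstacle—is to exclude an exact cancellation in which the $y$-dependence of the weights annihilates that of the components for every value of $x$ simultaneously. I would argue that holding $W\setminus y$ fixed and varying $y$ turns this cancellation into a nondegenerate system of algebraic equations in the mechanism and noise parameters, whose solution set is measure-zero; such a non-generic coincidence is excluded with probability one, in the same spirit as the faithfulness assumption itself. This yields the strict inequality $p(X^j_t\mid W)\neq p(X^j_t\mid W\setminus y)$ almost surely, which is the assertion of the lemma.
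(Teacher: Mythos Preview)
Your high-level strategy matches the paper's exactly: (i) use the homogeneous time partition and ergodicity to show the pooled estimator converges to a convex mixture of the $\omega_j$ mechanism-conditionals with weights given by equilibrium joint probabilities, and (ii) show that this mixture depends nontrivially on every $y\in W$ by a genericity/measure-zero argument in the mechanism parameters. Your mixture formula is precisely the paper's Eq.~\eqref{diff_1} after passing to the limit.

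Where your proposal is thinner than the paper is in making step~(ii) rigorous. You correctly identify the obstacle---an exact cancellation between the $y$-dependence of the weights $\alpha_k(\mathbf{w})$ and that of the $k_0$-th component---and you propose to rule it out as a measure-zero coincidence in the mechanism parameters. But to certify that the relevant algebraic constraint is \emph{nondegenerate} (i.e., not an identity), one must actually exhibit it: the weights themselves are complicated functionals of the same mechanism parameters, so it is not a priori clear that the cancellation equation has any free parameters left. The paper carries this out explicitly. It writes both $p(X^j_t\mid W)$ and $p(X^j_t\mid W\setminus y)$ as mixtures (with weights $\alpha_k,\beta_k$), clears denominators in the difference, and then unrolls every joint factor $p_{t\in\pi^j_{(k,s)}}(\cup_h\Pa_h(x^j_t))$ back to the starting slice $V_{\tau_{\max}}$ and factorises it via the Causal Markov Condition into a product of mechanism conditionals (Eqs.~\eqref{final2}--\eqref{final4}). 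The numerator of the difference then becomes an honest polynomial in the free parameters $\{p(x^i_{t'}\mid\Pa(x^i_{t'}))\}$ with coefficients $\pm 1$ and pairwise-distinct monomials indexed by the homogeneous-partition label $(k,s)$; hence it is not identically zero, and its zero set has measure zero.

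One further remark: faithfulness (\textbf{A3}) and Hard Mechanism Change (\textbf{A6}) are not actually invoked in the paper's proof of this lemma. The non-vanishing comes purely from treating the conditional probability tables as generic free parameters and reading off the monomial structure. Your appeal to \textbf{A3} and \textbf{A6} gives useful intuition for why one expects dependence on $y$, but it does not by itself deliver the nondegeneracy of the cancellation system---for that you would still need the explicit polynomial decomposition the paper performs.
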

\vspace*{-5mm}
\begin{proof}
We first prove that there exist a sequence of coefficients $\{\alpha_{k}\}_{k\in [\omega_{j}]}$ satisfying $\sum_{k=1}^{\omega_{j}}\alpha_{k}=1$ so that:\\
    $\forall \text{ configuration }\cup_{h}\Pa_{h}(x_{t}^{j})$,
    \begin{align}
    \hat{p}(x^{j}_{t}|\cup_{h}\Pa_{h}(x_{t}^{j}))=\sum_{k=1}^{\omega_{j}}\alpha_{k}\hat{p}_{k}(x^{j}_{t}|\Pa(x_{t}^{j}))    
    \end{align}
    If this is correct, then $\hat{p}(x^{j}_{t}|\cup_{h}\Pa_{h}(x_{t}^{j}))$ would be a consistent estimator of $p(x^{j}_{t}|\cup_{h}\Pa_{h}(x_{t}^{j}))$.
Based on Eq.\eqref{mix condi}, we have:
\begin{align}
&\hat{p}(x^{j}_{t}|\cup_{h}\Pa_{h}(x_{t}^{j}))\label{estimator of mixture}\\&= \frac{\sum_{t}\mathbbm{1}(x^{j}_{t},\cup_{h}\Pa_{h}(x_{t}^{j}))}{\sum_{t}\mathbbm{1}(\cup_{h}\Pa_{h}(x_{t}^{j}))}\\
&=\frac{\sum_{t}\sum_{k}\mathbbm{1}(x^{j}_{t},\cup_{h}\Pa_{h}(x_{t}^{j}))\mathbbm{1}(t\in \Pi_{k}^{j})}{\sum_{t}\mathbbm{1}(\cup_{h}\Pa_{h}(x_{t}^{j}))}\\
&=\sum_{k}\frac{\sum_{t}\mathbbm{1}(x^{j}_{t},\cup_{h}\Pa_{h}(x_{t}^{j}))\mathbbm{1}(t\in \Pi_{k}^{j})}{\sum_{t}\mathbbm{1}(\cup_{h}\Pa_{h}(x_{t}^{j}))}\\
&=\sum_{k}\frac{\sum_{t\in \Pi_{k}^{j}}\mathbbm{1}(x^{j}_{t},\cup_{h}\Pa_{h}(x_{t}^{j}))}{\sum_{t}\mathbbm{1}(\cup_{h}\Pa_{h}(x_{t}^{j}))}\\
&=\sum_{k}\big(\frac{\sum_{t\in \Pi_{k}^{j}}\mathbbm{1}(x^{j}_{t},\cup_{h}\Pa_{h}(x_{t}^{j}))}{\cancel{\sum_{t}\mathbbm{1}(\cup_{h}\Pa_{h}(x_{t}^{j}))}} \frac{\cancel{\sum_{t}\mathbbm{1}(\cup_{h}\Pa_{h}(x_{t}^{j}))}}{\sum_{t\in \Pi_{k}^{j}}\mathbbm{1}(\cup_{h}\Pa_{h}(x_{t}^{j}))} \frac{\sum_{t\in \Pi_{k}^{j}}\mathbbm{1}(\cup_{h}\Pa_{h}(x_{t}^{j}))}{\sum_{t}\mathbbm{1}(\cup_{h}\Pa_{h}(x_{t}^{j}))}\big)\\
&=\sum_{k}\big(\frac{\sum_{t\in \Pi_{k}^{j}}\mathbbm{1}(x^{j}_{t},\cup_{h}\Pa_{h}(x_{t}^{j}))}{\sum_{t\in \Pi_{k}^{j}}\mathbbm{1}(\cup_{h}\Pa_{h}(x_{t}^{j}))}\frac{\sum_{t\in \Pi_{k}^{j}}\mathbbm{1}(\cup_{k}\Pa_{t\in \Pi_{k}^{j}}(x_{t}^{j}))}{\sum_{t}\mathbbm{1}(\cup_{h}\Pa_{h}(x_{t}^{j}))} \big)\\
&=\sum_{k}\big(\hat{p}_{t\in \Pi_{k}^{j}}(x^{j}_{t}|\cup_{h}\Pa_{h}(x_{t}^{j}))\frac{\sum_{t\in \Pi_{k}^{j}}\mathbbm{1}(\cup_{h}\Pa_{h}(x_{t}^{j}))}{\sum_{t}\mathbbm{1}(\cup_{h}\Pa_{h}(x_{t}^{j}))}\big)\\
&=\sum_{k}\big(\hat{p}_{t\in \Pi_{k}^{j}}(x^{j}_{t}|\Pa(x_{t}^{j}),\cup_{h}\Pa_{h}(x_{t}^{j})\setminus \Pa(x_{t}^{j}))\frac{\sum_{t\in \Pi_{k}^{j}}\mathbbm{1}(\cup_{h}\Pa_{h}(x_{t}^{j}))}{\sum_{t}\mathbbm{1}(\cup_{h}\Pa_{h}(x_{t}^{j}))}\big)\\
&=\sum_{k}\big(\hat{p}_{t\in \Pi_{k}^{j}}(x^{j}_{t}|\Pa(x_{t}^{j}))\frac{\sum_{t\in \Pi_{k}^{j}}\mathbbm{1}(\cup_{h}\Pa_{h}(x_{t}^{j}))}{\sum_{t}\mathbbm{1}(\cup_{h}\Pa_{h}(x_{t}^{j}))}\big)\\
&=\sum_{k}\alpha_{k}(T)\hat{p}_{t\in \Pi_{k}^{j}}(x^{j}_{t}|\Pa(x_{t}^{j}))\label{diff_1},\\
&\text{where}\; \alpha_{k}(T)=\frac{\sum_{t\in \Pi_{k}^{j}}\mathbbm{1}(\cup_{h}\Pa_{h}(x_{t}^{j}))}{\sum_{t}\mathbbm{1}(\cup_{h}\Pa_{h}(x_{t}^{j}))}.
\end{align}

Using the same logic in Eq.\eqref{limit1}-\eqref{limit of conditional distribution}, we can decompose the numerator and denominator of $\alpha_{k}$ with homogenous time partition until each component converges to a stationary distribution.
\begin{align}
\alpha_{k}(T)&=\frac{\sum_{t\in \Pi_{k}^{j}}\mathbbm{1}(\cup_{h}\Pa_{h}(x_{t}^{j}))}{\sum_k\sum_{t\in \Pi_{k}^{j}}\mathbbm{1}(\cup_{h}\Pa_{h}(x_{t}^{j}))}\label{indicator_alpha}\\
&=\frac{\sum_{s=1}^{\frac{\delta}{\omega_{j}}}\frac{T}{\delta}\sum_{t\in \pi^{j}_{(k,s)}}\mathbbm{1}(\cup_{h}\Pa_{h}(x_{t}^{j}))}{\sum_k\sum_{s=1}^{\frac{\delta}{\omega_{j}}}\frac{T}{\delta}\sum_{t\in \pi^{j}_{(k,s)}}\mathbbm{1}(\cup_{h}\Pa_{h}(x_{t}^{j}))}\\
\lim_{T\rightarrow \infty}\alpha_{k}(T)&=\frac{\sum_{s=1}^{\frac{\delta}{\omega_{j}}}p_{t\in \pi^{j}_{(k,s)}}(\cup_{h}\Pa_{h}(x_{t}^{j}))}{\sum_k\sum_{s=1}^{\frac{\delta}{\omega_{j}}}p_{t\in \pi^{j}_{(k,s)}}(\cup_{h}\Pa_{h}(x_{t}^{j}))}
\end{align}
Without loss of generality, assume $y \in \Pa(x_{t}^{j}), \text{where}\;t\in \Pi_{j}^{1}$ and $y \notin \Pa(x_{t}^{j}), \text{where}\;t\notin \Pi_{j}^{1}$ . Then we have
\begin{align}
\hat{p}(x^{j}_{t}|\cup_{h}\Pa_{h}(x_{t}^{j}) \setminus y)&= \frac{\sum_{t}\mathbbm{1}(x^{j}_{t},\cup_{h}\Pa_{h}(x_{t}^{j})\setminus y)}{\sum_{t}\mathbbm{1}(\cup_{h}\Pa_{h}(x_{t}^{j})\setminus y)}\\&=\sum_{k=2}^{\omega_{j}}\big(\hat{p}_{t\in \Pi_{k}^{j}}(x^{j}_{t}|\cup_h\Pa_h(x_{t}^{j})\setminus y)\frac{\sum_{t\in \Pi_{k}^{j}}\mathbbm{1}(\cup_{h}\Pa_{h}(x_{t}^{j})\setminus y)}{\sum_{t}\mathbbm{1}(\cup_{h}\Pa_{h}(x_{t}^{j})\setminus y)}\big)\\\notag&+\frac{\sum_{t\in \Pi_{1}^{j}}\mathbbm{1}(x^{j}_{t},\cup_{h}\Pa_{h}(x_{t}^{j})\setminus y)}{\sum_{t\in \Pi_{1}^{j}}\mathbbm{1}(\cup_{h}\Pa_{h}(x_{t}^{j})\setminus y)}\frac{\sum_{t\in \Pi_{1}^{j}}\mathbbm{1}(\cup_{h}\Pa_{h}(x_{t}^{j})\setminus y)}{\sum_{t}\mathbbm{1}(\cup_{h}\Pa_{h}(x_{t}^{j})\setminus y)} \\&=\sum_{k=2}^{\omega_{j}}\beta_{k}(T)\hat{p}_{t\in \Pi_{k}^{j}} (x^{j}_{t}|\Pa(x_{t}^{j})) +\beta_{1}(T)\hat{p}_{t\in \Pi_{1}^{j}} (x^{j}_{t}|\Pa(x_{t}^{j})\setminus y)\label{diff_2}\\
\text{where}\;\beta_{k}(T)&=\frac{\sum_{t\in \Pi_{k}^{j}}\mathbbm{1}(\cup_{h}\Pa_{h}(x_{t}^{j})\setminus y)}{\sum_{t}\mathbbm{1}(\cup_{h}\Pa_{h}(x_{t}^{j})\setminus y)}
\end{align}
Similarly, we have:
\begin{align}
\beta_{k}(T)&=\frac{\sum_{t\in \Pi_{k}^{j}}\mathbbm{1}(\cup_{h}\Pa_{h}(x_{t}^{j})\setminus y)}{\sum_k\sum_{t\in \Pi_{k}^{j}}\mathbbm{1}(\cup_{h}\Pa_{h}(x_{t}^{j})\setminus y)}\label{indicator_beta}\\
&=\frac{\sum_{s=1}^{\frac{\delta}{\omega_{j}}}\frac{T}{\delta}\sum_{t\in \pi^{j}_{(k,s)}}\mathbbm{1}(\cup_{h}\Pa_{h}(x_{t}^{j})\setminus y)}{\sum_k\sum_{s=1}^{\frac{\delta}{\omega_{j}}}\frac{T}{\delta}\sum_{t\in \pi^{j}_{(k,s)}}\mathbbm{1}(\cup_{h}\Pa_{h}(x_{t}^{j})\setminus y)}\\
\lim_{T\rightarrow \infty}\beta_{k}(T)&=\frac{\sum_{s=1}^{\frac{\delta}{\omega_{j}}}p_{t\in \pi^{j}_{(k,s)}}(\cup_{h}\Pa_{h}(x_{t}^{j})\setminus y)}{\sum_k\sum_{s=1}^{\frac{\delta}{\omega_{j}}}p_{t\in \pi^{j}_{(k,s)}}(\cup_{h}\Pa_{h}(x_{t}^{j})\setminus y)}
\end{align}

Proving $ p(x^{j}_{t}|\cup_{h}\Pa_{h}(x_{t}^{j})) \neq p(x^{j}_{t}|\cup_{h}\Pa_{h}(x_{t}^{j})\setminus y )$ is equal to proving: 
\begin{align}
 p(x^{j}_{t}|\cup_{h}\Pa_{h}(x_{t}^{j}))-p(x^{j}_{t}|\cup_{h}\Pa_{h}(x_{t}^{j})\setminus y )\neq 0\label{diff}
\end{align}
Substitutes Eq.\eqref{diff_1} and Eq.\eqref{diff_2} in Eq.\eqref{diff}, we have the following derivation:
\begin{align}
& p(x^{j}_{t}|\cup_{h}\Pa_{h}(x_{t}^{j}))-p(x^{j}_{t}|\cup_{h}\Pa_{h}(x_{t}^{j})\setminus y )\label{difference}\\
&=\lim_{T\rightarrow \infty}\bigg(\sum_{k=1}^{\omega_{j}}\alpha_{k}(T)\hat{p}_{t\in \Pi_{k}^{j}}(x^{j}_{t}|\Pa(x_{t}^{j}))\bigg)-\\\notag&\lim_{T\rightarrow \infty}\bigg(\sum_{k=2}^{\omega_{j}}\beta_{k}(T)\hat{p}_{t\in \Pi_{k}^{j}} (x^{j}_{t}|\Pa(x_{t}^{j})) +\beta_{1}(T)\hat{p}_{t\in \Pi_{1}^{j}} (x^{j}_{t}|\Pa(x_{t}^{j})\setminus y)\bigg)\\
&=\sum_{k=1}^{\omega_{j}}\frac{\sum_{s=1}^{\frac{\delta}{\omega_{j}}}p_{t\in \pi^{j}_{(k,s)}}(\cup_{h}\Pa_{h}(x_{t}^{j}))}{\sum_k\sum_{s=1}^{\frac{\delta}{\omega_{j}}}p_{t\in \pi^{j}_{(k,s)}}(\cup_{h}\Pa_{h}(x_{t}^{j}))}p_{t\in \Pi_{k}^{j}}(x^{j}_{t}|\Pa(x_{t}^{j}))\\\notag&\;\;\;\;-\bigg(\sum_{k=2}^{\omega_{j}}\frac{\sum_{s=1}^{\frac{\delta}{\omega_{j}}}p_{t\in \pi^{j}_{(k,s)}}(\cup_{h}\Pa_{h}(x_{t}^{j})\setminus y)}{\sum_k\sum_{s=1}^{\frac{\delta}{\omega_{j}}}p_{t\in \pi^{j}_{(k,s)}}(\cup_{h}\Pa_{h}(x_{t}^{j})\setminus y)}p_{t\in \Pi_{k}^{j}}(x^{j}_{t}|\Pa(x_{t}^{j}))\\\notag&\;\;\;\;+\frac{\sum_{s=1}^{\frac{\delta}{\omega_{j}}}p_{t\in \Pi_{(1,s)}^{j}}(\cup_{h}\Pa_{h}(x_{t}^{j})\setminus y)}{\sum_k\sum_{s=1}^{\frac{\delta}{\omega_{j}}}p_{t\in \pi^{j}_{(k,s)}}(\cup_{h}\Pa_{h}(x_{t}^{j})\setminus y)}p_{t\in \Pi_{1}^{j}} (x^{j}_{t}|\Pa(x_{t}^{j})\setminus y)  \bigg)
\end{align}
After equating the denominators, the numerator is:
\begin{align}
&\bigg(\sum_{k=1}^{\omega_{j}}\sum_{s=1}^{\frac{\delta}{\omega_{j}}}p_{t\in \pi^{j}_{(k,s)}}(\cup_{h}\Pa_{h}(x_{t}^{j}))p_{t\in \Pi_{k}^{j}}(x^{j}_{t}|\Pa(x_{t}^{j}))\bigg)\bigg(\sum_{k=1}^{\omega_{j}}\sum_{s=1}^{\frac{\delta}{\omega_{j}}}p_{t\in \pi^{j}_{(k,s)}}(\cup_{h}\Pa_{h}(x_{t}^{j})\setminus y)\bigg)\nonumber\\\notag\;\;\;\;
&-\bigg(\sum_{k=2}^{\omega_{j}}\sum_{s=1}^{\frac{\delta}{\omega_{j}}}p_{t\in \pi^{j}_{(k,s)}}(\cup_{h}\Pa_{h}(x_{t}^{j})\setminus y)p_{t\in \Pi_{k}^{j}}(x^{j}_{t}|\Pa(x_{t}^{j}))\\\;\;\;\;\;\;\;\;
&+\sum_{s=1}^{\frac{\delta}{\omega_{j}}}p_{t\in \Pi_{(1,s)}^{j}}(\cup_{h}\Pa_{h}(x_{t}^{j})\setminus y)p_{t\in \Pi_{1}^{j}} (x^{j}_{t}|\Pa(x_{t}^{j})\setminus y)  \bigg)\nonumber\\\;\;\;\;
&\times\bigg(\sum_{k=1}^{\omega_{j}}\sum_{s=1}^{\frac{\delta}{\omega_{j}}}p_{t\in \pi^{j}_{(k,s)}}(\cup_{h}\Pa_{h}(x_{t}^{j}))\bigg)\label{complex expression}
\end{align}
For the sake of simplicity, denote 
\begin{align}
a_k
&\coloneqq\sum_{s=1}^{\frac{\delta}{\omega_{j}}}p_{t\in \pi^{j}_{(k,s)}}(\cup_{h}\Pa_{h}(x_{t}^{j}))\\
b_k
&\coloneqq\sum_{s=1}^{\frac{\delta}{\omega_{j}}}p_{t\in \pi^{j}_{(k,s)}}(\cup_{h}\Pa_{h}(x_{t}^{j})\setminus y)\\
c_k&\coloneqq p_{t\in \Pi_{k}^{j}}(x^{j}_{t}|\Pa(x_{t}^{j}))\\
c_1'&\coloneqq p_{t\in \Pi_{1}^{j}}(x^{j}_{t}|\Pa(x_{t}^{j})\setminus y)
\end{align}

After substituting the simple notations in Eq.\eqref{complex expression}:
\begin{align}
&(\sum_{k=1}^{\omega_{j}}a_kc_k)(\sum_{k=1}^{\omega_{j}}b_k)-(\sum_{k=2}^{\omega_{j}}b_kc_k+b_1c_1')(\sum_{k=1}^{\omega_{j}}a_k)\\
&=\sum_{k=1}^{\omega_{j}}(c_k-c_{1'})a_kb_1+\sum_{k=1}^{\omega_{j}}\sum_{i>1,i\neq k}^{\omega_{j}}(c_k-c_{i})a_kb_i\\
&=b_1\sum_{k=1}^{\omega_{j}}c_ka_k-c_{1'}b_1\sum_{k=1}^{\omega_{j}}a_k+\sum_{k=1}^{\omega_{j}}c_ka_k\sum_{i>1,i\neq k}^{\omega_{j}}b_i-\sum_{k=1}^{\omega_{j}}a_k\sum_{i>1,i\neq k}^{\omega_{j}}c_{i}b_i\label{final1}
\end{align}
Define
\begin{align}
    V_{t}=\{\Xb_{t'}|0<t'<t\}
\end{align}
That is, $V_{t}$ contains all the nodes before time point $t$.

Denote $\{b_{t_{i}}\}_{i\in[n]}=\cup_{h}\Pa_{h}(x_{t}^{j})$ and assume $\{b_{t_{i}}\}_{{1\leq i \leq n_{1}<n}}=\Pa_{1}(x_{t}^{j})$, where $t \in \Pi_{(k,s)}^{j}$ 

We express $p_{t\in \Pi_{(k,s)}^{j}}(\cup_{h}\Pa_{h}(x_{t}^{j}))$ by marginalizing all other random variables occurring before the latest variables in $\cup_{h}\Pa_{h}(x_{t}^{j})$ and utilizing the Causal Markov assumption (\textbf{A2}):
\begin{align}
   &p_{t\in \Pi_{(k,s)}^{j}}(\cup_{h}\Pa_{h}(x_{t}^{j}))\\&=p(\cup_{h}\Pa_{h}(x_{t}^{j})|t\in \Pi_{(k,s)}^{j})\\
    &=\sum_{V_{h}\setminus \{b_{t_{i}}\}_{i\in[n]}}p(b_{t_{1}},b_{t_{2}},...b_{t_{n}},V_{h}\setminus \{b_{t_{i}}\}_{i\in[n]}|h=\max\{t_{i},1\leq i \leq n\},t\in \Pi_{(k,s)}^{j})\\
&=\sum_{\{\Pa(b_{t_{i}})\}_{i\in[n]}}p(b_{t_{i}}|\Pa(b_{t_{i}}))\sum_{V_{\tau_{\max}}}\sum_{\tau_{\max}< t'\leq h}\sum_{j\in[n]}\sum_{x^{j}_{t'}, \Pa(x^{j}_{t'})}p\big(x^{j}_{t'}|\Pa(x^{j}_{t'})\big)p(V_{\tau_{\max}})\label{final2}
\end{align}
Note that $x^{j}_{t'}\in V_{h}\setminus \{b_{t_{i}}\}_{i\in[n]}$.

This joint distribution is now represented by conditional distributions of one related variable given its parents.

Similarly, assume $y=b_{t_{1}}$, we have
\begin{align}
   &p_{t\in \Pi_{(k,s)}^{j}}(\cup_{h}\Pa_{h}(x_{t}^{j})\setminus y)\\&=p_{t\in \Pi_{(k,s)}^{j}}(\cup_{h}\Pa_{h}(x_{t}^{j})\setminus b_{t_{1}})\\
    &=p(\cup_{h}\Pa_{h}(x_{t}^{j}\setminus y)|t\in \Pi_{(k,s)}^{j})\\
    &=\sum_{V_{t_{n}}\setminus \{b_{t_{i}}\}_{i\neq1}}p(b_{t_{2}},...b_{t_{n-1}},b_{t_{n}},V_{t_{n}}\setminus \{b_{t_{i}}\}_{i\neq1} |h=\max\{t_{i},2\leq i \leq n\})\\
&=\sum_{\{\Pa(b_{t_{i}})\}_{i\neq1}}p(b_{t_{i}}|\Pa(b_{t_{i}}))\sum_{V_{\tau_{\max}}}\sum_{\tau_{\max} < t'\leq h}\sum_{j\in[n]}\sum_{x^{j}_{t'}, \Pa(x^{j}_{t'})}p\big(x^{j}_{t'}|\Pa(x^{j}_{t'})\big)p(V_{\tau_{\max}})\label{final3}    
\end{align}
Note that $x^{j}_{t'}\in V_{t_{n}}\setminus \{b_{t_{i}}\}_{i\neq1}$.

$p_{t\in \Pi_{1}^{j}}(x^{j}_{t}|\Pa(x_{t}^{j})\setminus y)$ can also be represented by those conditional distributions based on Bayes rule. 
\begin{align}
&p_{t\in\Pi_{1}^{j}}(x^{j}_{t}|\Pa(x_{t}^{j})\setminus b_{t_{1}})\label{eq 94}\\ 
&=\frac{p_{t\in\Pi_{1}^{j}} (x^{j}_{t},b_{t_{2}},..,b_{t_{n_1}})}{p_{t\in\Pi_{1}^{j}} (b_{t_{2}},...,b_{t_{n_1}})}\\
&=\frac{\sum_{b_{t_1}}p_{t\in\Pi_{1}^{j}}(x^{j}_{t},b_{t_{1}},...,b_{t_{n_1}})}{\sum_{b_{t_1}}p_{t\in\Pi_{1}^{j}}(b_{t_{1}},...,b_{t_{n_1}})}\\
&=\frac{\sum_{b_{t_1}}p_{t\in\Pi_{1}^{j}} (x^{j}_{t},\Pa_{1}(x_{t}^{j}))}{\sum_{b_{t_1}}p_{t\in\Pi_{1}^{j}}(\Pa_{1}(x_{t}^{j}))}\\
&=\frac{\sum_{b_{t_1}}p (x^{j}_{t}|\Pa_{1}(x_{t}^{j}))p_{t\in\Pi_{1}^{j}}(b_{t_{1}},...,b_{t_{n_1}})}{{\sum_{b_{t_1}}p_{t\in\Pi_{1}^{j}}(b_{t_{1}},...,b_{t_{n_1}})}}\\
&=\frac{\sum_{b_{t_1}}p(x^{j}_{t}|\Pa_{1}(x_{t}^{j}))\sum_{V_{h}\setminus \{b_{t_{i\in[n_1]}}\}}p_{t\in\Pi_{1}^{j}}(b_{t_{1}},...b_{t_{n_1}},V_{h}\setminus \{b_{t_{i\in[n_1]}}\}|h=\max\{t_{i\in [n_1]}\})}{{\sum_{b_{t_1}}\sum_{V_{h}\setminus \{b_{t_{i\in[n_1]}}\}}p_{t\in\Pi_{1}^{j}}(b_{t_{1}},...b_{t_{n_1}},V_{h}\setminus \{b_{t_{i\in[n_1]}}\}|h=\max\{t_{i\in[n_1]}\})}}\\
&=\frac{\sum_{b_{t_1}}AB}{\sum_{b_{t_1}}CD}\label{final4}
\end{align}
where 
\begin{align}
&A=p (x^{j}_{t}|\Pa_{1}(x_{t}^{j}))\sum_{\{\Pa(b_{t_{i}})\}_{i\in[n_1]}}p(b_{t_{i}}|\Pa(b_{t_{i}})) \\
&B=\sum_{V_{\tau_{\max}}}\sum_{\tau_{\max}< t'\leq h}\sum_{j\in[n]}\sum_{x^{j}_{t'}, \Pa(x^{j}_{t'})}p\big(x^{j}_{t'}|\Pa(x^{j}_{t'})\big)p(V_{\tau_{\max}}) \\
&C=\sum_{\{\Pa(b_{t_{i}})\}_{i\in[n_1]}}p(b_{t_{i}}|\Pa(b_{t_{i}}))\\
&D=\sum_{V_{\tau_{\max}}}\sum_{\tau_{\max}< t'\leq h}\sum_{j\in[n]}\sum_{x^{j}_{t'}, \Pa(x^{j}_{t'})}p\big(x^{j}_{t'}|\Pa(x^{j}_{t'})\big)p(V_{\tau_{\max}})\label{eq 104}
\end{align}
Note that $t\in\Pi_{1}^{j}$ for distributions in above section from Eq.\eqref{eq 94} to Eq.\eqref{eq 104} and that $x^{j}_{t'}\in V_{h}\setminus \{b_{t_{i}}\}_{i\in[n_1]}$.

Hence, every term in Eq.\eqref{final1} can be expressed as a function of those conditional distributions.
Substituting Eq.\eqref{final2}, Eq.\eqref{final3} and Eq.\eqref{final4} in Eq.\eqref{final1}, we have a polynomial equation only composed of conditional distributions $\{p\big(x^{j}_{t'}|\Pa(x^{j}_{t'})\big)\}_{j\in[n],t'\leq t}$ except the joint distribution of the starting points $p(V_{\tau_{\max}})$. Note that the conditional distributions of variables in $\{X^{j}_{t}\}_{t\in\Pi^{j}_{k}},j\in[n],
k\in[\omega_{j}]$ are the same. Since sets do not allow duplicate values, set $\{p\big(x^{j}_{t'}|\Pa(x^{j}_{t'})\big)\}_{j\in[n],t'\leq t}$ contains only different conditional distributions. There should be potentially total $\sum_{j=1}^{n}\omega_{j}$ different causal mechanisms. The total number of conditional probabilities should be jointly determined by the number of causal mechanisms and also the number of realizations that variables can take. After adjusting those conditional distributions by the linear restriction $\sum_y p(x|y)=1$, all components in the set $\{p\big(x^{j}_{t'}|\Pa(x^{j}_{t'})\big)\}_{j\in[n],t'\leq t}$ are mutually independent, and $p(V_{\tau_{\max}})$ is also independent of all the causal mechanisms because the first starting points are random noises. That is, upon adjustments, all the terms in Eq.\eqref{final1} should be rendered independent of each other, without any imposed constraints across them.

After expanding all the summations in Eq.\eqref{final1}, the coefficients of this polynomial equation are either $1$ or $-1$. Each coefficient is accompanied by one unique monomial as index $(k,s)$ in the joint distribution $p_{t\in\pi^{j}_{k,s}}$ determined a unique product of conditional distributions, i.e., with a different pair of $(k,s)$, the product should be different. Considering all random and independent conditional distributions in $\{p\big(x^{j}_{t'}|\Pa(x^{j}_{t'})\big)\}_{j\in[n],t'\leq t}$, the polynomial is not identically zero, and the probability of choosing a root of this polynomial is zero.
 
Denote the polynomial equation in Eq.\eqref{final1} as A, we have:
\begin{align}
    p(A=0)=0
\end{align}
Back to the original Eq.\eqref{difference}, we finally have
$p\Big(p(x^{j}_{t}|\cup_{h}\Pa_{h}(x_{t}^{j})) \neq p(x^{j}_{t}|\cup_{h}\Pa_{h}(x_{t}^{j})\setminus y )\Big)=1,\; \forall y \in \cup_{h}\Pa_{h}(x_{t}^{j})$.

\end{proof}
\begin{lemma}\label{appendix_lem:denser graph}
    Let $\widehat{S\Pa}(\Xb^{j}_t)$ denote the estimated superset of parent set for $\Xb^{j} \in V$ obtained from the Algorithm~\ref{appendix_alg:pcmciomega_brief} (line 2). $\{\Pa_k(X^{j}_t)\}_{k\in[\omega_{j}]}$ contain the true and illusory parent sets, where $\omega_{j}$ is the true periodicity of $\Xb^{j}$. Under assumptions \textbf{A1-A7} and with an oracle (infinite sample size limit), we have:
    \begin{align*}
        \cup_{k=1}^{\omega_{j}}\Pa_{k}(X^{j}_{t})\subseteq \widehat{S\Pa}(X^{j}_t),
        \;\forall t\in [\tau_{\max}+1,T]
    \end{align*}
    almost surely.
\end{lemma}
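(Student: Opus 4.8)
The plan is to argue by contradiction, in the spirit of the pruning logic of PCMCI: a candidate that cannot be separated from $X^j_t$ must survive into the estimated superset. Suppose the conclusion fails, so that there exists $s \in \cup_{k=1}^{\omega_j}\Pa_k(X^j_t)\setminus\widehat{S\Pa}(X^j_t)$; that is, $s$ lies in the union of the true and illusory parent sets yet is absent from the superset returned in line~2. Because line~2 operates on the pooled samples $t\in[\tau_{\max}+1,T]$, every independence decision it makes is a statement about the limiting pooled distribution $p(\cdot)=\lim_{T\to\infty}\hat p(\cdot)$ of Lemma~\ref{appendix_lem:conditional dis}. That lemma is the engine of the argument: it certifies that $\nci{X^j_t}{s}{\cup_{k=1}^{\omega_j}\Pa_k(X^j_t)\setminus s}$ holds almost surely for every element $s$ of the union, true or illusory.

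Next I would recall the operational meaning of $s\notin\widehat{S\Pa}(X^j_t)$: under the oracle, PCMCI deletes the candidate $s$ only when its PC$_{1}$ stage finds some conditioning set, drawn from the running parent estimate, that renders $X^j_t$ and $s$ independent in the pooled distribution. I split on whether $s$ is genuine, using that Definition~\ref{def:illusory parent sets} gives $\Pa(X^j_t)\subseteq\cup_{k=1}^{\omega_j}\Pa_k(X^j_t)$. If $s\in\Pa(X^j_t)$, then $s$ is adjacent to $X^j_t$ in $\mathcal{G}$, a genuine dependence that Lemma~\ref{appendix_lem:conditional dis} transports to the pooled distribution, so no separating set can exist and PCMCI must retain $s$. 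If $s\notin\Pa(X^j_t)$, then $s$ is a non-descendant of $X^j_t$ in $\mathcal{G}$ and would be independent of it under a single mechanism by the causal Markov condition (\textbf{A2}); here the pooled dependence of Lemma~\ref{appendix_lem:conditional dis} is an artifact of mixing time partitions, arising because $s$ is a true parent of $X^j_{t'}$ for $t'$ in a different partition. In both cases the pooled distribution exhibits a genuine statistical dependence that PCMCI picks up, so it again retains $s$, yielding $s\in\widehat{S\Pa}(X^j_t)$ and contradicting the assumption.

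The hard part will be closing the gap between the single conditioning set $\cup_{k=1}^{\omega_j}\Pa_k(X^j_t)\setminus s$ controlled by Lemma~\ref{appendix_lem:conditional dis} and the entire family of conditioning sets that PC$_{1}$ enumerates: to guarantee $s$ is never pruned I must exclude the existence of \emph{any} separating set $\mathbf{Z}$, not merely one. I would resolve this by showing that the pooled distribution is Markov and faithful (\textbf{A2}, \textbf{A3}) to the graph in which the parents of $X^j_t$ are precisely $\cup_{k=1}^{\omega_j}\Pa_k(X^j_t)$; Lemma~\ref{appendix_lem:conditional dis} already establishes minimality of this union, since deleting any element changes the conditional law, so soundness of the PCMCI skeleton search then delivers recovery of at least these parents. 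The subtlety to handle with care is that the pooled distribution is a genuine mixture and need not be faithful to the original $\mathcal{G}$; the relevant faithfulness is that of the mixture's own dependence structure, and the almost-sure qualifier — inherited from the polynomial-nonvanishing argument behind Lemma~\ref{appendix_lem:conditional dis} — is what upgrades the conclusion to hold simultaneously for every $s$ in the union.
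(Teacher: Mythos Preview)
Your argument mirrors the paper's: contradiction, Lemma~\ref{appendix_lem:conditional dis} to obtain the pooled dependence $\nci{X^j_t}{s}{\cup_k\Pa_k(X^j_t)\setminus s}$, then a case split on $s\in\Pa(X^j_t)$ versus $s\notin\Pa(X^j_t)$. Your third paragraph explicitly names and sketches a fix for a gap---Lemma~\ref{appendix_lem:conditional dis} controls only the single conditioning set $\cup_k\Pa_k(X^j_t)\setminus s$, whereas PC$_1$ enumerates many---that the paper's own proof leaves tacit.
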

\begin{proof}
Assume the contrary, i.e., there exists $s\in \cup_k\Pa_k(X_t^j) \setminus \widehat{S\Pa}(X^{j}_t)$. From Lemma~\ref{appendix_lem:conditional dis}, we have $\nci{X^{j}_{t}}{s}{\cup_{k=1}^{\omega_{j}}\Pa_{k}(X^{j}_{t})\setminus s}$. By the Definition 2.4, we have $\Pa(X^{j}_{t}) \subset \cup_{k=1}^{\omega_{j}}\Pa_{k}(X^{j}_{t})$. If $s \not \in \Pa(X^{j}_{t})$, by the causal Markov property (\textbf{A2}), the dependence relation can not be true, because $s$ is a non-descendant of $X^j_t$. If $s \in \Pa(X^j_t)$, our Algorithm would have concluded that $\nci{X_t^j}{s}{\widehat{S\Pa}(X^{j}_t)}$ (line 2) with a consistent CI test, evident from the causal Markov property, contradicting our assumption. Hence, the lemma.
\end{proof}
\begin{lemma} \label{lem:true parent}
 Let $ \widehat{Pa}(X^{j}_{t})$ denote the estimated parent set for $\Xb^{j} \in V$ obtained from the Algorithm~\ref{appendix_alg:pcmciomega_brief} (line 19) assuming that true $\omega_{j}$ has obtained (line 17). $\{\Pa_k(X^{j}_t)\}_{k\in[\omega_{j}]}$ contain the true and illusory parent sets. Under assumptions \textbf{A1-A7} and with an oracle (infinite sample size limit), we have:
\begin{align}
  \widehat{Pa}(X^{j}_{t})=Pa(X^{j}_{t}),
        \;\forall t\in [\tau_{\max}+1,T]  
\end{align}
almost surely.
\end{lemma}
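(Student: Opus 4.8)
The plan is to establish the two inclusions $\Pa(X^{j}_{t}) \subseteq \widehat{Pa}(X^{j}_{t})$ and $\widehat{Pa}(X^{j}_{t}) \subseteq \Pa(X^{j}_{t})$ by analyzing the single round of CI tests that Algorithm~\ref{appendix_alg:pcmciomega_brief} performs on line 8 once the correct periodicity $\hat{\omega}_{j}=\omega_{j}$ has been fixed (guaranteed by Lemma~\ref{lem:periodicity_opt}). With the true $\omega_{j}$, the time partition $\widehat{\Pi}^{j}_{k}$ is correct, so for every $t \in \widehat{\Pi}^{j}_{k}$ the variable $X^{j}_{t}$ is generated by the same ($k$-th) causal mechanism; by the consistency results of the Consistent Estimator subsection, the CI tests run on the samples $\{t : t\in\widehat{\Pi}^{j}_{k}\}$ converge to the conditional independence relations entailed by $\mathcal{G}$. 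The round starts from the initialization $\widehat{\Pa}_{\omega_{j}}(X^{j}_{t}) \leftarrow \widehat{S\Pa}(X^{j}_{t})$, and by Lemma~\ref{appendix_lem:denser graph} we have $\Pa(X^{j}_{t}) \subseteq \cup_{k}\Pa_{k}(X^{j}_{t}) \subseteq \widehat{S\Pa}(X^{j}_{t})$, so every true parent is present at the start; the task is to show that exactly the non-parents are deleted.

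For the completeness direction (true parents retained), I would fix $t\in\Pi^{j}_{k}$ and a candidate $y\in\Pa(X^{j}_{t})$. Since $y\to X^{j}_{t}$ is a directed edge present in the mechanism governing the samples in $\Pi^{j}_{k}$, no conditioning set can d-separate $y$ from $X^{j}_{t}$; hence by Faithfulness (\textbf{A3}) we obtain $\nci{X^{j}_{t}}{y}{Z}$ with $Z=(\widehat{S\Pa}(X^{j}_{t})\cup\widehat{S\Pa}(y))\setminus y$, and the CI test does not remove $y$.

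The soundness direction (non-parents deleted) is the crux. Let $s=X^{i}_{t-\tau}\in\widehat{S\Pa}(X^{j}_{t})\setminus\Pa(X^{j}_{t})$ with $\tau\ge 1$; I claim $X^{j}_{t}$ is d-separated from $s$ by $Z=(\widehat{S\Pa}(X^{j}_{t})\cup\widehat{S\Pa}(s))\setminus s$ in $\mathcal{G}$. The key structural fact, supplied by Temporal Priority (\textbf{A5}) together with No Contemporaneous Effects (\textbf{A4}), is that every variable in $Z$ lies strictly before time $t$ and is therefore a non-descendant of $X^{j}_{t}$; moreover $\Pa(X^{j}_{t})\subseteq\widehat{S\Pa}(X^{j}_{t})\subseteq Z$, since deleting $s\notin\Pa(X^{j}_{t})$ removes no true parent. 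Given these two facts I would trace an arbitrary path from $X^{j}_{t}$: a path whose first edge enters $X^{j}_{t}$ meets a parent in $Z$ acting as a non-collider and is blocked; a path whose first edge leaves $X^{j}_{t}$ cannot be directed all the way to the non-descendant $s$, so at its first orientation reversal there is a collider $W$ reached by a directed subpath out of $X^{j}_{t}$, making $W$ a descendant of $X^{j}_{t}$. Because $Z$ contains only non-descendants of $X^{j}_{t}$, neither $W$ nor any of its descendants lies in $Z$, so $W$ stays inactive and the path is blocked. Hence $X^{j}_{t}\indep s \mid Z$ by the Causal Markov condition (\textbf{A2}), the CI test fires, and $s$ is removed.

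Combining the two directions, the surviving set after line 8 is exactly $\Pa(X^{j}_{t})$, which is the claim. I expect the soundness direction to be the main obstacle, specifically verifying that conditioning on the larger superset $\widehat{S\Pa}$ rather than on $\Pa(X^{j}_{t})$ alone does not open any collider path; this is resolved entirely by the observation that temporal ordering forces all conditioning variables to be non-descendants of $X^{j}_{t}$, so no collider that is a descendant of $X^{j}_{t}$ is ever conditioned on.
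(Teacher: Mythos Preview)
Your proposal is correct and follows essentially the same two-inclusion strategy as the paper: start from $\Pa(X^{j}_{t})\subseteq\widehat{S\Pa}(X^{j}_{t})$ (Lemma~\ref{appendix_lem:denser graph}), then use Faithfulness to keep true parents and the Causal Markov condition to drop non-parents. The only stylistic difference is that for the soundness direction you argue d-separation by an explicit path-tracing analysis (temporal ordering forces every conditioned variable to be a non-descendant of $X^{j}_{t}$, so no collider on an outgoing path is activated), whereas the paper instead invokes the local Markov statement $\ci{W\cup X^{i}_{t-\tau}}{X^{j}_{t}}{\Pa(X^{j}_{t})}$ and then applies the Weak Union graphoid axiom to enlarge the conditioning set; both arguments encode the same fact and yield the same conclusion.
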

\begin{proof}
From Lemma~\ref{appendix_lem:denser graph}, 
\begin{align}
 Pa(X^{j}_{t})\subset\cup_{k=1}^{\omega_{j}}\Pa_{k}(X^{j}_{t})\subseteq \widehat{S\Pa}(X^{j}_t),
        \;\forall t\in [\tau_{\max}+1,T], j\in[n]\label{soundness}
\end{align}
In \cite{runge2019detecting}, the author proved  $\widehat{Pa}(X^{j}_{t})=Pa(X^{j}_{t})$ if we run PCMCI on stationary time series. Using the same logic, we have the following proof.

Suppose $X^{i}_{t-\tau} \notin \widehat{Pa}(X^{j}_{t})$ but $X^{i}_{t-\tau} \in Pa(X^{j}_{t})$. With a consistent conditional independence test and correct time partition, the \textit{$MCI$} test (line 8 in Algorithm \ref{appendix_alg:pcmciomega_brief}) will remove $X^{i}_{t-\tau}$ from $\widehat{Pa}_{\omega_{j}}(X^{j}_{t})$ if and only if:
\begin{align}
  \ci{X^{i}_{t-\tau}}{X^{j}_{t}}{\widehat{SPa}(X^{j}_t) \setminus \{X^{i}_{t-\tau}\},\widehat{SPa}(X^{i}_{t-\tau})}  
\end{align}
 Based on Eq.\eqref{soundness}, the rule is equivalent to removing $X^{i}_{t-\tau}$ from $\widehat{Pa}_{\omega_{j}}(X^{j}_{t})$ if and only if:
 \begin{align}
&\notag\ci{X^{i}_{t-\tau}}{X^{j}_{t}}{\Bigg\{Pa(X^{j}_{t})\setminus \{X^{i}_{t-\tau}\},Pa(X^{i}_{t-\tau})},\\&\;\;\;\;\;\;\;\;\;\;\;\;\;\;\;\;\;\;\;\;\;\;\;\;\widehat{SPa}(X^{j}_t) \setminus (Pa(X^{j}_{t})\cup\{X^{i}_{t-\tau}\}),\widehat{SPa}(X^{i}_{t-\tau})\setminus Pa(X^{i}_{t-\tau})\Bigg\}\\
\Rightarrow &\ci{X^{i}_{t-\tau}}{X^{j}_{t}}{Pa(X^{j}_{t})\setminus \{X^{i}_{t-\tau}\},Pa(X^{i}_{t-\tau})}\label{true1}
\end{align}
Based on Causal Markov Condition assumption (\textbf{A2}) and Faithfulness Condition (\textbf{A3}), from Eq.\eqref{true1} we have $X^{i}_{t-\tau} \notin Pa(X^{j}_{t})$. In other words, if $X^{i}_{t-\tau} \notin \widehat{Pa}(X^{j}_{t})$ then $X^{i}_{t-\tau} \notin Pa(X^{j}_{t})$. That is, $Pa(X^{j}_{t})\subseteq \widehat{Pa}(X^{j}_{t})$

Suppose $X^{i}_{t-\tau} \in \widehat{Pa}(X^{j}_{t})$ but $X^{i}_{t-\tau} \notin Pa(X^{j}_{t})$.
 By the contraposition of Faithfulness (\textbf{A1}), we know that $\nci{X^{i}_{t-\tau}}{X^{j}_{t}}{\widehat{Pa}(X^{j}_t) \setminus \{X^{i}_{t-\tau}\},\widehat{Pa}(X^{i}_{t-\tau})}$. Denote $W = \big\{\widehat{Pa}(X^{j}_t) \setminus \{Pa(X^{j}_t),X^{i}_{t-\tau}\}\big\}\cup \big\{\widehat{Pa}(X^{i}_{t-\tau})\setminus Pa(X^{i}_{t-\tau})\big\}$. Since $X^{i}_{t-\tau} \notin Pa(X^{j}_{t})$, based on Causal Markov Condition assumption (\textbf{A2}),
\begin{align*}
    &\ci{W \cup X^{i}_{t-\tau}}{X^{j}_{t}}{Pa(X^{j}_t)}\\
    &\xRightarrow{} \ci{W \cup X^{i}_{t-\tau}}{X^{j}_{t}}{Pa(X^{j}_t),Pa(X^{i}_{t-\tau})}\\
    &\xRightarrow{\text{Weak Union}} \ci{X^{i}_{t-\tau}}{X^{j}_{t}}{\{Pa(X^{j}_t),Pa(X^{i}_{t-\tau})\} \cup W}\\
    &\xRightarrow{} \ci{X^{i}_{t-\tau}}{X^{j}_{t}}{\widehat{Pa}(X^{j}_t) \setminus \{X^{i}_{t-\tau}\},\widehat{Pa}(X^{i}_{t-\tau})}
\end{align*}
This is contrary to the assumption so that there is no such $X^{i}_{t-\tau}$ satisfying $X^{i}_{t-\tau} \in \widehat{Pa}(X^{j}_{t})$ but $X^{i}_{t-\tau} \notin Pa(X^{j}_{t})$. In other words, if $X^{i}_{t-\tau} \in \widehat{Pa}(X^{j}_{t})$, then $X^{i}_{t-\tau} \in Pa(X^{j}_{t})$. That is, $\widehat{Pa}(X^{j}_{t})\subseteq Pa(X^{j}_{t})$. Combined with the previous conclusion that $Pa(X^{j}_{t})\subseteq \widehat{Pa}(X^{j}_{t})$, we have $\widehat{Pa}(X^{j}_{t})= Pa(X^{j}_{t})$.

\end{proof}





Based on Lemma~\ref{appendix_lem:conditional dis}, Lemma~\ref{appendix_lem:denser graph} and Lemma~\ref{lem:true parent}, we can identify the true $\omega_{j}$ for $\Xb^{j}$ through Lemma~\ref{appendix_lem:periodicity_opt}.
\begin{lemma}\label{appendix_lem:periodicity_opt}
Let $\omega_{j}$ denote the true periodicity for $\Xb^{j} \in V$ and $\widehat{\Pa}(X^{j}_{t\in \Pi^j_k})$ denote the estimated parent set for $X^{j}_{t}$ obtained from Algorithm~\ref{appendix_alg:pcmciomega_brief} where $t\in \Pi^j_k$. Define:
\begin{align} \label{appendix_omega_hat}
\widehat{\omega}_j = \arg\min_{\omega \in [\oub]}\max_{k\in[\omega]}|\widehat{\Pa}(X^{j}_{t\in \Pi^j_k})|
\end{align}
Under assumptions \textbf{A1-A7} and with an oracle (infinite sample limit), we have that $\hat{\omega}_{j} = \omega_{j},\;\forall j\in[n]\label{appendix_hat Omega}$ almost surely.
\end{lemma}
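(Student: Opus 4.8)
The plan is to prove the statement by contradiction, combining the three preceding lemmas with the Hard Mechanism Change assumption \textbf{A6}. Fix a component series $\Xb^{j}$ with true periodicity $\omega_{j}$. The decisive observation is that the behaviour of the time partition $\widehat{\Pi}^{j}$ built in line 5 of Algorithm~\ref{appendix_alg:pcmciomega_brief} is governed entirely by whether $\omega_{j}\mid\omega$: if $\omega_{j}\mid\omega$ then within any block $\widehat{\Pi}^{j}_{k}=\{c+m\omega\}_{m}$ the residues $c+m\omega \bmod \omega_{j}$ are constant, so the block is mechanism-homogeneous; if $\omega_{j}\nmid\omega$ then stepping by $\omega$ inside a block visits $\omega_{j}/\gcd(\omega,\omega_{j})\ge 2$ distinct residues mod $\omega_{j}$, so every block mixes at least two distinct causal mechanisms. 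I would therefore split $[\oub]$ into the multiples of $\omega_{j}$ and the non-multiples and evaluate the objective $\max_{k}|\widehat{\Pa}_{\omega}(X^{j}_{t\in\widehat{\Pi}^{j}_{k}})|$ on each class.

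First I would pin down the objective at the correct guesses. When $\omega_{j}\mid\omega$ every block is mechanism-homogeneous, so the CI tests in line 8 run on i.i.d.\ samples drawn from a single mechanism, and the argument of Lemma~\ref{lem:true parent} applies verbatim (its proof uses only homogeneity of the block, not the exact value $\omega=\omega_{j}$), yielding $\widehat{\Pa}_{\omega}(X^{j}_{t})=\Pa(X^{j}_{t})$ for all $t$. Consequently every multiple of $\omega_{j}$ attains the same objective value $M:=\max_{k\in[\omega_{j}]}|\Pa_{k}(X^{j}_{t})|$, the largest true parent-set size among the $\omega_{j}$ mechanisms.

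Next I would show that any non-multiple is strictly worse. For $\omega_{j}\nmid\omega$, fix the block $\widehat{\Pi}^{j}_{k}$ that contains the residue class of the maximizing mechanism $k^{\star}$; by the observation above this block also carries at least one further mechanism. On such a mixed block the conditional distribution estimated in the CI test is exactly the mixture analyzed in Lemma~\ref{appendix_lem:conditional dis}, applied now to the union of the parent sets of the mechanisms present; that lemma shows every member of the union stays dependent on $X^{j}_{t}$, and Lemma~\ref{appendix_lem:denser graph} then forces $\widehat{\Pa}_{\omega}(X^{j}_{t})$ to contain this union. Hence $\widehat{\Pa}_{\omega}(X^{j}_{t})\supseteq \Pa_{k^{\star}}(X^{j}_{t})$ together with at least one other mechanism's time-shifted parent set, which by \textbf{A6} is distinct from $\Pa_{k^{\star}}(X^{j}_{t})$, so the union strictly exceeds $\Pa_{k^{\star}}$ in size and the objective is $>M$. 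Combining the two classes, every non-multiple is strictly suboptimal while all multiples of $\omega_{j}$ tie at $M$; since the $\arg\min$ returns the smallest minimizer and $\omega_{j}$ is the least multiple of itself, I conclude $\widehat{\omega}_{j}=\omega_{j}$, and the statement follows for every $j\in[n]$.

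The main obstacle is precisely the strict inequality in the mixed-block case. Lemmas~\ref{appendix_lem:conditional dis}--\ref{appendix_lem:denser graph} and \textbf{A6} cleanly give that the recovered parent set on a mixed block is the union of the mixed mechanisms' shifted parent sets and that any two such sets are \emph{distinct}; but distinctness alone yields only $|\widehat{\Pa}_{\omega}(X^{j}_{t})|\ge M$, with equality still possible if the extra mechanism's parents happen to be nested inside $\Pa_{k^{\star}}(X^{j}_{t})$. Ruling out this degenerate coincidence is the delicate step: I would either select, among the mechanisms sharing $k^{\star}$'s block, one whose shifted parent set is not contained in $\Pa_{k^{\star}}(X^{j}_{t})$, or strengthen \textbf{A6} from ``pairwise distinct'' to ``pairwise non-nested'' parent configurations, and it is this non-containment argument on which the exact identification of $\omega_{j}$ ultimately rests.
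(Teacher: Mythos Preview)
Your approach mirrors the paper's: proof by contradiction, invoking Lemmas~\ref{appendix_lem:conditional dis}, \ref{appendix_lem:denser graph}, and \ref{lem:true parent} together with \textbf{A6} to argue that any wrong guess produces a strictly larger estimated parent set on some block than the true $\omega_{j}$ does. You are in fact more careful than the paper on one point: you explicitly partition $[\oub]$ into multiples and non-multiples of $\omega_{j}$ and handle the tie among multiples via the ``$\arg\min$ returns the smallest minimizer'' convention, whereas the paper's proof simply writes ``assume $\hat{\omega}_{j}\neq\omega_{j}$, then the time partition is incorrect'', which silently ignores proper multiples.

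The gap you flag at the end is real, and it is the same gap present in the paper's own proof. The paper asserts directly that \textbf{A6} yields $|\cup_{k=1}^{2}\Pa_{k}(X^{j}_{t})|>|\Pa(X^{j}_{t})|$, but as you observe, \textbf{A6} as stated gives only $\Pa_{k_1}(X^{j}_{t})\neq\Pa_{k_2}(X^{j}_{t})$, not non-containment; if $\Pa_{k'}(X^{j}_{t})\subsetneq\Pa_{k^{\star}}(X^{j}_{t})$ the union has size exactly $M$ and the strict inequality fails. Your diagnosis that the argument really needs ``pairwise non-nested'' rather than ``pairwise distinct'' is accurate, and the paper does not supply the missing step either. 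So your proposal is at least as complete as the published proof, and your identification of the delicate point is a genuine observation about the argument rather than a defect unique to your write-up.
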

\begin{proof}
Assume the contrary that $\hat{\omega}_{j} \neq \omega_{j}$, then in the Algorithm~\ref{appendix_alg:pcmciomega_brief}, we have an incorrect time partition $\widehat{\Pi}^{j}$. Hence, CI tests that are performed use samples with different causal mechanisms. 
$\hat{p}(X^{j}_{t}|\cup_{k=1}^{\omega_{j}}\Pa_{k}(X^{j}_{t}))$ in Eq.\eqref{estimator of mixture} is estimated from a mixture of two or more time partition subsets, say $\Pi^{j}_{1}$ and $\Pi^{j}_{2}$. We can apply Lemma~\ref{appendix_lem:conditional dis} where $\cup_{k=1}^{\omega_{j}}\Pa_{k}(X^{j}_{t})$ 
is replaced by $\cup_{k=1}^{2}\Pa_{k}(X^{j}_{t})$ and then in Lemma~\ref{appendix_lem:denser graph}, $\widehat{S\Pa}(X^{j}_t)$ is replaced by $\widehat{\Pa}_{\hat{\omega}_{j}}(X^j_t)$ and hence $\cup_{k=1}^{2}\Pa_{k}(X^{j}_{t})\subseteq \widehat{\Pa}_{\hat{\omega}_{j}}(X^j_t)$ where $\widehat{\Pa}_{\hat{\omega}_{j}}(X^j_t)$ is obtained from samples with $t$ from the mixture of two different partition subsets (line 8).
Hence, with $\hat{\omega}_{j}$, $|\widehat{\Pa}_{\hat{\omega}_{j}}(X^j_t)|\geq|\cup_{k=1}^{2}\Pa_{k}(X^{j}_{t})|$ using mixture samples $t\in \cup_{k=1}^{2} \Pi^{j}_{k}$. However, with true $\omega_{j}$,  we have $|\widehat{\Pa}_{\omega_{j}}(X^j_t)|=|\Pa(X^{j}_{t})|$ based on Lemma~\ref{lem:true parent}. With Assumption \textbf{A6} the Hard Mechanism Change, $|\cup_{k=1}^{2}\Pa_{k}(X^{j}_{t})| > |\Pa(X^{j}_{t})|$ so that $\omega_{j}$ always leads to a smaller size of estimated parent sets than $\hat{\omega}_{j}$, contrary to the definition of $\hat{\omega}_{j}$. Hence, $\hat{\omega}_{j}=\omega_{j}$. 
\end{proof}
With those lemmas, we can prove Theorem 1.

\begin{proof}
 Assuming that a correct $\omega_{j}$ has already been obtained, from Lemma~\ref{lem:true parent} we have
 \begin{align*}
 \widehat{Pa}(X^{j}_{t})=Pa(X^{j}_{t}),\;\forall t\geq 2\tau_{\text{ub}}, j\in[n]
 \end{align*}
 From Lemma~\ref{appendix_lem:periodicity_opt}, we know that a correct $\omega_{j}$ must be obtained with consistent CI tests, that is, $\hat{\omega}_{j} = \omega_{j}, \forall j\in[n]$. Therefore from Algorithm~\ref{appendix_alg:pcmciomega_brief}, we have
 \begin{align*}
     \widehat{Pa}(X^{j}_{t})=Pa(X^{j}_{t}),\;\forall t\geq 2\tau_{\text{ub}}, j\in[n]
 \end{align*}
If the causal mechanism is fixed across time, i.e., $\omega_{j}=1, j \in[n]$, the proof of PCMCI \cite{runge2019detecting} showed that for all $\Xb^{j} \in V$,
\begin{align*}
X^{i}_{t-\tau} \rightarrow X^{j}_{t} \notin \mathcal{G}\xRightarrow{} X^{i}_{t-\tau} \rightarrow X^{j}_{t} \notin \widehat{\mathcal{G}}\\
X^{i}_{t-\tau} \rightarrow X^{j}_{t} \in \mathcal{G}\xRightarrow{} X^{i}_{t-\tau} \rightarrow X^{j}_{t} \in \widehat{\mathcal{G}}
\end{align*}
Therefore $\widehat{\mathcal{G}}=\mathcal{G}$.

If $\exists \omega_{j}>1$, we can simply separate the whole graph $\mathcal{G}$ into sub graphs $\{\mathcal{G}^{\omega_{j}}_{k}\}_{k\in [\omega_{j}]}$ consisting of only target variable $X^{j}_{t}$ with corresponding $t \in \{\Pi^{j}_{k}\}_{k\in[\omega_{j}]}$ and parent variables $X^{i}_{t'} \in \Pa(X^{j}_t)$. Focusing only on one time partition subset $\Pi^{j}_{k},k\in[\omega_{j}]$, we have
\begin{align}
\widehat{\mathcal{G}}^{\omega_{j}}_{k}=\mathcal{G}^{\omega_{j}}_{k} \label{sub-graph}
\end{align}
for any $k\in [\omega_{j}]$ and $j \in [n]$ based on the proof of Proposition 1 in the supplementary materials of \cite{runge2019detecting}. \\
Each sub-graph $\mathcal{G}^{\omega_{j}}_{k}$ includes only variable $X^{j}_{t}$, the edges entering $X^{j}_{t}$ for time points $t \in \Pi^{j}_{k}$ and the corresponding parent variables $X^{i}_{t'}\in Pa(X^{j}_{t})$. 
Given $\Pi^{j}=\underset{k\in [\omega_{j}]}{\cup}\Pi^{j}_{k}$ and $V=\underset{j\in [n]}{\cup}\mathbf{X}^{j}$, we have:
\begin{align}
    &\widehat{\mathcal{G}}=\underset{j\in[n],\;k\in [\omega_{j}]}{\cup}\widehat{\mathcal{G}}^{\omega_{j}}_{k}\\
    &\mathcal{G}=\underset{j\in[n],\;k\in [\omega_{j}]}{\cup}\mathcal{G}^{\omega_{j}}_{k}
\end{align}
On the basis of Eq.\eqref{sub-graph}, we finally have:
\begin{align*}
    \widehat{\mathcal{G}}=\mathcal{G}
\end{align*}
\end{proof}
\section{Turning Points}
\begin{figure*}[t!]
        \centering
        \includegraphics[width=0.6\linewidth]{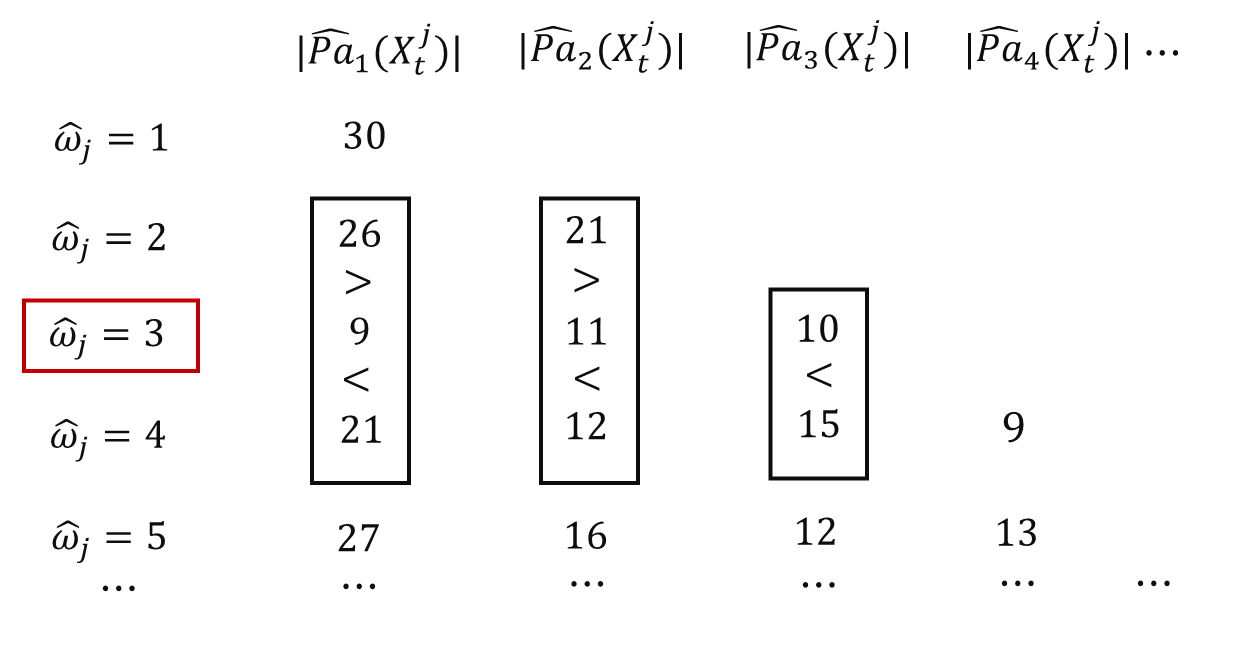}
        \caption{In the above illustration of the "turning point," the sizes of parent sets for different estimates $\hat{\omega}_{j}$ are depicted as $|\widehat{\Pa}_{k}(X^{j}_{t})|,k\in[\hat{\omega}_{j}]$. It is worth noting that $\widehat{\Pa}_{k}(X^{j}_{t})$ represents either the true parent set or the illusory parent set of $X^{j}_{t}$. In this context, we are interested in the sizes of these parent sets. The first occurrence of the "turning point" happens at $\hat{\omega}_{j} = 3$ since the sizes of parent sets obtained when $\hat{\omega}_{j} = 2$ and $\hat{\omega}_{j} = 4$ are larger than the corresponding size when $\hat{\omega}_{j} = 3$, respectively. The term "turning point" denotes that as $\hat{\omega}_{j}$ increases, the size of the parent set initially decreases and then starts increasing once the local minimum is reached. The corresponding relations exist because as long as $\hat{\omega}_{j}$ is not a multiple of the true $\omega_{j}$, the estimated time partition subsets with $\hat{\omega_{j}}$ must be a mixture of some correct time partition subsets with $\omega_{j}$. Therefore, it is reasonable to use this trick rather than looking at the maximum size of the parent sets $\widehat{\Pa}_{k}(X^{j}_{t}),k\in[\hat{\omega}_{j}]$ (line 17 in Algorithm~\ref{appendix_alg:pcmciomega_brief}).} \
        \label{Turning point}
        \label{Turning points}
    \end{figure*} 
Given infinite samples, our estimate $\hat{\omega}_{j}$ (line 17 in Algorithm~\ref{appendix_alg:pcmciomega_brief}) is the exact value $\omega_j$ (see Lemma \ref{appendix_lem:periodicity_opt}). However, for finite samples, estimating $\omega_{j}$ by the equation in line 17 in Algorithm~\ref{appendix_alg:pcmciomega_brief} does not yield good performance when $T$ is small. While searching, larger guesses $\omega$ lead to finer time partitions in $\Pi_j$, resulting in smaller sizes for $\Pi^k_j$ (see Line 5 in Algorithm~\ref{appendix_alg:pcmciomega_brief}). Due to the power limit of CI tests on a smaller sample given by $\Pi^k_j$, the number of false negative edges increases. 
In order to solve this issue,
we introduce \textit{turning points}. A \textit{turning point} is a guess $\hat{\omega}$ satisfying:
\begin{align*}
&\max_{t}|\widehat \Pa_{\hat{\omega}}(X^j_t)| < \min \{\max_{t}|\widehat \Pa_{\hat{\omega}-1}(X^j_t)|, \max_{t}|\widehat \Pa_{\hat{\omega}+1}(X^j_t)| \}
\end{align*}
where $|\widehat \Pa_{\hat{\omega}}(X^j_t)|$ is the estimated parent set for $X^j_t$ with periodicity guess $\hat{\omega}$. See line 19 in Algorithm \ref{appendix_alg:pcmciomega_brief}.

We illustrate it with a special example in Fig.\ref{Turning point}. If there are several turning points, then $\hat{\omega}_{j}$ is the first turning point. If there is no turning point, then we obtain $\hat{\omega}_{j}$ using Line 17 of Algorithm~\ref{appendix_alg:pcmciomega_brief}.

The concept of the turning point is not based on any formal theorem but rather on experimental observations. In experiments, the turning point often corresponds to a multiple of the true periodicity when $T$ is not large. This occurs due to the limitations of CI tests on finite samples. In such cases, the causal graph can still be correct because the estimated time partition remains accurate. In these experiments, the accuracy rate is calculated by considering $\{N\omega_{j}\}_{N \in \lfloor\frac{\omega_\text{ub}}{\omega_{j}}\rfloor}$ as correct estimations.
\section{Computational Complexity}
Executing the PCMCI algorithm on the entire time series constitutes the initial phase of the proposed approach (Algorithm \ref{appendix_alg:pcmciomega_brief} line 2). The algorithm's worst-case overall computational complexity is $O(n^3\tub^2)+O(n^2\tub)$, discussed in \cite{runge2019detecting}. Here, the symbol $n$ denotes $n$-variate time series and $\tub$ represents the upper boundary for time lags.

The subsequent computational load stemming from the remaining components of our algorithm follows a complexity of $O(\omega^2_{\text{ub}}n^2\tub)$
. This encompasses the $O(n^2\tub)$ complexity associated with conducting Momentary Conditional Independence (MCI) tests on all $n$ univariate time series. The parameter $\omega^2_{\text{ub}}$ here arises due to the search procedure involving $\omega$, iterating through values from 1 to $\omega_{\text{ub}}$ for all $n$ univariate time series.

The runtime of the computation is further influenced by the scaling behavior of the CI test concerning the dimensionality of the conditioning set and the temporal series length $T$. For further details, see section 5.1 in \cite{runge2019detecting}.

\section{Experiments}
All experiments, including those detailed in the main paper, are conducted on a single node with one core, utilizing 512 GB of memory in the Gilbreth cluster at Purdue University.

Here, we describe how to calculate the metrics ($F_{1}$ score, Adjacency Precision, and Adjacency Recall) in our setting. In stationary time series, the output of the causal discovery algorithm is typically an adjacency matrix with dimensions $[n,n,\tau_{\max}+1]$. Within the three-dimensional binary array, the value 1 signifies an edge pointing from one variable to another with a specific time lag, while 0 indicates the absence of an edge. For instance, if element $[i,j,k]$ in the matrix is 1, then there is an edge pointing from $X^{i}_{t-k}$ to $X^{j}_{t}$. In semi-stationary time series, due to the presence of multiple causal mechanisms, the binary edge matrix is a four-dimensional array with dimensions $[n,\Omega, n, \tau_{\max}+1]$, where $\Omega$ is defined as Eq.(7) in the main paper. This expanded binary matrix is constructed based on the edge matrix of each variable $X^{j}_{t}, j\in[n]$, through repetition. For instance, if $\Omega=2\omega{j}$, setting the third dimension of the large binary matrix to $j$ should yield $\omega_{j}$ potentially different parent sets (including illusory and true parent sets), each appearing twice.

We should have two such binary arrays, one representing the ground truth with dimensions $[n,\Omega, n, \tau_{\max}+1]$ and one obtained from the algorithm with dimensions $[n,\widehat{\Omega}, n, \tau_{\max}+1]$. If the estimator $\widehat{\Omega}$ is incorrect, those two binary arrays will have different sizes, so we can not directly compare them. To solve this problem, we do the same operation and calculate the least common multiple of $\Omega$ and $\widehat{\Omega}$. Denoting this least common multiple as LCM($\Omega$,$\widehat{\Omega}$), we create two four-dimensional binary arrays with dimensions of $[n,\text{LCM}(\Omega,\widehat{\Omega}),n,\tau_{\max}+1]$ based on the true edge array and the estimated edge array, respectively, through repetition. The metrics are then computed by comparing the values in these two arrays.

\subsection{More Discussion regarding the Case Study}
As stated in the main paper, we express our inability to comment on the significance of the case study results. We open a door for the related experts; if assumptions \textbf{A1-A7} are satisfied, the stationary assumption may not hold in this real-world dataset, and such periodicity exists. However, if the finding is not correct from an expert's viewpoint, the following assumptions may be violated:
\begin{itemize}
    \item Assumption \textbf{A4} No Contemporaneous Causal Effects: There is a possibility of potential causal effects from $X^{\text{ta}}_{t}$
    to $X^{\text{cp}}_{t}$
    that the algorithm is unable to capture.
 
    \item Assumption \textbf{A6} Hard Mechanism Change combined with limited power of CI tests: If there is a soft mechanism change in the variables, the reliability of the CI test of two variables given their parents will be influenced by the skewed distribution of the parent variables. This effect will be exacerbated by the fact that the sample size will be shrunk by $\hat{\omega}$.
 \end{itemize}
We provide a sound and robust algorithm for experts in various fields who are interested in validating the presence of periodicity within the causal mechanisms specific to their domain.
\subsection{Experiments on Continuous-valued Time Series with Exponential Noise}
Considering that VARLiNGAM is a temporal extension of LiNGAM and LiNGAM is an algorithm designed for non-Gaussian data, following the work in \cite{pamfil2020dynotears}, we also construct experiments on continuous-valued time series data with Exponential noise. Shown as Fig.\ref{appendix_fig2:a}, the performance of $\pcmciomega$, PCMCI 
 and VARLiNGAM, are quite similar with their performance on Gaussian noise. The recall rate of DYNOTEARS, however, gets worse with Exponential noise. 
\subsection{Experiments on Binary Time Series}
Similar to the process of generating continuous-valued time series, the generation of binary time series also involves three steps. However, the main difference lies in the last two steps. In the third step, we simulate the conditional distributions of each child variable based on all possible combinations of parent variable values. Subsequently, we randomly generate the value of the child variable by considering the corresponding conditional distribution given its parent sets.
\begin{figure}[h!]
\centering     
\subfigure[Exponential noise]{\label{appendix_fig2:a}\includegraphics[width=65mm]{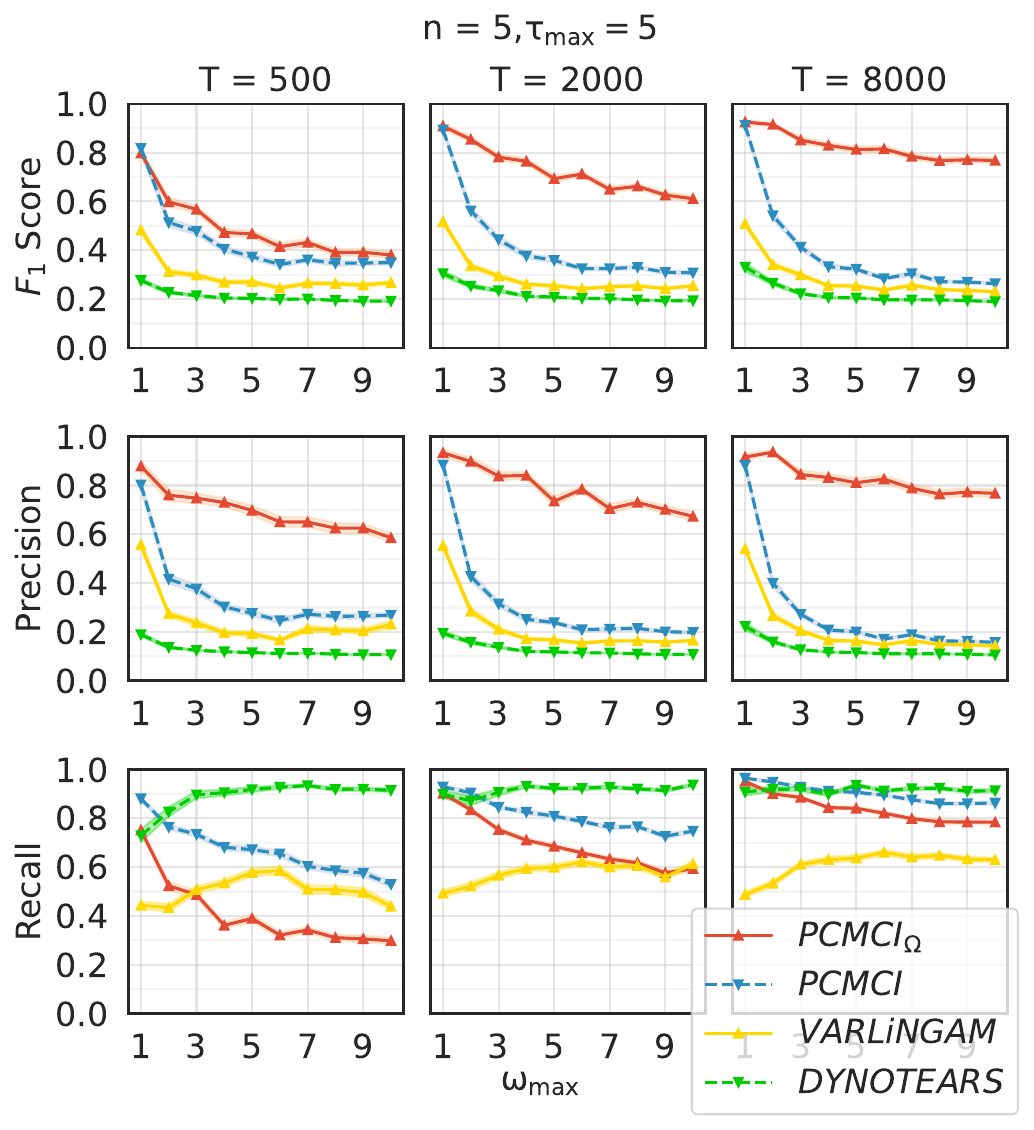}}
\subfigure[Binary time series]
{\label{appendix_fig2:b}\includegraphics[width=65mm]{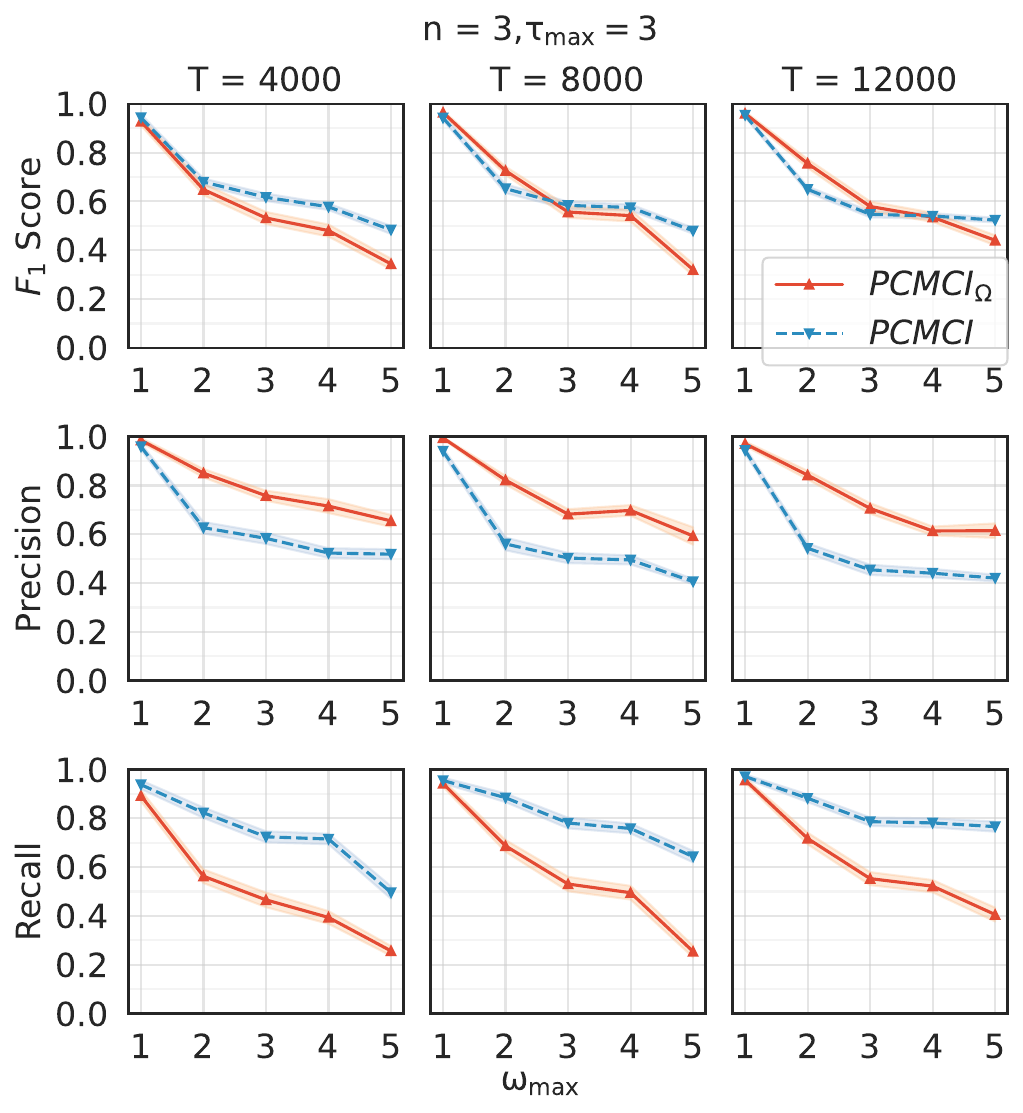}}
\caption{a)\text{$F_{1}$} Score, Adjacency Precision, and Adjacency Recall when $\omega_{\max}$ varies for experiments on continuous-valued time series with Exponential noise, length $T=\{500, 2000,8000\}$, $\tau_{\max}=5$ and $n=5$. b) \text{$F_{1}$} Score, Adjacency Precision, and Adjacency Recall when $\omega_{\max}$ varies for experiments on binary time series with length $T=\{4000, 8000,12000\}$, $\tau_{\max}=3$ and $n=3$. }
\end{figure}

For discrete-valued time series, a longer time length is required. To evaluate performance, we conduct a series of experiments following the same methodology as described in section 4.1. Fig.\ref{appendix_fig2:b} illustrates the variation in comprehensive performance with respect to $\omega_{\max}$. PCMCI$_{\Omega}$ demonstrates a similar performance to PCMCI in terms of the $F{1}$ score, indicating a well-balanced trade-off between precision and recall. This outcome is expected since discrete-valued time series demand larger sample sizes, and the increases in $\omega_{\max}$ negatively impact the power of MCI tests. This observation is further supported by Fig.\ref{appendix_fig3:a}, where an increase in time length $T$ from 4000 to 12000 does not lead to a significant improvement in the accuracy rate of $\hat{\omega}$, while the accuracy decreases rapidly with higher values of $\omega_{\max}$. 

Comparing these results to the experiments conducted on continuous-valued time series, it becomes evident that the demand for efficient samples is even more substantial for binary time series, and the influence of increasing $\omega_{\max}$ on performance becomes more pronounced. 

Fig.\ref{appendix_fig3:b} shows how the performance of the algorithm varies across $\tau_{\max}$ and the same trade-off between recall and precision has been shown.
\begin{figure}[h!]
\centering     
\subfigure[Accuracy of $\hat{\omega}$]{\label{appendix_fig3:a}\includegraphics[width=65mm]{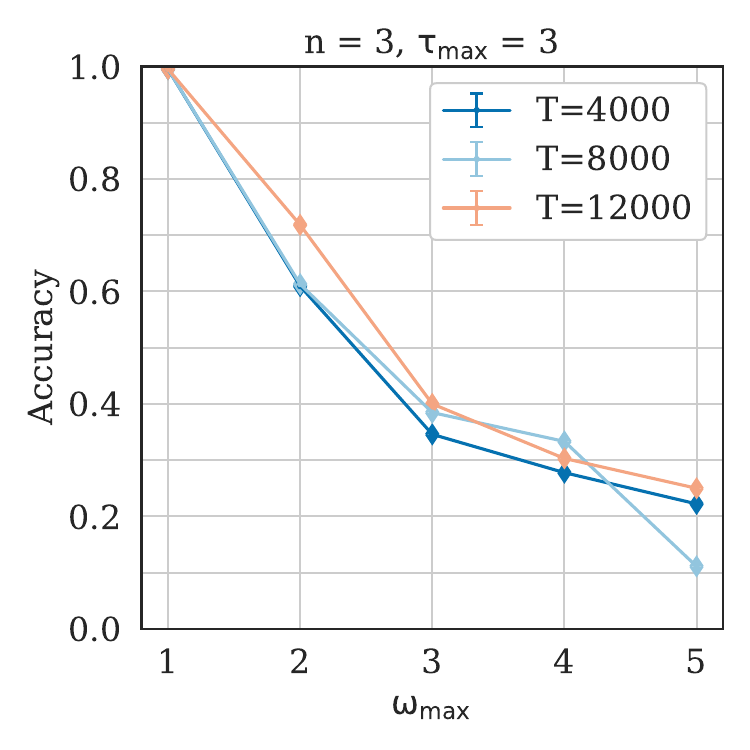}}
\subfigure[Performance when $\tau_{\max}$ varies]{\label{appendix_fig3:b}\includegraphics[width=65mm]{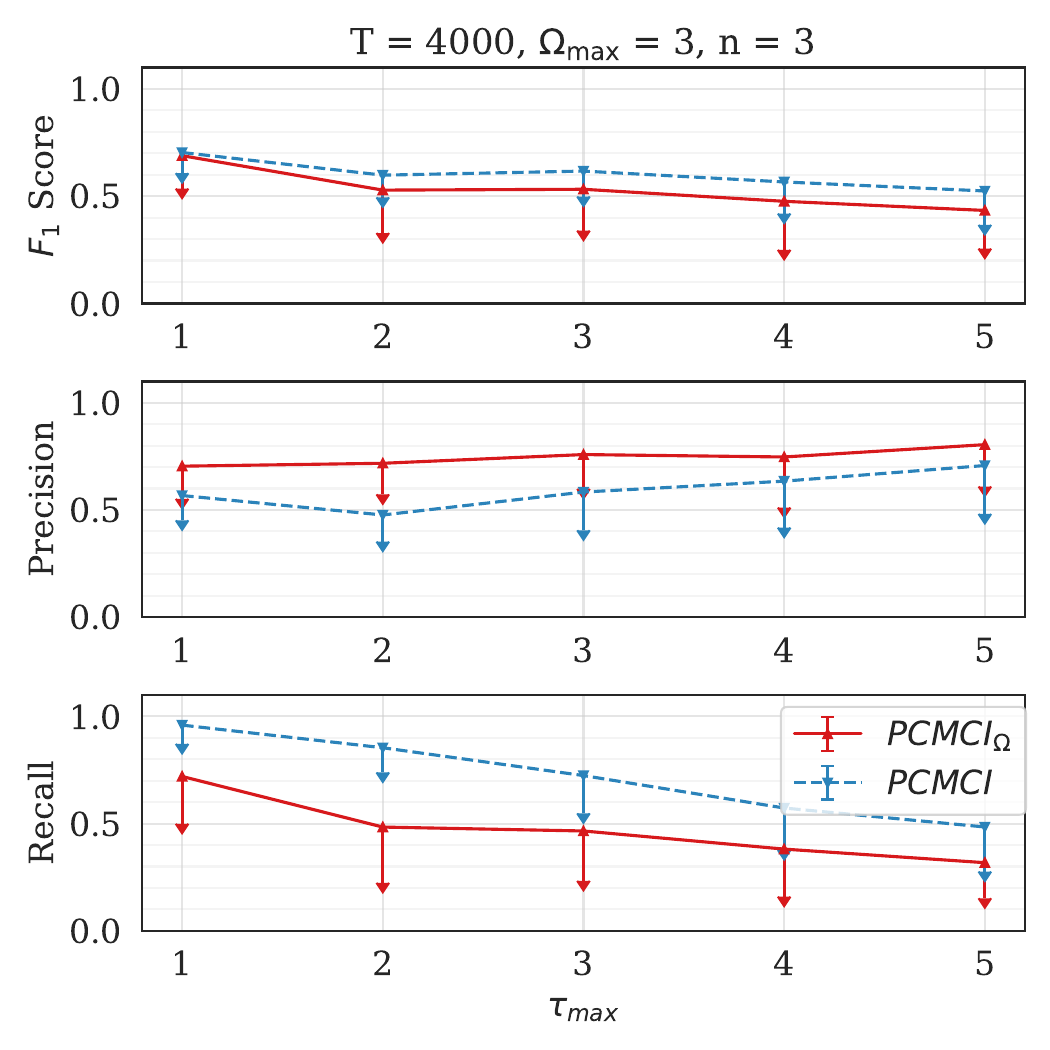}}
\caption{$\pcmciomega$ is tested on 3-variate binary time series. Every marker corresponds to the average accuracy rate or average running time over 100 trials. a) The accuracy rate of $\hat{\omega}$ for different time series lengths and different $\omega_{\max}$. b) \text{$F_{1}$} Score, Adjacency Precision, and Adjacency Recall when $\tau_{\max}$ varies for experiments with time series length $T=4000$, $\omega_{\max}=3$ and $n=3$.}
\end{figure}
\subsection{More experiments on Continuous-valued time series}
In this section, we conduct more experiments with continuous-valued time series with Gaussian noises.

In Fig.\ref{fig6:a}, we test our algorithm with and without utilizing the \textit{turning point} rule. See lines 13-14 in Algorithm \ref{appendix_alg:pcmciomega_brief} and section E for more information about the \textit{turning point} rule. Let PCMCI$_\Omega$ TP denote the version of PCMCI$_\Omega$ that the \textit{turning point} rule is utilized in choosing $\omega$. PCMCI$_\Omega$ non-TP means that the \textit{turning point} rule is not applied and $\omega$ is chosen directly according to Lemma \ref{appendix_lem:periodicity_opt}.

Fig.\ref{fig6:a} shows that the algorithm PCMCI$_\Omega$ non-TP and PCMCI$_\Omega$ TP have similar performance with various $T$ and $\omega_{\max}$. With $T=500$, PCMCI$_\Omega$ non-TP yields slightly larger standard errors for those metrics, compared to PCMCI$_\Omega$ TP. As time length $T$ increases, the performance of the algorithm PCMCI$_\Omega$ non-TP has consistently increased and is even slightly better than PCMCI$_\Omega$ TP. The consistent performance of PCMCI$_\Omega$ under different chosen rules of $\omega$ supports our theoretical result; that is, the correct periodicity leads to the most sparse causal graph.

In Fig.\ref{fig6:b}, non-stationary time series are produced instead of semi-stationary ones. Consequently, the causal mechanisms for each univariate time series no longer appear sequentially and periodically. The proposed method performs slightly better in terms of F1 score and precision. However, the recall rate is the worst compared to other baselines.

In Fig.\ref{fig6:c}, we conduct experiments in the nonlinear setting. The proposed algorithms PCMCI$_\Omega$ TP and PCMCI$_\Omega$ non-TP perform the best.

In Fig.\ref{fig6:d}, with $\omega_{\text{ub}}<\omega_{\max}$, the performance of the proposed algorithm is significantly worse compared to the scenario where $\omega_{\text{ub}}>\omega_{\max}$. However, with $\omega_{\text{ub}}<\omega_{\max}$, the proposed algorithm can still detect a less dense graph in comparison to other baselines. Based on these outcomes, it is essential to maintain a slightly higher $\omega_{\text{ub}}$ without significantly impacting the number of efficient samples utilized in each CI test.
  \begin{figure}
    \centering     
    \subfigure[Performance with and without turning point]{\label{fig6:a}\includegraphics[width=60mm]{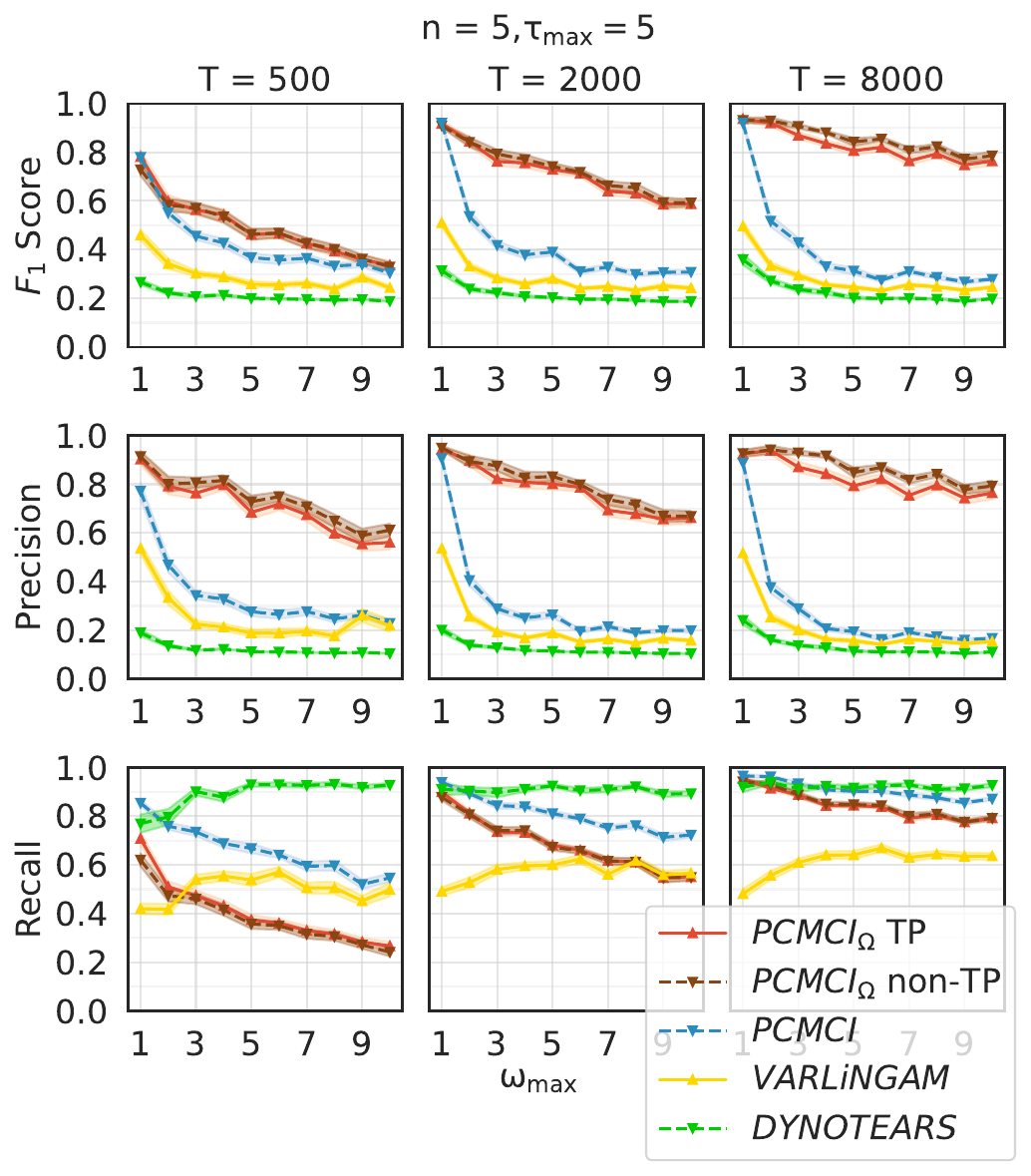}}
    \subfigure[Performance in non-stationary setting without periodicity]{\label{fig6:b}\includegraphics[width=60mm]{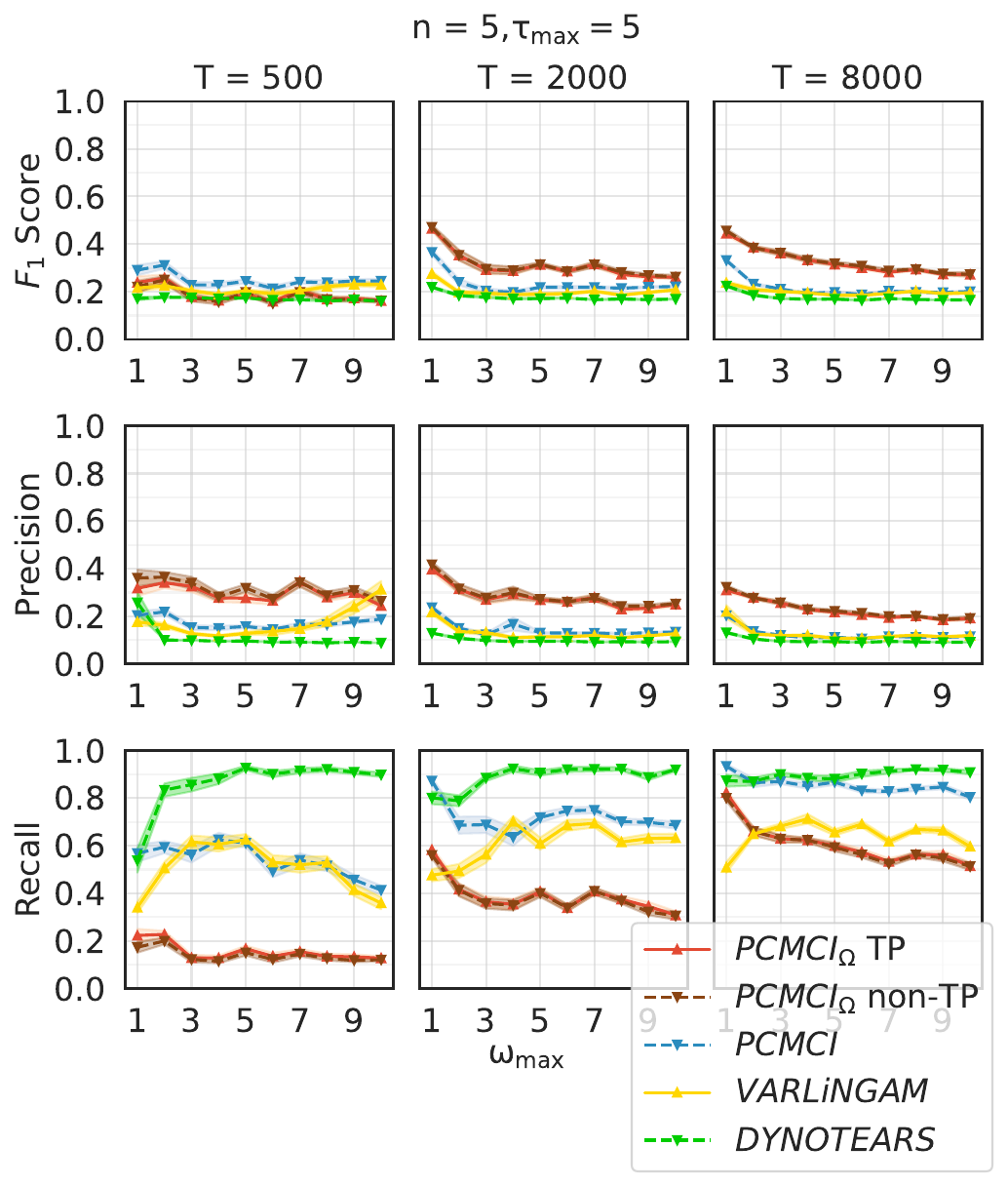}}
     \subfigure[Performance in non-linear SCM]{\label{fig6:c}\includegraphics[width=60mm]{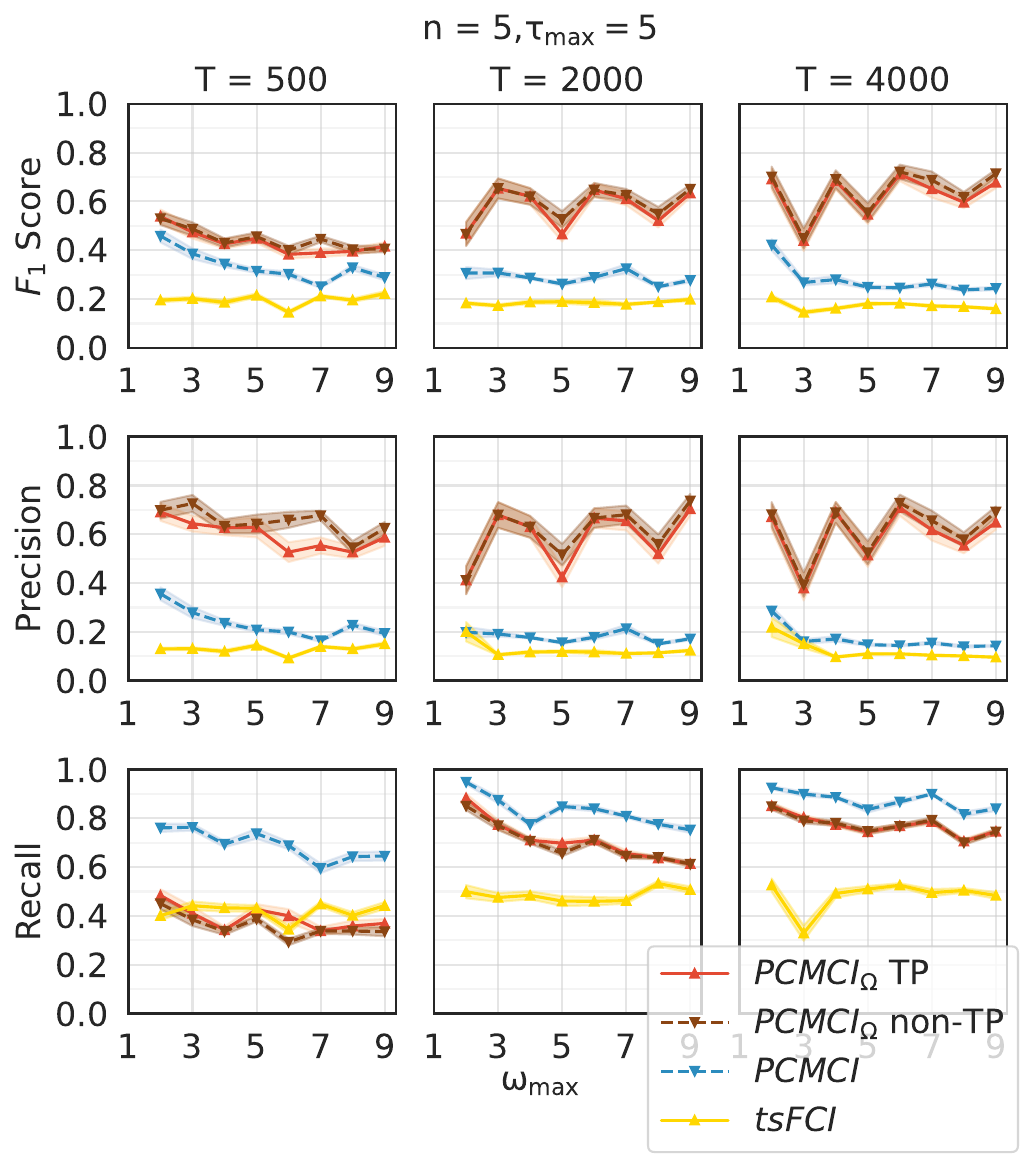}}
    \subfigure[Performance when $\omega_{\text{ub}}>\omega_{max}$ and $\omega_{\text{ub}}<\omega_{max}$]{\label{fig6:d}\includegraphics[width=60mm]{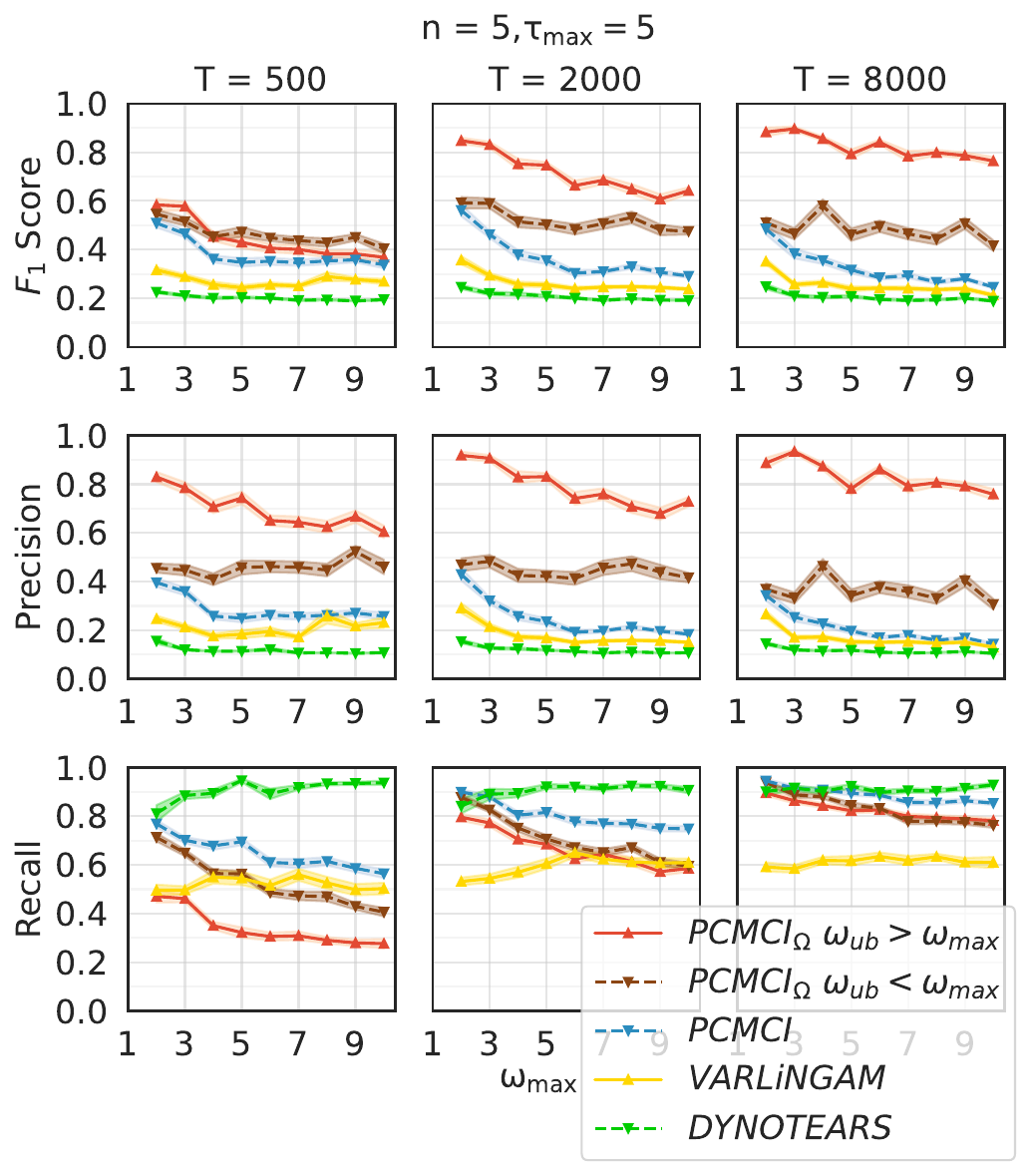}}
            \caption{Multiple algorithms are tested on 5-variate time series with different time lengths $T$. Every line corresponds to a different algorithm. Every marker corresponds to the average performance over 50 trials. In (a), the consistent performance of PCMCI under different chosen rules of $\omega$ supports our theoretical result; that is, the correct periodicity $\omega$ leads to the most sparse causal graph. In (b), data sets are in a non-stationary setting without periodicity. 
            In (c), the structural causal model (SCM) is non-linear. In (d), algorithm PCMCI$_\Omega$ are tested under conditions that $\omega_{\text{ub}}>\omega_{\max}$ and $\omega_{\text{ub}}<\omega_{\max}$ respectively.
            }
    \end{figure}

\end{document}